\definecolor{mydarkblue}{rgb}{0,0.08,0.45}
\title{Painless Stochastic Gradient:\\ Interpolation, Line-Search, and Convergence Rates}
\author{
  Sharan Vaswani \\
  Mila, Universit\'e de Montr\'eal \\
  \And
  Aaron Mishkin \\
  University of British Columbia \\
  \AND
  Issam Laradji \\
  University of British Columbia\\ 
  Element AI \\
  \And
  Mark Schmidt \\
  University of British Columbia, 1QBit \\
  CCAI Affiliate Chair (Amii)\\
  \And
  Gauthier Gidel \\
  Mila, Université de Montr\'eal \\
  Element AI \\
  \And
  Simon Lacoste-Julien$^\dagger$ \\
  Mila, Université de Montr\'eal \\
}
\newcommand{\x}{w}
\newcommand{\xk}{w_{k}}
\newcommand{\mxt}{\bar{w}_{T}}
\newcommand{\xkk}{w_{k+1}}
\newcommand{\xopt}{w^{*}}
\newcommand{\xkh}{w^\prime_{k}}
\newcommand{\grad}[1]{\nabla f(#1)}
\newcommand{\norm}[1]{\left\|#1\right\|}
\newcommand{\normsq}[1]{\left\|#1\right\|^{2}}
\newcommand{\E}{\mathbb{E}}
\newcommand{\fk}{f_{ik}}
\newcommand{\fj}{f_{i}}
\newcommand{\etak}{\eta_{k}}
\newcommand{\muk}{\mu_{ik}}
\newcommand{\Lk}{L_{ik}}
\newcommand{\gradk}[1]{\nabla f_{ik}(#1)}
\newcommand{\gradi}[1]{\nabla f_{i}(#1)}
\newcommand{\opk}[1]{F_{ik}(#1)}
\DeclareMathOperator*{\argmin}{arg\,min}
\newtheorem{theorem}{Theorem}
\newtheorem{lemma}{Lemma}
\newcommand{\linenumber}{\addtocounter{equation}{1}\tag{\theequation}}
\newcommand{\rbr}[1]{\left(#1\right)}
\newcommand{\sbr}[1]{\left[#1\right]}
\newcommand{\cbr}[1]{\left\{#1\right\}}
\newcommand{\abr}[1]{\left\langle#1\right\rangle}
\newcommand{\Lmax}{L_{\text{max}}}
\newcommand{\etamin}{\eta_{\text{min}}}
\newcommand{\etamax}{\eta_{\text{max}}}
\newcommand{\R}{\mathbb{R}}
\titlespacing{\section}{10pt}{*0}{*0}
\titlespacing{\subsection}{12pt}{*0}{*0}
\begin{document}
\maketitle

\setstretch{1}

\begin{abstract}
Recent works have shown that stochastic gradient descent (SGD) achieves the fast convergence rates of full-batch gradient descent for over-parameterized models satisfying certain interpolation conditions. However, the step-size used in these works depends on unknown quantities and SGD's practical performance heavily relies on the choice of this step-size. We propose to use line-search techniques to automatically set the step-size when training models that can interpolate the data. In the interpolation setting, we prove that SGD with a stochastic variant of the classic Armijo line-search attains the deterministic convergence rates for both convex and strongly-convex functions. Under additional assumptions, SGD with Armijo line-search is shown to achieve fast convergence for non-convex functions. Furthermore, we show that stochastic extra-gradient with a Lipschitz line-search attains linear convergence for an important class of non-convex functions and saddle-point problems satisfying interpolation. To improve the proposed methods' practical performance, we give heuristics to use larger step-sizes and acceleration. We compare the proposed algorithms against numerous optimization methods on standard classification tasks using both kernel methods and deep networks. The proposed methods result in competitive performance across all models and datasets, while being robust to the precise choices of hyper-parameters. For multi-class classification using deep networks, SGD with Armijo line-search results in both faster convergence and better generalization. 
\end{abstract}
\section{Introduction}
\label{sec:introduction}
Stochastic gradient descent (SGD) and its variants~\cite{duchi2011adaptive,zeiler2012adadelta,kingma2014adam,tieleman2012lecture,schmidt2017minimizing,johnson2013accelerating,defazio2014saga} are the preferred optimization methods in modern machine learning. They only require the gradient for one training example (or a small ``mini-batch'' of examples) in each iteration and thus can be used with large datasets. These first-order methods have been particularly successful for training highly-expressive, over-parameterized models such as non-parametric regression~\cite{liang2018just,belkin2019does} and deep neural networks~\cite{bengio2012practical,zhang2016understanding}. However, the practical efficiency of stochastic gradient methods is adversely affected by two challenges: (i) their performance heavily relies on the choice of the step-size (``learning rate'')~\cite{bengio2012practical, schaul2013no} and (ii) their slow convergence compared to methods that compute the full gradient (over all training examples) in each iteration~\cite{nesterov2013introductory}. 

Variance-reduction (VR) methods~\cite{schmidt2017minimizing,johnson2013accelerating,defazio2014saga} are relatively new variants of SGD that improve its slow convergence rate. These methods exploit the finite-sum structure of typical loss functions arising in machine learning, achieving both the low iteration cost of SGD and the fast convergence rate of deterministic methods that compute the full-gradient in each iteration. Moreover, VR makes setting the learning rate easier and there has been work exploring the use of line-search techniques for automatically setting the step-size for these methods~\cite{schmidt2017minimizing,schmidt2015non,tan2016barzilai,shang2018guaranteed}. These methods have resulted in impressive performance on a variety of problems. However, the improved performance comes at the cost of additional memory~\cite{schmidt2017minimizing} or computational~\cite{johnson2013accelerating, defazio2014saga} overheads, making these methods less appealing when training high-dimensional models on large datasets. Moreover, in practice VR methods do not tend to converge faster than SGD on over-parameterized models~\cite{defazio2018ineffectiveness}. 

Indeed, recent works~\cite{vaswani2019fast,ma2018power,bassily2018exponential,liu2018mass, cevher2018linear, jain2017accelerating, schmidt2013fast} have shown that when training over-parameterized models, classic SGD with a constant step-size and \emph{without VR} can achieve the convergence rates of full-batch gradient descent. These works assume that the model is expressive enough to \emph{interpolate} the data. The interpolation condition is satisfied for models such as non-parametric regression~\cite{liang2018just,belkin2019does}, over-parametrized deep neural networks~\cite{zhang2016understanding}, boosting~\cite{schapire1998boosting}, and for linear classifiers on separable data. However, the good performance of SGD in this setting relies on using the proposed constant step-size, which depends on problem-specific quantities not known in practice. On the other hand, there has been a long line of research on techniques to automatically set the step-size for classic SGD. These techniques include using meta-learning procedures to modify the main stochastic algorithm~\cite{baydin2017online,yu2006fast,schraudolph1999local,almeida1998parameter,plagianakos2001learning,yu2006fast,shao2000rates}, heuristics to adjust the learning rate on the fly~\cite{kushner1995stochastic, deylon1993accelerated, schaul2013no, schoenauer2017stochastic}, and recent adaptive methods inspired by online learning~\cite{duchi2011adaptive,zeiler2012adadelta,kingma2014adam,reddi2019convergence, orabona2017training,rolinek2018l4, luo2019adaptive}. However, none of these techniques have been proved to achieve the fast convergence rates that we now know are possible in the over-parametrized setting.

In this work, we use classical line-search methods~\cite{nocedal2006numerical} to automatically set the step-size for SGD when training over-parametrized models. Line-search is a standard technique to adaptively set the step-size for deterministic methods that evaluate the full gradient in each iteration. These methods make use of additional function/gradient evaluations to characterize the function around the current iterate and adjust the magnitude of the descent step. The additional noise in SGD complicates the use of line-searches in the general stochastic setting and there have only been a few attempts to address this. Mahsereci et al.~\cite{mahsereci2017probabilistic} define a Gaussian process model over probabilistic Wolfe conditions and use it to derive a termination criterion for the line-search. The convergence rate of this procedure is not known, and experimentally we found that our proposed line-search technique is simpler to implement and more robust. Other authors~\cite{friedlander2012hybrid,byrd2012sample, de2016big, paquette2018stochastic,krejic2013line} use a line-search termination criteria that requires function/gradient evaluations averaged over multiple samples. However, in order to achieve convergence, the number of samples required per iteration (the ``batch-size'') increases progressively, losing the low per iteration cost of SGD. Other work~\cite{blanchet2019convergence,gratton2017complexity} exploring trust-region methods assume that the model is sufficiently accurate, which is not guaranteed in the general stochastic setting. In contrast to these works, our line-search procedure does not consider the general stochastic setting and is designed for models that satisfy interpolation; it achieves fast rates in the over-parameterized regime without the need to manually choose a step-size or increase the batch size.

We make the following contributions: in Section~\ref{sec:sgd-c} we prove that, under interpolation, SGD with a stochastic variant of the Armijo line-search attains the convergence rates of full-batch gradient descent in both the convex and strongly-convex settings. We achieve these rates under weaker assumptions than the prior work~\cite{vaswani2019fast} and \emph{without} the explicit knowledge of problem specific constants. We then consider minimizing non-convex functions satisfying interpolation~\cite{bassily2018exponential, vaswani2019fast}. Previous work~\cite{bassily2018exponential} proves that constant step-size SGD achieves a linear rate for non-convex functions satisfying the PL inequality~\cite{polyak1963gradient, karimi2016linear}. SGD is further known to achieve deterministic rates for general non-convex functions under a stronger assumption on the growth of the stochastic gradients~\cite{schmidt2013fast, vaswani2019fast}. Under this assumption and an upper bound (that requires knowledge of the ``Lipschitz'' constant) on the maximum step size, we prove that SGD with Armijo line-search can achieve the deterministic rate for general non-convex functions (Section~\ref{sec:sgd-nc}). Note that these are the first convergence rates for SGD with line-search in the interpolation setting for both convex and non-convex functions. 

Moving beyond SGD, in Section~\ref{sec:seg} we consider the stochastic extra-gradient (SEG) method~\cite{korpelevich1976extragradient, nemirovski2004prox, juditsky2011solving,iusem2017extragradient, gidel2018variational} used to solve general variational inequalities~\cite{harker1990finite}. These problems encompass both convex minimization and saddle point problems arising in robust supervised learning~\cite{ben2009robust,wen2014robust} and learning with non-separable losses or regularizers~\cite{joachims2005support,bach2012optimization}. In the interpolation setting, we show that a variant of SEG~\cite{gidel2018variational} with a ``Lipschitz'' line-search convergences linearly when minimizing an important class of non-convex functions~\cite{li2017convergence, kleinberg2018alternative, soltanolkotabi2018theoretical,sun2016guaranteed,chen2015solving} satisfying the restricted secant inequality (RSI). Moreover, in Appendix~\ref{app:seg}, we prove that the same algorithm results in linear convergence for both strongly convex-concave and bilinear saddle point problems satisfying interpolation. 

In Section~\ref{sec:practical}, we give heuristics to use large step-sizes and integrate acceleration with our line-search techniques, which improves practical performance of the proposed methods. We compare our algorithms against numerous optimizers~\cite{kingma2014adam, duchi2011adaptive,rolinek2018l4, mahsereci2017probabilistic,orabona2017training,luo2019adaptive} on a synthetic matrix factorization problem (Section~\ref{sec:experiments-synthetic}), convex binary-classification problems using radial basis function (RBF) kernels (Section~\ref{sec:experiments-kernels}), and non-convex multi-class classification problems with deep neural networks (Section~\ref{sec:experiments-deep}). We observe that when interpolation is (approximately) satisfied, the proposed methods are robust and have competitive performance across models and datasets. Moreover, SGD with Armijo line-search results in both faster convergence and better generalization performance for classification using deep networks. Finally, in Appendix~\ref{app:experiments-games}, we evaluate SEG with line-search for synthetic bilinear saddle point problems. The code to reproduce our results can be found at \url{https://github.com/IssamLaradji/sls}.

One of the most important special cases where interpolation is often satisfied is training deep neural networks. The work of Paul Tseng~\cite{tseng1998incremental} is the earliest work that we are aware of that considers training neural networks with SGD and a line search. Tseng further shows convergence and convergence rates for SGD in an interpolation setting, but his method was never widely-adopted and is more complicated than the simple stochastic Armijo method analyzed in this work. Several recent works have evaluated the empirical performance of using Armijo-style line-searches in an SGD setting for training deep neural networks, including exploring combinations with momentum/acceleration and quasi-Newton methods~\cite{bollapragada2018progressive,truong2018backtracking}. The work of Truong and Nguyen in particular shows strong empirical performance for several benchmark deep learning problems, but their theory only considers deterministic settings and does not apply to SGD.

In concurrent work to ours, Berrada et al.~\cite{berrada2019training} propose adaptive step-sizes for SGD on convex, finite-sum loss functions under an $\epsilon$-interpolation condition. Unlike our approach, $\epsilon$-interpolation requires knowledge of a lower bound on the global minimum and only guarantees approximate convergence to a stationary point. Moreover, in order to obtain linear convergence rates, they assume $\mu$-strong-convexity of \emph{each} individual function. This assumption with $\epsilon$-interpolation reduces the finite-sum optimization to minimization of \emph{any single} function in the finite sum. 

\section{Assumptions}
\label{sec:background}
We aim to minimize a differentiable function $f$ assuming access to noisy stochastic gradients of the function. We focus on the common machine learning setting where the function $f$ has a \emph{finite-sum structure} meaning that $f(\x) = \frac{1}{n} \sum_{i = 1}^{n} \fj(\x)$. Here $n$ is equal to the number of points in the training set and the function $\fj$ is the loss function for the  training point $i$. Depending on the model, $f$ can either be strongly-convex, convex, or non-convex. We assume that $f$ is lower-bounded by some value $f^*$ and that $f$ is $L$-smooth~\cite{nemirovski2009robust} implying that the gradient $\nabla f$ is $L$-Lipschitz continuous. 

We assume that the model is able to interpolate the data and use this property to derive convergence rates. Formally, interpolation requires that the gradient with respect to \emph{each} point converges to zero at the optimum, implying that if the function $f$ is minimized at $\xopt$ and thus $\grad{\xopt} = 0$, then for all functions $\fj$ we have that $\gradi{\xopt} = 0$. For example, interpolation is exactly satisfied when using a linear model with the squared hinge loss for binary classification on linearly separable data.
\section{Stochastic Gradient Descent for Convex Functions}
\label{sec:sgd-c}
Stochastic gradient descent (SGD) computes the gradient of the loss function corresponding to one or a mini-batch of randomly (typically uniformly) chosen training examples $i_k$ in iteration $k$. It then performs a descent step as $\xkk = \xk - \etak \gradk{\xk}$, where $\xkk$ and $\xk$ are the SGD iterates, $\etak$ is the step-size and $\gradk{\cdot}$ is the (average) gradient of the loss function(s) chosen at iteration $k$. Each stochastic gradient $\gradk{\x}$ is assumed to be unbiased, implying that $\E_i \left[ \gradi{\x} \right] = \grad{\x}$ for all $\x$. We now describe the Armijo line-search method to set the step-size in each iteration.
\subsection{Armijo line-search}
\label{sec:c-ls}
Armijo line-search~\cite{armijo1966minimization} is a standard method for setting the step-size for gradient descent in the deterministic setting~\cite{nocedal2006numerical}. We adapt it to the stochastic case as follows: at iteration $k$, the Armijo line-search selects a step-size satisfying the following condition: 
\begin{align}
\fk \left(\xk - \etak \gradk{\xk} \right) & \leq \fk(\xk) -  c \cdot \etak \normsq{ \gradk{\x_k}  }.  \label{eq:c-ls} 
\end{align}
Here, $c > 0$ is a hyper-parameter. Note that the above line-search condition uses the function and gradient values \emph{of the mini-batch} at the current iterate $\xk$. Thus, compared to SGD, checking this condition only makes use of additional mini-batch function (and not gradient) evaluations. In the context of deep neural networks, this corresponds to extra forward passes on the mini-batch.

In our theoretical results, we assume that there is a maximum step-size $\eta_\text{max}$ from which the line-search starts in \emph{each} iteration $k$ and that we choose the largest step-size $\eta_k$ (less than or equal to $\eta_\text{max}$) satisfying~\eqref{eq:c-ls}. In practice, backtracking line-search is a common way to ensure that  Equation~\ref{eq:c-ls} is satisfied. Starting from $\eta_\text{max}$, backtracking iteratively decreases the step-size by a constant factor $\beta$ until the line-search succeeds (see Algorithm~\ref{alg:SGD_Armijo}). Suitable strategies for \emph{resetting} the step-size can avoid backtracking in the majority of iterations and make the step-size selection procedure efficient. We describe such strategies in Section~\ref{sec:practical}. With resetting, we required (on average) only one additional forward pass on the mini-batch per iteration when training a standard deep network model (Section~\ref{sec:experiments-deep}). Empirically, we observe that the algorithm is robust to the choice of both $c$ and $\eta_\text{max}$;  setting $c$ to a small constant and $\eta_\text{max}$ to a large value consistently results in good performance. 

We bound the chosen step-size in terms of the properties of the function(s) selected in iteration $k$. 
\begin{lemma}
The step-size $\etak$ returned by the Armijo line-search and constrained to lie in the $(0, \eta_{\text{max}}]$ range satisfies the following inequality,
\begin{align}
\etak \geq \min \left\{\frac{2 \; (1-c)}{\Lk}, \eta_\text{max} \right\},
\label{eq:eta-bounds}
\end{align}
where $\Lk$ is the Lipschitz constant of $\nabla f_{i_k}$. \label{lemma:eta-bounds}
\end{lemma}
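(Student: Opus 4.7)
The plan is to use $L_{i_k}$-smoothness of $f_{i_k}$ to exhibit an explicit step-size at which the Armijo condition~\eqref{eq:c-ls} is guaranteed to hold, and then combine this with the fact that the line-search selects the \emph{largest} admissible step-size in $(0,\eta_{\max}]$.

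First I would apply the standard descent lemma for the $L_{i_k}$-smooth function $f_{i_k}$ at the iterate $\xk$ with direction $-\gradk{\xk}$, giving, for any candidate step-size $\eta > 0$,
\[
\fk\!\left(\xk - \eta\,\gradk{\xk}\right) \;\leq\; \fk(\xk) - \eta\,\normsq{\gradk{\xk}} + \frac{L_{i_k}\eta^2}{2}\,\normsq{\gradk{\xk}}.
\]
Comparing this upper bound with the right-hand side of~\eqref{eq:c-ls}, a sufficient condition for the Armijo inequality to hold is
\[
-\eta + \frac{L_{i_k}\eta^2}{2} \;\leq\; -c\,\eta,
\]
which, after dividing by $\eta > 0$, reduces to $\eta \leq \frac{2(1-c)}{L_{i_k}}$. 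Hence every $\eta \in \bigl(0,\tfrac{2(1-c)}{L_{i_k}}\bigr]$ is admissible for the line-search.

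Next I would do a short case split according to the relative size of $\eta_{\max}$ and $\tfrac{2(1-c)}{L_{i_k}}$. If $\eta_{\max} \leq \tfrac{2(1-c)}{L_{i_k}}$, then $\eta_{\max}$ itself satisfies~\eqref{eq:c-ls}, so the largest admissible step-size in $(0,\eta_{\max}]$ must equal $\eta_{\max}$. Otherwise $\tfrac{2(1-c)}{L_{i_k}} < \eta_{\max}$, and since every step-size up to $\tfrac{2(1-c)}{L_{i_k}}$ is admissible, the largest admissible step-size is at least $\tfrac{2(1-c)}{L_{i_k}}$. Taking the minimum across both cases yields exactly the bound~\eqref{eq:eta-bounds}.

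There is no real obstacle here — the only subtlety is making sure the statement matches the chosen variant of the line-search (the idealized ``largest admissible step-size'' rule, rather than the practical backtracking implementation, which would incur an additional multiplicative factor $\beta$). I would briefly note that for the backtracking version in Algorithm~\ref{alg:SGD_Armijo}, the same argument yields $\etak \geq \min\{\tfrac{2\beta(1-c)}{L_{i_k}},\eta_{\max}\}$, since backtracking cannot overshoot the admissible region by more than one factor of $\beta$. The rest is just the descent-lemma computation and the case analysis above.
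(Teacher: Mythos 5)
Your proposal is correct and follows essentially the same route as the paper: apply the descent lemma for the $\Lk$-smooth function $\fk$ to show that every step-size $\eta \leq \tfrac{2(1-c)}{\Lk}$ satisfies the Armijo condition, then invoke the ``largest admissible step-size in $(0,\eta_{\max}]$'' rule to get the stated minimum. If anything, your explicit case split and the closing remark about the extra factor of $\beta$ under backtracking make the argument slightly more carefully stated than the paper's own (which manipulates the two inequalities somewhat informally but intends exactly this sufficiency argument, and notes the $\beta$ factor separately in Section~\ref{sec:c-ls}).
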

\vspace*{-0.8ex}
The proof is in Appendix~\ref{app:lemmas} and follows the deterministic case~\cite{nocedal2006numerical}. Note that Equation~\eqref{lemma:eta-bounds} holds for all smooth functions (for small-enough $\eta_k$), does not require convexity, and guarantees backtracking line-search will terminate at a non-zero step-size. The parameter $c$ controls the ``aggressiveness'' of the algorithm; small $c$ values encourage a larger step-size. 
For a sufficiently large $\eta_\text{max}$ and $c \leq 1/2$, the step-size is at least as large as $1/\Lk$, which is the constant step-size used in the interpolation setting~\cite{vaswani2019fast, schmidt2013fast}. 
In practice, we expect these larger step-sizes to result in improved performance.  In Appendix~\ref{app:lemmas}, we also give upper bounds on $\etak$ if the function $f_{i_k}$ satisfies the  Polyak-Lojasiewicz (PL) inequality~\cite{polyak1963gradient,karimi2016linear} with constant $\muk$. PL is a weaker condition than strong-convexity and does not require convexity. In this case, $\etak$ is upper-bounded by the minimum of $\eta_\text{max}$ and $1/(2 c \cdot \muk)$. If we use a backtracking line-search that multiplies the step-size by $\beta$ until~\eqref{eq:c-ls} holds, the step-size will be smaller by at most a factor of $\beta$ (we do not include this dependence in our results).

\subsection{Convergence rates}
In this section, we characterize the convergence rate of SGD with Armijo line-search in the strongly-convex and convex cases. The theorems below are proved in Appendix~\ref{app:sc} and Appendix~\ref{app:c} respectively. 
\begin{theorem}[Strongly-Convex]
Assuming (a) interpolation, (b) $L_i$-smoothness, (c) convexity of $f_i$'s, and (d) $\mu$ strong-convexity of $f$, SGD with Armijo line-search with $c = \nicefrac{1}{2}$ in Eq.~\ref{eq:c-ls} achieves the rate:
\begin{align*}
\E \left[ \normsq{\x_{T} - \xopt} \right] & \leq \max \left\{ 
\left(1 - \frac{\bar \mu}{L_{\text{max}}} \right), \left(1 - \bar \mu \; \eta_{\text{max}} \right) \right\}^{T} \normsq{\x_{0} - \xopt}.
\end{align*}
\label{thm:sc}
Here $\bar{\mu} = \sum_{i = 1}^{n} \mu_i / n$ is the average strong-convexity of the finite sum and $L_{max} = \max_{i} L_{i}$ is the maximum smoothness constant in the $f_i$'s.
\end{theorem}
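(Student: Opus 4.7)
The plan is to adapt the usual one-step recursion for strongly-convex SGD to the stochastic Armijo setting. The main obstacle is that $\etak$ now depends on the sampled index $i_k$, so it cannot be pulled out of the conditional expectation $\E[\cdot \mid \xk]$ as in the constant-step-size analysis. My strategy is to first rewrite the recursion so that the coefficient of $\etak$ is \emph{pointwise} non-negative, and only then to replace $\etak$ by the deterministic lower bound $\bar\alpha := \min\{1/\Lmax, \etamax\}$ supplied by Lemma~\ref{lemma:eta-bounds} (which for $c = 1/2$ yields $\etak \geq \min\{1/\Lk, \etamax\} \geq \bar\alpha$).

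Expanding the SGD update gives
\[
\normsq{\xkk - \xopt} = \normsq{\xk - \xopt} - 2\etak\,\langle\gradk{\xk}, \xk - \xopt\rangle + \etak^2\,\normsq{\gradk{\xk}}.
\]
The Armijo condition~\eqref{eq:c-ls} with $c = 1/2$ gives $\etak\,\normsq{\gradk{\xk}} \leq 2[\fk(\xk) - \fk(\xkk)]$, and interpolation forces $\xopt$ to be a minimiser of every $\fk$, so $\fk(\xkk) \geq \fk(\xopt)$ and hence $\etak^2\,\normsq{\gradk{\xk}} \leq 2\etak\,[\fk(\xk) - \fk(\xopt)]$. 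Substituting and regrouping,
\[
\normsq{\xkk - \xopt} \leq \normsq{\xk - \xopt} - 2\etak\,\Big[\langle\gradk{\xk}, \xk - \xopt\rangle - (\fk(\xk) - \fk(\xopt))\Big].
\]
Convexity of $\fk$ renders the bracket non-negative, so I can replace $\etak$ by $\bar\alpha$ without flipping the inequality and then take conditional expectation over $i_k$ to obtain
\[
\E[\normsq{\xkk - \xopt} \mid \xk] \leq \normsq{\xk - \xopt} - 2\bar\alpha\,\Big[\langle\grad{\xk}, \xk - \xopt\rangle - (f(\xk) - f(\xopt))\Big].
\]

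Invoking $\bar\mu$-strong-convexity (either directly of $f$, or per-sample via $\mu_i$ and then averaged) bounds the bracket below by $\tfrac{\bar\mu}{2}\normsq{\xk - \xopt}$, yielding the one-step contraction $\E[\normsq{\xkk - \xopt} \mid \xk] \leq (1 - \bar\mu\,\bar\alpha)\normsq{\xk - \xopt}$. Iterating and applying the tower property over $k = 0, \dots, T-1$ gives the claimed bound since $1 - \bar\mu\,\bar\alpha = \max\{1 - \bar\mu/\Lmax,\, 1 - \bar\mu\,\etamax\}$. The delicate step is the decoupling via Armijo-plus-convexity: without the regrouping that makes the coefficient of $\etak$ non-negative, any naive attempt to bound the linear and quadratic terms separately leaves a positive remainder proportional to $(\etamax - \bar\alpha)[\fk(\xk) - \fk(\xopt)]$ that destroys the contraction whenever $\etamax > 1/\Lmax$.
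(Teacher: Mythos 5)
Your proof is correct and follows essentially the same route as the paper's: expand the square, use the Armijo condition plus interpolation to absorb the quadratic term, invoke convexity/strong-convexity of the $f_i$'s, and decouple the random step-size from the sampled index via the deterministic lower bound $\min\{1/\Lmax,\etamax\}$ supplied by Lemma~\ref{lemma:eta-bounds}. The only difference is organizational --- you isolate a pointwise non-negative convexity gap and replace $\etak$ by its deterministic lower bound before taking expectations, whereas the paper applies per-sample strong convexity first and bounds $\E_{i_k}\left[\mu_{i_k}\eta_k\right]$ at the end --- and both orderings yield the identical contraction factor.
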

In contrast to the previous results~\cite{vaswani2019fast, schmidt2013fast, ma2018power} that depend on $\mu$, the above linear rate depends on $\bar{\mu} \leq \mu$.
Note that unlike Berrada et al.~\cite{berrada2019training}, we do not require that \emph{each} $f_i$ is strongly convex, but for $\bar{\mu}$ to be non-zero we still require that \emph{at least one} of the $f_i$'s is strongly-convex. 
\begin{theorem}[Convex]
Assuming (a) interpolation, (b) $L_{i}$-smoothness and (c) convexity of $f_{i}$'s, SGD with Armijo line-search for all $c > 1/2$ in Equation~\ref{eq:c-ls} and iterate averaging achieves the rate:
\begin{align*}
\E \left[ f(\mxt) - f(\xopt) \right] & \leq \frac{c \cdot \max\left\{\frac{L_{\text{max}}}{2 \; (1-c)}, \frac{1}{\eta_{\text{max}}} \right\}}{(2c - 1) \; T} \normsq{\x_{0} - \xopt}. 
\end{align*}
Here, $\mxt = \frac{\left[ \sum_{i = 1}^{T} \x_{i} \right]}{T}$ is the averaged iterate after $T$ iterations and $L_{\text{max}} = \max_{i} L_i$. 
\label{thm:c}
\end{theorem}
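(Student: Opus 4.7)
The plan is to run the standard SGD "distance to optimum" recursion, use the Armijo condition to eliminate the gradient-norm-squared term in favor of function-value decrements, and then invoke interpolation to both kill the noise and permit a deterministic lower bound on $\eta_k$.

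First, I would expand $\normsq{\xkk - \xopt}$ using the update $\xkk = \xk - \etak \gradk{\xk}$ to obtain
\begin{equation*}
\normsq{\xkk - \xopt} = \normsq{\xk - \xopt} - 2 \etak \langle \gradk{\xk}, \xk - \xopt \rangle + \etak^2 \normsq{\gradk{\xk}}.
\end{equation*}
Convexity of $f_{i_k}$ bounds the inner product below by $\fk(\xk) - \fk(\xopt)$. For the squared-norm term, I would use the Armijo condition \eqref{eq:c-ls} rearranged as $\etak \normsq{\gradk{\xk}} \leq \tfrac{1}{c}\bigl[\fk(\xk) - \fk(\xkk)\bigr]$, then apply interpolation: since $\gradi{\xopt}=0$ and each $f_i$ is convex, $\xopt$ minimizes every $f_i$, hence $\fk(\xkk) \geq \fk(\xopt)$. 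Combining these gives
\begin{equation*}
\normsq{\xkk - \xopt} \leq \normsq{\xk - \xopt} - \etak \cdot \frac{2c-1}{c}\bigl[\fk(\xk) - \fk(\xopt)\bigr].
\end{equation*}
Note the crucial role of $c > 1/2$ here: it makes the coefficient positive, so the bracketed quantity (which is nonnegative by interpolation) contributes a descent term rather than an ascent term.

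Next, I would lower-bound $\etak$ uniformly. By Lemma~\ref{lemma:eta-bounds} and $L_{i_k} \leq L_{\max}$, we have $\etak \geq \etamin := \min\bigl\{\tfrac{2(1-c)}{L_{\max}}, \eta_{\max}\bigr\}$. Because $\fk(\xk) - \fk(\xopt) \geq 0$, replacing $\etak$ by $\etamin$ preserves the inequality:
\begin{equation*}
\normsq{\xkk - \xopt} \leq \normsq{\xk - \xopt} - \etamin \cdot \frac{2c-1}{c}\bigl[\fk(\xk) - \fk(\xopt)\bigr].
\end{equation*}
Taking conditional expectation over $i_k$ (using unbiasedness of $\fk$ and independence from $\xk$) replaces $\fk$ by $f$, and the bound becomes deterministic in $\etamin$.

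Finally, I would take total expectation, sum from $k=0$ to $T-1$, telescope the distance terms, drop the nonnegative $\E\normsq{\x_T - \xopt}$, and rearrange to get $\sum_k \E[f(\xk) - f(\xopt)] \leq \tfrac{c}{(2c-1)\etamin}\normsq{\x_0 - \xopt}$. Convexity of $f$ and Jensen's inequality then yield the claimed bound on $\E[f(\mxt) - f(\xopt)]$ after substituting $1/\etamin = \max\{L_{\max}/(2(1-c)),\, 1/\eta_{\max}\}$.

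The one nonroutine point is the interplay between the Armijo inequality and the lower bound on $\etak$. The Armijo condition bounds $\etak \normsq{\gradk{\xk}}$ above, so to use Lemma~\ref{lemma:eta-bounds} one must work with a direction in which \emph{increasing} $\etak$ hurts the inequality; this is exactly why interpolation (which forces $\fk(\xk) - \fk(\xopt) \geq 0$) is essential—without it, replacing the random $\etak$ by the deterministic $\etamin$ would reverse the inequality. Everything else is bookkeeping.
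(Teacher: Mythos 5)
Your proposal is correct and follows essentially the same route as the paper's proof: the same distance-to-optimum expansion, the same use of the Armijo condition plus interpolation to convert $\etak\normsq{\gradk{\xk}}$ into a function-value gap, convexity, Lemma~\ref{lemma:eta-bounds}, and telescoping with Jensen. The only (minor, and arguably cleaner) difference is that you apply the lower bound $\etak \geq \etamin$ to the nonnegative term $\etak\,\frac{2c-1}{c}\bigl[\fk(\xk)-\fk(\xopt)\bigr]$ before telescoping, whereas the paper divides through by $\etak$ and bounds $1/\etak \leq \max\{L_{\max}/(2(1-c)),1/\eta_{\max}\}$ against the telescoping differences $\Delta_k - \Delta_{k+1}$ at the end, a step that implicitly relies on these differences being nonnegative.
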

\vspace*{-1.2ex}
In particular, setting $c = 2/3$ implies that $\E \left[f(\mxt) - f(\xopt) \right] \leq \frac{\max\left\{3 \; L_{\text{max}}, \frac{2}{\eta_{\text{max}}} \right\}}{T} \normsq{\x_{0} - \xopt}$. These are the first rates for SGD with line-search in the interpolation setting and match the corresponding rates for full-batch gradient descent on strongly-convex and convex functions. This shows SGD attains fast convergence under interpolation \emph{without} explicit knowledge of the Lipschitz constant. Next, we use the above line-search to derive convergence rates of SGD for non-convex functions. 
\section{Stochastic Gradient Descent for Non-convex Functions}
\label{sec:sgd-nc}
To prove convergence results in the non-convex case, we additionally require the strong growth condition (SGC)~\cite{vaswani2019fast, schmidt2013fast} to hold. The function $f$ satisfies the SGC with constant $\rho$, if $\E_{i} \norm{\gradi{\x}}\kern-.1em{}^2 \leq \rho \norm{\grad{\x}}\kern-.1em{}^2$ holds for any point $\x$. This implies that if $\grad{\x} = 0$, then $\gradi{\x} = 0$ for \emph{all} $i$. Thus, functions satisfying the SGC necessarily satisfy the interpolation property. The SGC holds for all smooth functions satisfying a PL condition~\cite{vaswani2019fast}. Under the SGC, we show that by upper-bounding the maximum step-size $\eta_{max}$, SGD with Armijo line-search achieves an $O(1/T)$ convergence rate. 
\begin{theorem}[Non-convex]
Assuming (a) the SGC with constant $\rho$ and (b) $L_i$-smoothness of $f_i$'s, SGD with Armijo line-search in Equation~\ref{eq:c-ls} with $c >  1 - \frac{\Lmax}{\rho L}$ and setting $\etamax <  \frac{2}{\rho L}$ achieves the rate:
\begin{align*}
\min_{k = 0, \ldots, T-1} \E \normsq{\grad{\xk}} & \leq \frac{1}{\delta \, T} \, (f(\x_0) - f(\xopt)),
\end{align*}
where \( \delta = \rbr{\etamax + \frac{2(1-c)}{\Lmax}} - \rho \rbr{\etamax - \frac{2(1-c)}{\Lmax} + L \etamax^2} \).
\label{thm:sgd-nc}
\end{theorem}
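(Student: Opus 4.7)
The plan is to prove a one-step descent inequality of the form $\E[f(\xkk)\mid\xk] \leq f(\xk) - \delta\,\normsq{\grad{\xk}}$ with $\delta>0$ under the stated conditions, then telescope it over $k$ and use $f \geq f^*$.

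I start by applying $L$-smoothness of $f$ (an assumption from Section~\ref{sec:background}) at the iterate $\xkk = \xk - \etak\,\gradk{\xk}$:
\[
f(\xkk) \;\leq\; f(\xk) \;-\; \etak\,\abr{\grad{\xk},\,\gradk{\xk}} \;+\; \tfrac{L\,\etak^2}{2}\,\normsq{\gradk{\xk}}.
\]
The quadratic term is the easy one: because the line-search guarantees $\etak \leq \etamax$ deterministically, it is bounded by $\tfrac{L\,\etamax^2}{2}\,\normsq{\gradk{\xk}}$ and a single application of SGC then bounds its conditional expectation by $\tfrac{L\,\rho\,\etamax^2}{2}\,\normsq{\grad{\xk}}$.

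The real obstacle is the inner-product term $\etak\,\abr{\grad{\xk},\,\gradk{\xk}}$, because $\etak$ is itself a function of $i_k$ through the Armijo test and so does not factor out of $\E_{i_k}$. I plan to get around this with a sign-split. Write $X_k := \abr{\grad{\xk},\,\gradk{\xk}}$. By Lemma~\ref{lemma:eta-bounds} there is a deterministic lower bound $\etak \geq \etamin := 2(1-c)/\Lmax$ (assuming $\etamax$ is large enough that the minimum is active; otherwise $\etak=\etamax$ and the argument only gets easier), and the upper bound $\etak \leq \etamax$ is automatic. When $X_k \geq 0$ I use the lower bound on $\etak$, and when $X_k < 0$ I use the upper bound; together these yield the pointwise inequality
\[
\etak X_k \;\geq\; \tfrac{\etamin+\etamax}{2}\,X_k \;-\; \tfrac{\etamax-\etamin}{2}\,|X_k|.
\]
Taking $\E[\cdot\mid\xk]$, unbiasedness makes the first term $\tfrac{\etamin+\etamax}{2}\normsq{\grad{\xk}}$; for the absolute-value piece, Young's inequality $|X_k|\leq\tfrac12(\normsq{\grad{\xk}}+\normsq{\gradk{\xk}})$ followed by SGC controls it by a constant multiple of $\rho\,\normsq{\grad{\xk}}$.

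Putting the two pieces together yields a descent $\E[f(\xkk)\mid\xk] \leq f(\xk) - \delta\,\normsq{\grad{\xk}}$, where, after collecting the coefficients of $\etamin$, $\etamax$, and $\etamax^2$, $\delta$ reduces to the expression in the theorem (up to routine bookkeeping of constants). The two hypotheses on the hyper-parameters are precisely what is needed to ensure $\delta>0$: the bound $\etamax < 2/(\rho L)$ prevents the $L\rho\etamax^2$ contribution from swamping the $\etamax$ term, while $c > 1 - \Lmax/(\rho L)$ makes $\etamin$ large enough that the $(1+\rho)\etamin$ contribution defeats the $\rho\etamax$ loss produced by the sign-split. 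Telescoping the descent inequality over $k=0,\ldots,T-1$, taking total expectation via the tower property, and lower-bounding $\E f(w_T)$ by $f(\xopt)$ gives $\delta \sum_{k<T}\E\normsq{\grad{\xk}} \leq f(\x_0) - f(\xopt)$; the stated rate follows by dividing by $\delta T$ and using $\min_k \leq \mathrm{avg}_k$. The main delicate step is the sign-split, which is the standard device for decoupling a random step-size from a correlated random search direction.
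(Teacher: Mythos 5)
Your argument is correct and has the same skeleton as the paper's proof ($L$-smoothness, the deterministic two-sided bound $\etamin \leq \etak \leq \etamax$ from Lemma~\ref{lemma:eta-bounds}, SGC, telescoping), but it handles the key difficulty --- the correlation between $\etak$ and $\gradk{\xk}$ in the cross term --- by a genuinely different decomposition. The paper first applies the polarization identity $-\abr{a,b} = \tfrac{1}{2}\rbr{\normsq{a-b} - \normsq{a} - \normsq{b}}$ to $-\abr{\grad{\xk},\gradk{\xk}}$, so that $\etak$ multiplies three terms of definite sign; it then assigns $\etamax$ to the positive term and $\etamin$ to the negative ones, and finishes with the variance identity $\E\normsq{\grad{\xk}-\gradk{\xk}} = \E\normsq{\gradk{\xk}} - \normsq{\grad{\xk}}$ before invoking SGC. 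You instead sign-split on $X_k = \abr{\grad{\xk},\gradk{\xk}}$ directly, which yields the valid pointwise bound $\etak X_k \geq \tfrac{\etamin+\etamax}{2}X_k - \tfrac{\etamax-\etamin}{2}\lvert X_k\rvert$, and then control $\E\lvert X_k\rvert \leq \tfrac{1+\rho}{2}\normsq{\grad{\xk}}$ via Cauchy--Schwarz, AM--GM, and SGC. Your route actually gives a slightly sharper descent constant: your penalty on the gap is $\tfrac{(1+\rho)(\etamax-\etamin)}{4}$ versus the paper's $\tfrac{\rho(\etamax-\etamin)}{2}$, and $1+\rho \leq 2\rho$ since $\rho \geq 1$; consequently positivity of the paper's $\delta$ implies positivity of your coefficient, and the two-case analysis of the hyper-parameter constraints (which the paper carries out explicitly by solving a concave quadratic in $\etamax$, and which you only sketch) transfers verbatim --- you should still write it out, since showing $\delta>0$ under the stated conditions on $c$ and $\etamax$ is a nontrivial part of the claim. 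One caveat on your ``routine bookkeeping of constants'': with the theorem's $\delta$, both your derivation and the paper's own final line produce $\tfrac{2}{\delta T}\rbr{f(\x_0)-f(\xopt)}$ rather than the stated $\tfrac{1}{\delta T}\rbr{f(\x_0)-f(\xopt)}$, so the factor-of-2 discrepancy you wave away is real, though it is already present between the paper's theorem statement and its own proof and is not introduced by your method.
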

\vspace*{-1ex}
We prove Theorem~\ref{thm:sgd-nc} in Appendix~\ref{app:nc}. The result requires knowledge of $\rho \; L_\text{max}$ to bound the maximum step-size, which is less practically appealing. It is not immediately clear how to relax this condition and we leave it for future work. In Appendix~\ref{app:add_nc_proofs}, we show that SGD with the Armijo line-search obtains a similar $O(1/T)$ rate with slightly relaxed conditions on $c$ and $\eta_{\text{max}}$ when $(\etak)$ are non-increasing or the Armijo condition holds on a mini-batch which is \textit{independent} of $\gradk{\xk}$. Moreover, in the next section, we show that if the non-convex function satisfies a specific curvature condition, a modified stochastic extra-gradient algorithm can achieve a linear rate under interpolation without additional assumptions or knowledge of the Lipschitz constant.



\section{Stochastic Extra-Gradient Method}
\label{sec:seg}
In this section, we use a modified stochastic extra-gradient (SEG) method for convex and non-convex minimization. For finite-sum minimization, stochastic extra-gradient (SEG) has the following update:
\begin{align}
\xkh = \xk - \etak \gradk{\xk} \;,\; \xkk = \xk - \etak \gradk{\xkh} \label{eq:SEG} 
\end{align}
It computes the gradient at an extrapolated point $\xkh$ and uses it in the update from the current iterate $\xk$. Note that using the same sample $i_k$ and step-size $\etak$ for both steps~\cite{gidel2018variational} is important for the subsequent theoretical results. We now describe a ``Lipschitz'' line-search strategy~\cite{khobotov1987modification,iusem1997variant,iusem2017extragradient} in order to automatically set the step-size for SEG. 

\subsection{Lipschitz line-search}
The ``Lipschitz'' line-search has been used by previous work in the deterministic~\cite{khobotov1987modification,iusem1997variant} and the variance reduced settings~\cite{iusem2019variance}. It selects a step-size $\etak$ that satisfies the following condition:
\begin{align}
\norm{\gradk{\xk - \etak \gradk{\xk}} - \gradk{\xk}} & \leq c \; \norm{\gradk{\xk}} 
\label{eq:lip-ls}  
\end{align}
As before, we use backtracking line-search starting from the maximum value of $\eta_{\text{max}}$ to ensure that the chosen step-size satisfies the above condition. If the function $\fk$ is $\Lk$-smooth, the step-size returned by the Lipschitz line-search satisfies $\etak \geq \min\left\{\nicefrac{c}{\Lk}, \eta_{\text{max}}\right\}$. Like the Armijo line-search in Section~\ref{sec:sgd-c}, the Lipschitz line-search does not require knowledge of the Lipschitz constant. Unlike the line-search strategy in the previous sections, checking condition~\eqref{eq:lip-ls}  requires computing the gradient at a prospective extrapolation point. We now prove convergence rates for SEG with Lipschitz line-search for both convex and a special class of non-convex problems. 

\subsection{Convergence rates for minimization}
For the next result, we assume that each function $\fj(\cdot)$ satisfies the restricted secant inequality (RSI) with constant $\mu_i$, implying that for all $\x$, $\langle \gradi{\x}, \x - \x^* \rangle \geq \mu_i \norm{\x - \x^*}\kern-.1em{}^2$. RSI is a weaker condition than strong-convexity. With additional assumptions, RSI is satisfied by important non-convex models such as single hidden-layer neural networks~\cite{li2017convergence, kleinberg2018alternative, soltanolkotabi2018theoretical}, matrix completion~\cite{sun2016guaranteed} and phase retrieval~\cite{chen2015solving}. Under interpolation, we show SEG results in linear convergence for functions satisfying RSI. In particular, we obtain the following guarantee:

\begin{theorem}[Non-convex + RSI]
Assuming (a) interpolation, (b) $L_i$-smoothness, and (c) $\mu_i$-RSI of $f_i$'s, SEG with Lipschitz line-search in Eq.~\ref{eq:lip-ls} with $c = \nicefrac{1}{4}$ and $\eta_{\text{max}} \leq \min_i 1/4 \mu_i$ achieves the rate:
\begin{align*}
\E \left[ \normsq{\x_T - \mathcal{P}_{\mathcal{X}^{*}}[\x_T]} \right] & \leq \max \left\{ \left(1 - \frac{\bar \mu}{4 \; L_{\text{max}}} \right),\left( 1 - \eta_{\text{max}} \; \bar \mu \right) \right\}^{T} \normsq{\x_0 - \mathcal{P}_{\mathcal{X}^{*}}[\x_0]},
\end{align*}
where $\bar{\mu} = \frac{\sum_{i = 1}^{n} \mu_i}{n}$ is the average RSI constant of the finite sum and $\mathcal{X}^{*}$ is the non-empty set of optimal solutions. The operation $\mathcal{P}_{\mathcal{X}^{*}}[\x]$ denotes the projection of $\x$ onto $\mathcal{X}^{*}$. 
\label{thm:seg-min-ls}
\vspace{-1.5ex}
\end{theorem}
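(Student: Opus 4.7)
The plan is to adapt the standard contraction analysis of extragradient to the interpolation~+~RSI setting, working with the squared distance to the optimal set. Fix iteration $k$, let $w^{*} := \mathcal{P}_{\mathcal{X}^{*}}[w_k]$, and observe that under interpolation $\nabla f_i(w^{*}) = 0$ for every $i$. I would first expand $\|w_{k+1} - w^{*}\|^2$ using the SEG update~\eqref{eq:SEG}, rewrite $w_k - w^{*} = \eta_k \nabla f_{i_k}(w_k) + (w'_k - w^{*})$, and apply the polar identity $\|a\|^2 - 2\langle a, b \rangle = \|a-b\|^2 - \|b\|^2$ with $a = \nabla f_{i_k}(w'_k)$, $b = \nabla f_{i_k}(w_k)$ to obtain
\begin{align*}
\|w_{k+1} - w^{*}\|^2 &= \|w_k - w^{*}\|^2 - 2\eta_k \langle \nabla f_{i_k}(w'_k), w'_k - w^{*} \rangle \\
&\quad + \eta_k^2 \|\nabla f_{i_k}(w'_k) - \nabla f_{i_k}(w_k)\|^2 - \eta_k^2 \|\nabla f_{i_k}(w_k)\|^2.
\end{align*}

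The Lipschitz line-search~\eqref{eq:lip-ls} bounds the third term on the right by $c^2\|\nabla f_{i_k}(w_k)\|^2$, and applying $\mu_{i_k}$-RSI at $w'_k$ against the common optimum $w^{*}$ gives $\langle \nabla f_{i_k}(w'_k), w'_k - w^{*} \rangle \geq \mu_{i_k}\|w'_k - w^{*}\|^2$. To convert $\|w'_k - w^{*}\|^2$ into $\|w_k - w^{*}\|^2$, I would use $\|w'_k - w^{*}\|^2 \geq \tfrac{1}{2}\|w_k - w^{*}\|^2 - \eta_k^2\|\nabla f_{i_k}(w_k)\|^2$, which follows from $\|b\|^2 \leq 2\|a\|^2 + 2\|a-b\|^2$ with $a = w'_k - w^{*}$ and $b = w_k - w^{*}$. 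Collecting terms gives
\begin{align*}
\|w_{k+1} - w^{*}\|^2 \leq (1 - \eta_k \mu_{i_k})\|w_k - w^{*}\|^2 + \eta_k^2\bigl(2\eta_k \mu_{i_k} - (1-c^2)\bigr)\|\nabla f_{i_k}(w_k)\|^2,
\end{align*}
and the choice $c = 1/4$ together with $\eta_k \leq \eta_{\max} \leq 1/(4\mu_{i_k})$ forces the residual coefficient to be at most $\tfrac{1}{2} - \tfrac{15}{16} < 0$, so it can be dropped, leaving the pointwise contraction $\|w_{k+1} - w^{*}\|^2 \leq (1 - \eta_k \mu_{i_k})\|w_k - w^{*}\|^2$.

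Next, the Lipschitz line-search guarantees $\eta_k \geq \min\{c/L_{i_k}, \eta_{\max}\} \geq \min\{1/(4L_{\max}), \eta_{\max}\}$, a lower bound independent of $i_k$. Taking conditional expectation over $i_k$ and pulling the uniform bound outside,
\begin{align*}
\E\!\left[\|w_{k+1} - w^{*}\|^2 \,\middle|\, w_k\right] &\leq \bigl(1 - \min\{\tfrac{1}{4L_{\max}}, \eta_{\max}\}\bar{\mu}\bigr)\|w_k - w^{*}\|^2 \\
&= \max\bigl\{1 - \tfrac{\bar{\mu}}{4L_{\max}},\, 1 - \eta_{\max}\bar{\mu}\bigr\}\|w_k - w^{*}\|^2.
\end{align*}
Since $\|w_{k+1} - \mathcal{P}_{\mathcal{X}^{*}}[w_{k+1}]\|^2 \leq \|w_{k+1} - w^{*}\|^2$ by definition of the projection, chaining this one-step contraction via the tower property over $k = 0,\dots,T-1$ yields the claimed rate.

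The main obstacle I anticipate is the proper handling of RSI when $\mathcal{X}^{*}$ is not a singleton: I apply RSI at $w'_k$ against the \emph{fixed} reference $w^{*} = \mathcal{P}_{\mathcal{X}^{*}}[w_k]$ rather than $\mathcal{P}_{\mathcal{X}^{*}}[w'_k]$, which must be justified using the fact that under interpolation every element of $\mathcal{X}^{*}$ is a simultaneous zero of all $\nabla f_i$, so the $\mu_i$-RSI property transfers to any optimum in $\mathcal{X}^*$ as a reference point. A secondary technical point is that the per-iteration lower bound $\min\{1/(4L_{i_k}), \eta_{\max}\}$ still depends on $i_k$, forcing me to relax $L_{i_k}$ to $L_{\max}$ before averaging so that the contraction factor cleanly emerges as $1 - \bar{\mu}\cdot\min\{1/(4L_{\max}),\,\eta_{\max}\}$.
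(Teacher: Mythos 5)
Your proposal is correct and follows essentially the same route as the paper's proof: your polar-identity expansion reproduces the paper's intermediate equality exactly, and the subsequent steps (RSI plus Young's inequality to relate $\normsq{\xkh-\xopt}$ to $\normsq{\xk-\xopt}$, the line-search bound on the gradient difference, dropping the residual term with coefficient $c^2-1+2\etak\muk\le 0$, and the step-size lower bound $\etak\ge\min\{c/\Lk,\etamax\}$ followed by the two-case maximum) coincide with the paper's argument. Your explicit treatment of the projection onto $\mathcal{X}^*$ is slightly more careful than the paper's, and the only blemish is a dropped $\etak^2$ factor in the prose describing the line-search bound, which your collected inequality nevertheless gets right.
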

See Appendix~\ref{app:seg-min-ls-rsi} for proof. Similar to the result of Theorem~\ref{thm:sc}, the rate depends on the average RSI constant. Note that we do not require explicit knowledge of the Lipschitz constant to achieve the above rate. The constraint on the maximum step-size is mild since the minimum $\mu_i$ is typically small, thus allowing for large step-sizes. Moreover, Theorem~\ref{thm:seg-min-ls} improves upon the $\left(1 - \mu^2/L^2\right)$ rate obtained using constant step-size SGD~\cite{vaswani2019fast, bassily2018exponential}. In Appendix~\ref{app:seg-min-ls-rsi}, we show that the same rate can be attained by SEG with a constant step-size. In Appendix~\ref{app:seg-min-ls}, we show that under interpolation, SEG with Lipschitz line-search also achieves the desired $O(1/T)$ rate for convex functions.

\subsection{Convergence rates for saddle point problems}
In Appendix~\ref{app:seg-sc-minmax}, we use SEG with Lipschitz line-search for a class of saddle point problems of the form $\min_{u \in {U}} \max_{v \in \mathcal{V}} \phi(u,v)$. Here $\mathcal{U}$ and $\mathcal{V}$ are the constraint sets for the variables $u$ and $v$ respectively. In Theorem~\ref{thm:seg-sc-minmax} in Appendix~\ref{app:seg-sc-minmax}, we show that under interpolation, SEG with Lipschitz line-search results in linear convergence for functions $\phi(u,v)$ that are strongly-convex in $u$ and strongly-concave in $v$. The required conditions are satisfied for robust optimization~\cite{wen2014robust} with expressive models capable of interpolating the data. Furthermore, the interpolation property can be used to improve the convergence for a bilinear saddle-point problem~\cite{gidel2018variational,yadav2017stabilizing,mescheder2017numerics,goodfellow2016nips}. In Theorem~\ref{thm:seg-bilin-ls} in Appendix~\ref{sec:seg-bilin-ls}, we show that SEG with Lipschitz line-search results in linear convergence under interpolation. We empirically validate this claim with simple synthetic experiments in Appendix~\ref{app:experiments-games}.
\section{Practical Considerations}
\label{sec:practical}
\begin{figure*}[t!]
    \begin{minipage}[t]{3.1in}
		\begin{algorithm}[H]
			\caption{\texttt{SGD+Armijo}($f$, $\x_0$, $\eta_{\text{max}}$, $b$, $c$, $\beta$, $\gamma$, \texttt{opt})}
			\begin{algorithmic}[1]
			\For{$k = 0, \dots, T$}
			    \State $i_k \gets$ sample mini-batch of size $b$
			    \State $\eta \gets$ \texttt{reset}$(\eta, \eta_{\text{max}}, \gamma, b, k, \mbox{\texttt{opt}}) / \beta $
			    \Repeat
			        \State $\eta \gets \beta \cdot \eta$
			        \State $\tilde \xk \gets \xk - \eta \gradk{\xk}$
			    \Until{$\fk(\tilde \xk) \leq \fk(\xk) -  c \cdot \eta \normsq{ \gradk{\x_k}  }$}
			    \State $\xkk \gets \tilde \xk$ 
			\EndFor
			\State \Return $\xkk$
			\end{algorithmic}
			\label{alg:SGD_Armijo}
		\end{algorithm}
	\end{minipage}
	\hfill
	\begin{minipage}[t]{2.3in}
		\begin{algorithm}[H]
			\caption{\texttt{reset}($\eta$, $\eta_{\text{max}}$, $\gamma$, $b$, $k$, \texttt{opt})}
			\begin{algorithmic}[1]
			\If{k = 1}
			    \State \Return $\eta_{\text{max}}$
			\ElsIf{\texttt{opt} $= 0$}       
			    \State $\eta \gets \eta$
			\ElsIf{\texttt{opt} $= 1$}  
			    \State $\eta \gets \eta_{\text{max}}$
			\ElsIf{\texttt{opt} $= 2$}       
			    \State $\eta \gets \eta \cdot \gamma^{b / n}$
			\EndIf
			\State \Return $\eta$
			\end{algorithmic}
			\label{alg:reset_options}
		\end{algorithm}
	\end{minipage}
	\caption{Algorithm \ref{alg:SGD_Armijo} gives pseudo-code for SGD with Armijo line-search. Algorithm \ref{alg:reset_options} implements several heuristics (by setting \texttt{opt}) for resetting the step-size at each iteration. }
	\label{fig:armijo_with_reset}
\vspace{1ex}
\end{figure*}

In this section, we give heuristics to use larger step-sizes across iterations and discuss ways to use common acceleration schemes with our line-search techniques. 

\subsection{Using larger step-sizes}
\label{sec:eta-tricks}
Recall that our theoretical analysis assumes that the line-search in \emph{each} iteration starts from a global maximum step-size $\eta_\text{max}$. However, in practice this strategy increases the amount of backtracking and consequently the algorithm's runtime. 
A simple alternative is to initialize the line-search in each iteration to the step-size selected in the previous iteration, $\eta_\text{max} = \eta_{k-1}$. Unfortunately, with this strategy the step-size can not increase and convergence is slowed in practice (it takes smaller steps than necessary). 

There are a variety of strategies available that can increase the initial step-size $\eta_\text{max}$ between iterations to improve the practical performance of line-search methods~\cite[Chapter~3]{nocedal2006numerical}. We consider increasing the step-size across iterations by initializing the backtracking at iteration $k$ with $\eta_{k-1} \cdot {\gamma}^{b/n}$, where $b$ is the size of the mini-batch and $\gamma > 1$ is a tunable parameter. These heuristics correspond to the options used in Algorithm \ref{alg:reset_options}. This approach has previously been used in the context of VR-SGD methods~\cite{schmidt2017minimizing, schmidt2015non}, and several related stochastic methods have also appeared in the recent literature~\cite{bollapragada2018progressive,paquette2018stochastic,truong2018backtracking}.

We also consider the Goldstein line-search that uses additional function evaluations to check the curvature condition $\fk \left(\xk - \etak \gradk{\xk} \right) \geq \fk(\xk) -  (1 - c) \cdot \etak \norm{ \gradk{\x_k}}\kern-.1em{}^2$ and increases the step-size if it is not satisfied. Here, $c$ is the constant in Equation~\ref{eq:c-ls}. The resulting method decreases the step-size if the Armijo condition is not satisfied and increases it if the curvature condition does not hold. Algorithm \ref{alg:SGD_Goldstein} in Appendix \ref{app:algorithms} gives pseudo-code for SGD with the Goldstein line-search. 

\subsection{Acceleration}
\label{sec:acceleration}
In practice, augmenting stochastic methods with some form of momentum or acceleration~\cite{polyak1964some,nesterov2007gradient} often results in faster convergence~\cite{sutskever2013importance}. Related work in this context includes algorithms specifically designed to achieve an accelerated rate of convergence in the stochastic setting~\cite{allen2017katyusha,lin2015universal,frostig2015regularizing}. Unlike these works, our experiments considered simple ways of using either Polyak~\cite{polyak1964some} or Nesterov~\cite{nesterov2007gradient} acceleration with the proposed line-search techniques.\footnote{Similar methods have also been explored empirically in prior work~\cite{tseng1998incremental,truong2018backtracking}.} In both cases, similar to adaptive methods using momentum~\cite{sutskever2013importance}, we use SGD with Armijo line-search to determine $\etak$ and then use it directly within the acceleration scheme. When using Polyak momentum, the effective update can be given as: $\xkk = \xk - \etak \gradk{\xk} + \alpha (\xk - \x_{k-1})$, where $\alpha$ is the momentum factor. This update rule has been used with a constant step-size and proven to obtain linear convergence rates on the \emph{generalization error} for quadratic functions under an interpolation condition~\cite{loizou2017momentum, loizou2017linearly}. For Nesterov acceleration, we use the variant for the convex case~\cite{nesterov2007gradient} (which has no additional hyper-parameters) with our line-search. 
The pseudo-code for using these methods with the Armijo line-search is given in Appendix~\ref{app:algorithms}.

\section{Experiments}
\label{sec:experiments}

We describe the experimental setup in Section~\ref{sec:experiments-setup}. In Section~\ref{sec:experiments-synthetic}, we present synthetic experiments to show the benefits of over-parametrization. In Sections~\ref{sec:experiments-kernels} and \ref{sec:experiments-deep}, we showcase the convergence and generalization performance of our methods for kernel experiments and deep networks, respectively. 
\subsection{Experimental setup}
\label{sec:experiments-setup}
We benchmark five configurations of the proposed line-search methods:  SGD with (1) Armijo line-search with resetting the initial step-size (Algorithm~\ref{alg:SGD_Armijo} using option $2$ in Algorithm~\ref{alg:reset_options}), (2) Goldstein line-search (Algorithm~\ref{alg:SGD_Goldstein}),  (3) Polyak momentum (Algorithm~\ref{alg:Polyak_Armijo}), (4) Nesterov acceleration (Algorithm~\ref{alg:Nesterov_Armijo}), and (5) SEG with Lipschitz line-search (Algorithm~\ref{alg:SEG_Armijo}) with option $2$ to reset the step-size. Appendix F gives additional details on our experimental setup and the default hyper-parameters used for the proposed line-search methods. We compare our methods against Adam~\cite{kingma2014adam}, which is the most common adaptive method, and other methods that report better performance than Adam: coin-betting~\cite{orabona2017training}, L4\footnote{L4 applied to momentum SGD (L4 Mom) in \url{https://github.com/iovdin/l4-pytorch} was unstable in our experiments and we omit it from the main paper.}~\cite{rolinek2018l4}, and Adabound~\cite{luo2019adaptive}. We use the default learning rates for the competing methods. Unless stated otherwise, our results are averaged across $5$ independent runs.

\begin{figure}[hbt!]
    \includegraphics[width = \textwidth]{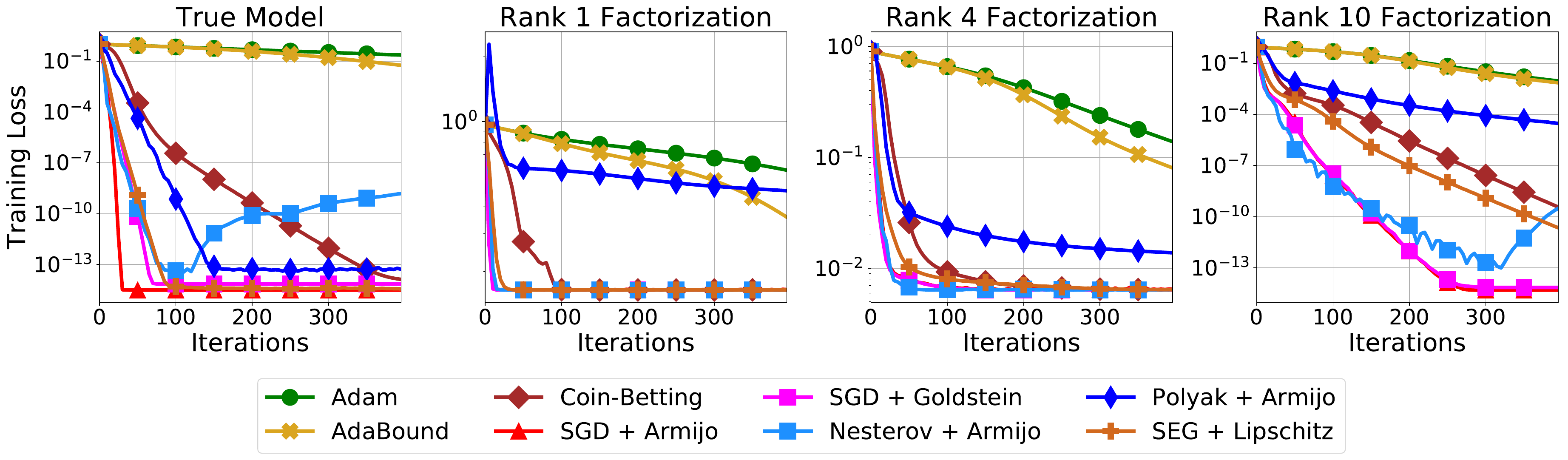}
\caption{Matrix factorization using the true model and rank $1$, $4$, $10$ factorizations. Rank $1$ factorization is under-parametrized, while ranks $4$ and $10$ are over-parametrized. Rank $10$ and the true model satisfy interpolation. }
\label{fig:synthetic}
\end{figure}

\subsection{Synthetic experiment}
\label{sec:experiments-synthetic}
We examine the effect of over-parametrization on convergence rates for the non-convex regression problem: $\min_{W_1, W_2} \E_{x \sim N(0,I)} \norm{W_2 W_1 x - Ax}\kern-.1em{}^2$. This is equivalent to a matrix factorization problem satisfying RSI~\cite{sun2016guaranteed} and has been proposed as a challenging benchmark for gradient descent methods~\cite{rahimi2017reflections}. Following Rolínek et al.~\cite{rolinek2018l4}, we choose $A \in \mathbb{R}^{10 \times 6}$ with condition number $\kappa(A) = 10^{10}$ and generate a fixed dataset of $1000$ samples. Unlike the previous work, we consider stochastic optimization and control the model's expressivity via the rank $k$ of the matrix factors $W_1 \in \mathbb{R}^{k \times 6}$ and $W_2 \in \mathbb{R}^{10 \times k}$. Figure \ref{fig:synthetic} shows plots of training loss (averaged across $20$ runs) for the true data-generating model, and using factors with rank $k \in \{1,4,10\}$. 

We make the following observations: (i) for $k=4$ (where interpolation \emph{does not hold}) the proposed methods converge quicker than other optimizers but all methods reach an artificial optimization floor, (ii) using $k = 10$ yields an over-parametrized model where SGD with both Armijo and Goldstein line-search converge linearly to machine precision, (iii) SEG with Lipschitz line-search obtains fast convergence according to Theorem \ref{thm:seg-min-ls}, and (iv) adaptive-gradient methods stagnate in all cases. These observations validate our theoretical results and show that over-parameterization and line-search can allow for fast, ``painless'' optimization using SGD and SEG.

\begin{figure}[hbt!]
    \includegraphics[width = \textwidth]{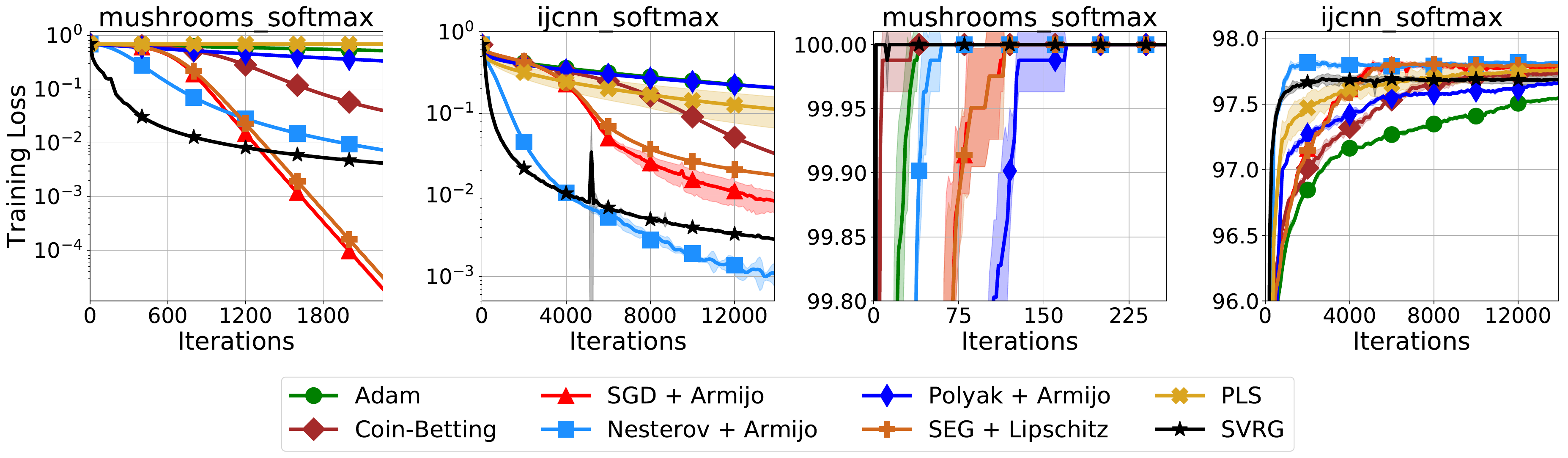}
\caption{Binary classification using a softmax loss and RBF kernels for the mushrooms and ijcnn datasets. Mushrooms is linear separable in kernel-space with the selected kernel bandwidths while ijcnn is \textit{not}. Overall, we observe fast convergence of SGD + Armijo, Nesterov + Armijo, and SEG + Lipschitz for both datasets.}
\label{fig:kernel_results}
\end{figure}

\subsection{Binary classification with kernels}
\label{sec:experiments-kernels}
We consider convex binary classification using RBF kernels without regularization. We experiment with four standard datasets: mushrooms, rcv1, ijcnn, and w8a from LIBSVM~\cite{libsvm}. The mushrooms dataset satisfies the interpolation condition with the selected kernel bandwidths, while ijcnn, rcv1, and w8a do not. For these experiments we also compare against a standard VR method (SVRG)~\cite{johnson2013accelerating} and probabilistic line-search (PLS)~\cite{mahsereci2017probabilistic}.\footnote{PLS is impractical for deep networks since it requires the second moment of the mini-batch gradients and needs GP model inference for every line-search evaluation.} Figure \ref{fig:kernel_results} shows the training loss and test accuracy on mushrooms and ijcnn for the different optimizers with softmax loss. Results for rcv1 and w8a are given in Appendix \ref{app:additional-experiments-kernels}. We make the following observations: (i) SGD + Armijo, Nesterov + Armijo, and SEG + Lipschitz perform the best and are comparable to hand-tuned SVRG. (ii) The proposed line-search methods perform well on ijcnn even though it is not separable in kernel space. This demonstrates some robustness to violations of the interpolation condition.

\begin{figure}[hbt!]
    \centering
    \subfigure{
    \includegraphics[width = .9\textwidth]{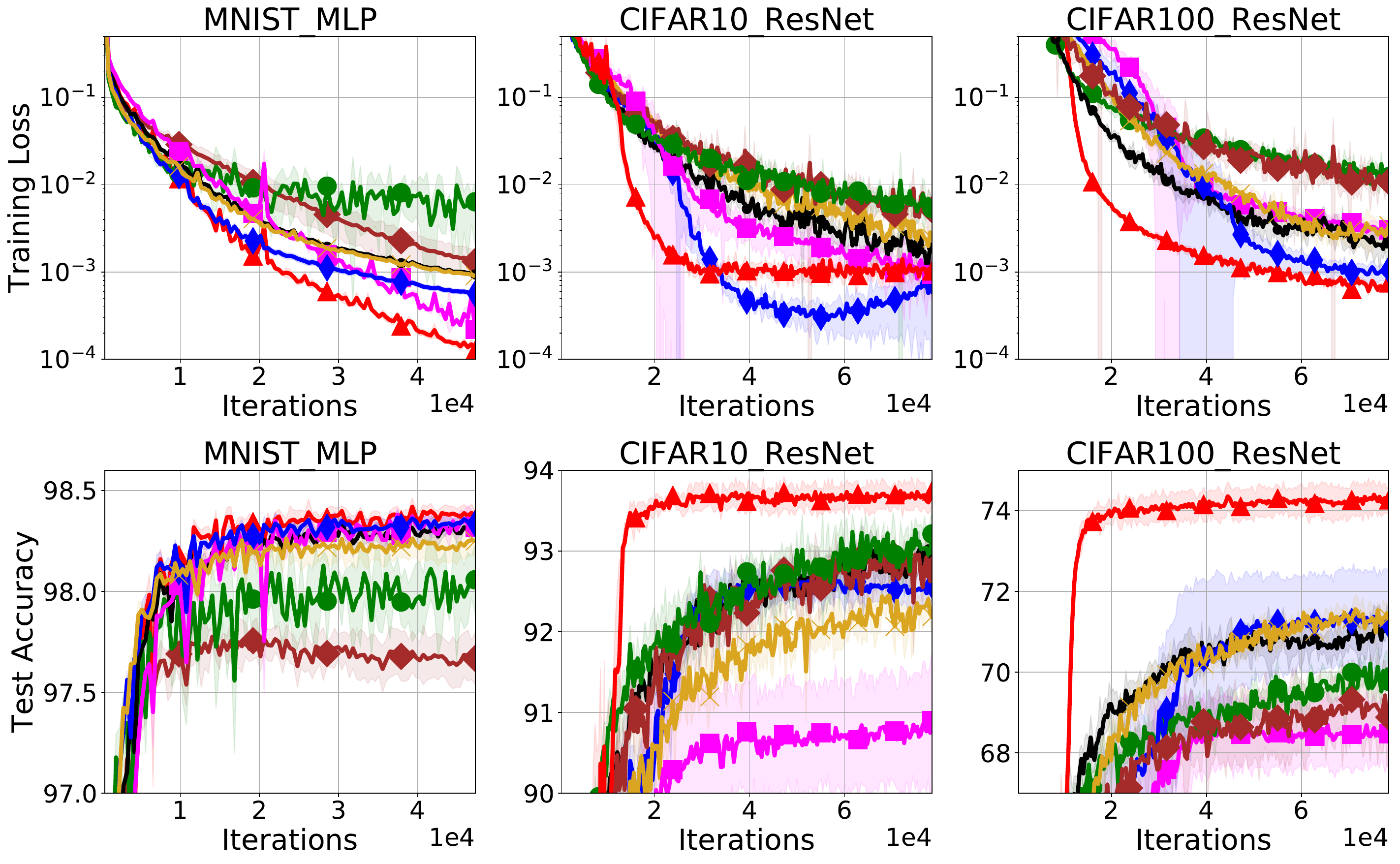}
    }
    \subfigure{
    \includegraphics[width = \textwidth]{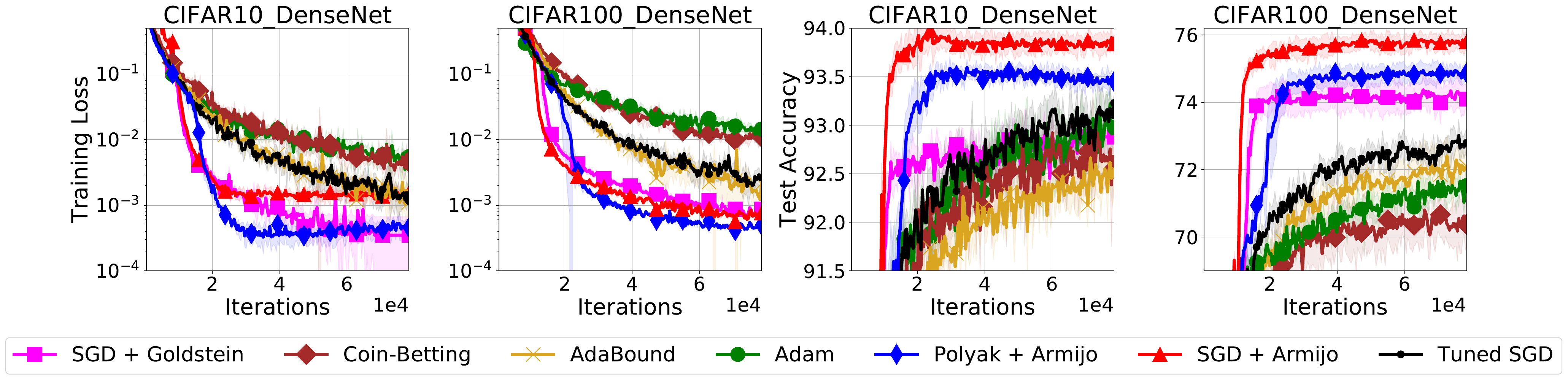}
    }
\caption{Multi-class classification using softmax loss and (top)  an MLP model for MNIST; ResNet model for CIFAR-10 and CIFAR-100 (bottom) DenseNet model for CIFAR-10 and CIFAR-100.}
\label{fig:deep}
\end{figure}

\subsection{Multi-class classification using deep networks}
\label{sec:experiments-deep}
We benchmark the convergence rate and generalization performance of our line-search methods on standard deep learning experiments. We consider non-convex minimization for multi-class classification using deep network models on the MNIST, CIFAR10, and CIFAR100 datasets. Our experimental choices follow the setup in Luo et al.~\cite{luo2019adaptive}. For MNIST, we use a $1$ hidden-layer multi-layer perceptron (MLP) of width $1000$. For CIFAR10 and CIFAR100, we experiment with the standard image-classification architectures: ResNet-34~\cite{he2016deep} and DenseNet-121~\cite{huang2017densely}. We note that promising empirical results for a variety of line-search methods related to those used in our experiments have previously been reported for the CIFAR10 and CIFAR100 datasets for a variety of architectures~\cite{truong2018backtracking}.
We also compare to the best performing constant step-size SGD with the step-size selected by grid search.  

From Figure~\ref{fig:deep}, we observe that: (i) SGD with Armijo line-search consistently leads to the best performance in terms of both the training loss and test accuracy. It also converges to a good solution \emph{much} faster when compared to the other methods. (ii) The performance of SGD with line-search and Polyak momentum is always better than ``tuned'' constant step-size SGD and Adam, whereas that of SGD with Goldstein line-search is competitive across datasets. We omit Nesterov + Armijo as it is unstable and diverges and omit SEG since it resulted in slower convergence and worse performance.

\begin{figure}[hbt!]
    \centering
    \subfigure{
    \includegraphics[width = .23\textwidth]{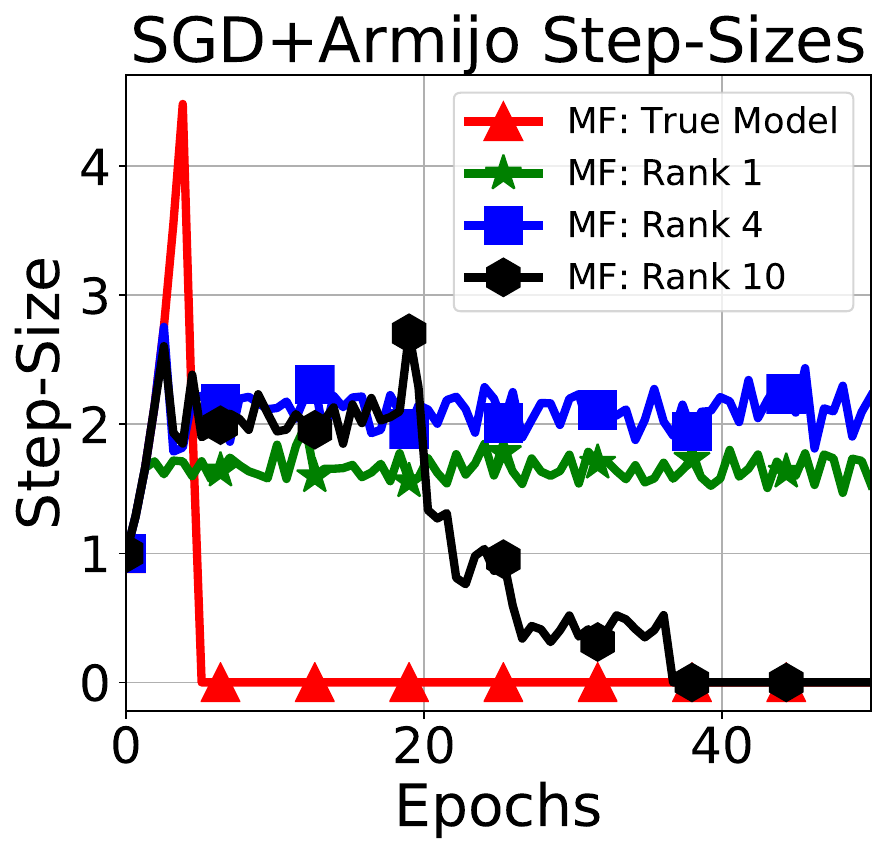}
    }
    \subfigure{
    \includegraphics[width = .23\textwidth]{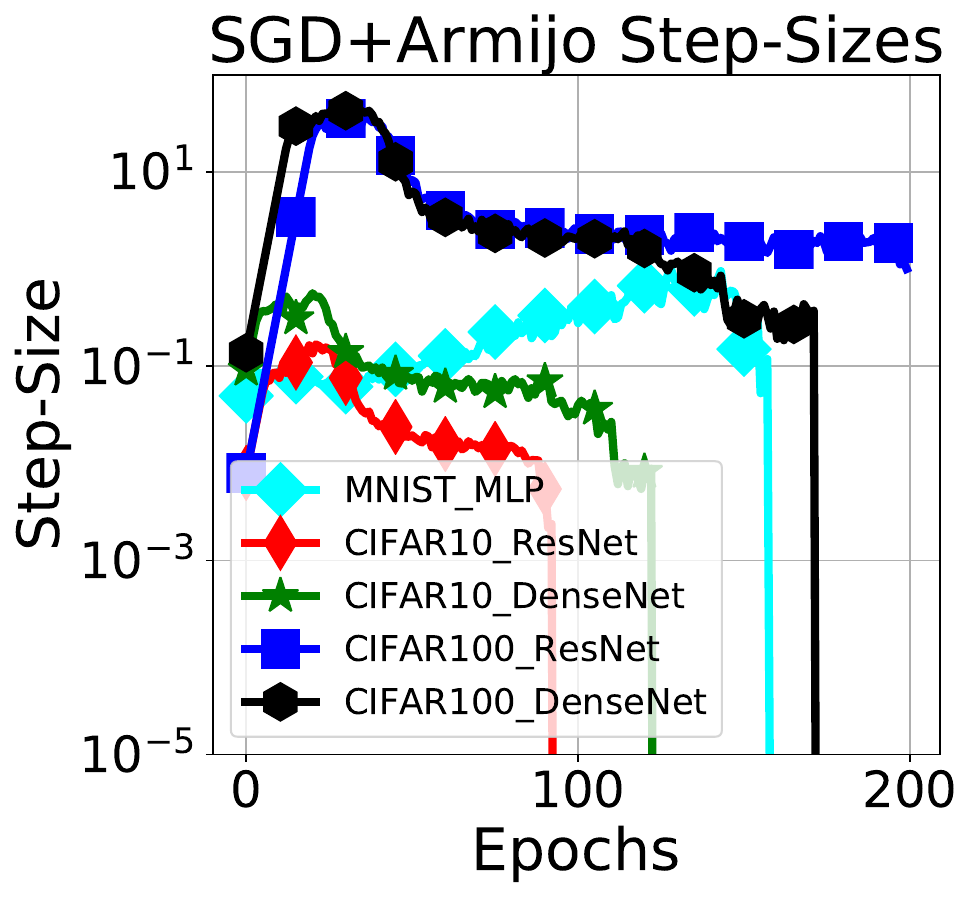}
    }
    \subfigure{
    \includegraphics[width = 0.23\textwidth]{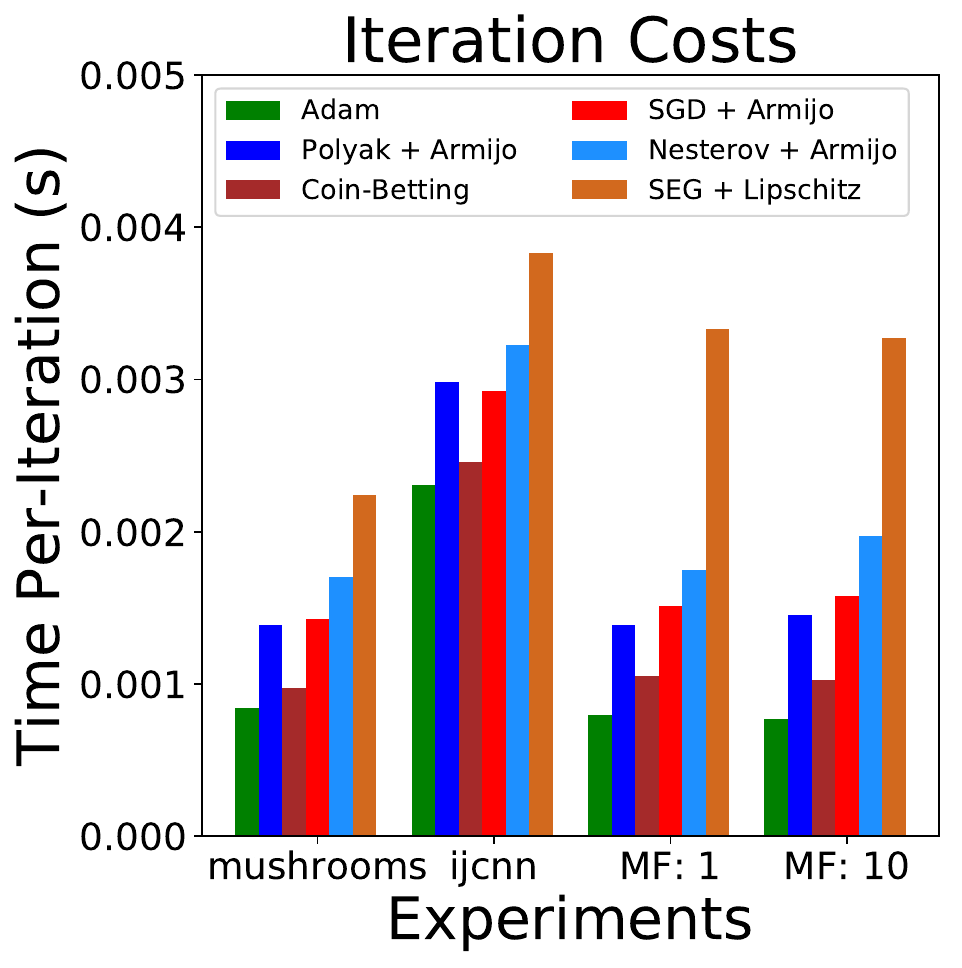}
    }
    \subfigure{
    \includegraphics[width = 0.23\textwidth]{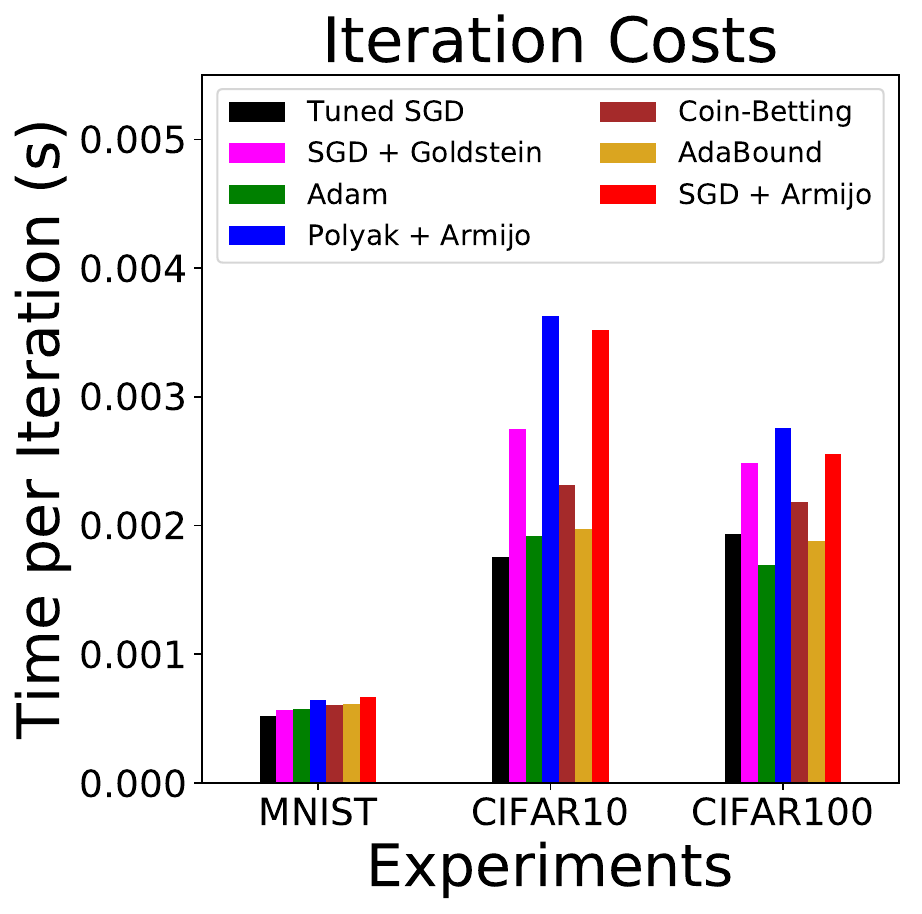}
    }
\caption{(Left) Variation in step-sizes for SGD+Armijo for the matrix factorization problem and classification with deep neural networks. (Right) Average time per iteration.}
\label{fig:time-step}
\vspace{1ex}
\end{figure}

We also verify that our line-search methods do not lead to excessive backtracking and function evaluations. Figure~\ref{fig:time-step}  (right) shows the cost per iteration for the above experiments. Our line-searches methods are only marginally slower than Adam and converge much faster. In practice, we observed SGD+Armijo uses only one additional function evaluation on average. Figure~\ref{fig:time-step}  (left) shows the evolution of step-sizes for SGD+Armijo in our experiments. For deep neural networks, SGD+Armijo automatically finds a step-size schedule resembling cosine-annealing~\cite{loshchilov2016sgdr}. In Appendix~\ref{app:experiments-robustness-computation}, we evaluate and compare the hyper-parameter sensitivity of Adam, constant step-size SGD, and SGD with Armijo line-search on CIFAR10 with ResNet-34. While SGD is sensitive to the choice of the step-size, the performance of SGD with Armijo line-search is robust to the value of $c$ in the ${[0.1,0.5}]$ range. There is virtually no effect of $\eta_\text{max}$, since the correct range of step-sizes is found in early iterations.

\section{Conclusion}
\label{sec:conclusion}
We showed that under the interpolation condition satisfied by modern over-parametrized models, simple line-search techniques for classic SGD and SEG lead to fast convergence in both theory and practice. For future work, we hope to strengthen our results for non-convex minimization using SGD with line-search and study stochastic momentum techniques under interpolation. More generally, we hope to utilize the rich literature on line-search and trust-region methods to improve stochastic optimization for machine learning. 
\subsubsection*{Acknowledgments} %
\label{par:paragraph_name}
We would like to thank Yifan Sun and Nicolas Le Roux for insightful discussions and Nicolas Loizou and Frederik Kunstner for their help with the proofs. AM is supported by the NSERC CGS M award. IL is funded by the UBC Four-Year Doctoral Fellowships (4YF),  This research was also partially supported by the Canada CIFAR AI Chair Program, the CIFAR LMB Program, by a Google Focused Research award, by an IVADO postdoctoral scholarship (for SV), by a Borealis AI fellowship (for GG), by the Canada Excellence Research Chair in "Data Science for Realtime Decision-making" and by the NSERC Discovery Grants RGPIN-2017-06936 and 2015-06068.
\bibliographystyle{plain}
\bibliography{ref}

\begin{thebibliography}{10}

\bibitem{allen2017katyusha}
Zeyuan Allen-Zhu.
\newblock Katyusha: The first direct acceleration of stochastic gradient
  methods.
\newblock In {\em ACM SIGACT Symposium on Theory of Computing}, 2017.

\bibitem{almeida1998parameter}
Lu{\'\i}s Almeida.
\newblock Parameter adaptation in stochastic optimization.
\newblock {\em On-line learning in neural networks}, 1998.

\bibitem{armijo1966minimization}
Larry Armijo.
\newblock Minimization of functions having {L}ipschitz continuous first partial
  derivatives.
\newblock {\em Pacific Journal of mathematics}, 1966.

\bibitem{bach2012optimization}
Francis Bach, Rodolphe Jenatton, Julien Mairal, Guillaume Obozinski, et~al.
\newblock Optimization with sparsity-inducing penalties.
\newblock {\em Foundations and Trends in Machine Learning}, 2012.

\bibitem{bassily2018exponential}
Raef Bassily, Mikhail Belkin, and Siyuan Ma.
\newblock On exponential convergence of {SGD} in non-convex over-parametrized
  learning.
\newblock {\em arXiv preprint arXiv:1811.02564}, 2018.

\bibitem{baydin2017online}
Atilim~Gunes Baydin, Robert Cornish, David~Martinez Rubio, Mark Schmidt, and
  Frank Wood.
\newblock Online learning rate adaptation with hypergradient descent.
\newblock In {\em ICLR}, 2017.

\bibitem{belkin2019does}
Mikhail Belkin, Alexander Rakhlin, and Alexandre~B. Tsybakov.
\newblock Does data interpolation contradict statistical optimality?
\newblock In {\em AISTATS}, 2019.

\bibitem{ben2009robust}
Aharon Ben-Tal, Laurent El~Ghaoui, and Arkadi Nemirovski.
\newblock {\em Robust optimization}.
\newblock Princeton University Press, 2009.

\bibitem{bengio2012practical}
Yoshua Bengio.
\newblock Practical recommendations for gradient-based training of deep
  architectures.
\newblock In {\em Neural networks: Tricks of the trade}. Springer, 2012.

\bibitem{berrada2019training}
Leonard Berrada, Andrew Zisserman, and M~Pawan Kumar.
\newblock Training neural networks for and by interpolation.
\newblock {\em arXiv preprint arXiv:1906.05661}, 2019.

\bibitem{blanchet2019convergence}
Jose Blanchet, Coralia Cartis, Matt Menickelly, and Katya Scheinberg.
\newblock Convergence rate analysis of a stochastic trust region method via
  supermartingales.
\newblock {\em Informs Journal on Optimization}, 2019.

\bibitem{bollapragada2018progressive}
Raghu Bollapragada, Jorge Nocedal, Dheevatsa Mudigere, Hao-Jun Shi, and Ping
  Tak~Peter Tang.
\newblock A progressive batching l-bfgs method for machine learning.
\newblock In {\em ICML}, 2018.

\bibitem{byrd2012sample}
Richard~H Byrd, Gillian~M Chin, Jorge Nocedal, and Yuchen Wu.
\newblock Sample size selection in optimization methods for machine learning.
\newblock {\em Mathematical programming}, 2012.

\bibitem{cevher2018linear}
Volkan Cevher and Bang~C{\^o}ng V{\~u}.
\newblock On the linear convergence of the stochastic gradient method with
  constant step-size.
\newblock {\em Optimization Letters}, 2018.

\bibitem{libsvm}
Chih-Chung Chang and Chih-Jen Lin.
\newblock {LIBSVM}: A library for support vector machines.
\newblock {\em ACM Transactions on Intelligent Systems and Technology}, 2011.
\newblock Software available at \url{http://www.csie.ntu.edu.tw/~cjlin/libsvm}.

\bibitem{chavdarova2019reducing}
Tatjana Chavdarova, Gauthier Gidel, Fran{\c{c}}ois Fleuret, and Simon
  Lacoste{-}Julien.
\newblock Reducing noise in {GAN} training with variance reduced extragradient.
\newblock In {\em NeurIPS}, pages 391--401, 2019.

\bibitem{chen2015solving}
Yuxin Chen and Emmanuel Candes.
\newblock Solving random quadratic systems of equations is nearly as easy as
  solving linear systems.
\newblock In {\em NeurIPS}, 2015.

\bibitem{de2016big}
Soham De, Abhay Yadav, David Jacobs, and Tom Goldstein.
\newblock Big batch {SGD}: Automated inference using adaptive batch sizes.
\newblock {\em arXiv preprint arXiv:1610.05792}, 2016.

\bibitem{defazio2014saga}
Aaron Defazio, Francis Bach, and Simon Lacoste-Julien.
\newblock Saga: A fast incremental gradient method with support for
  non-strongly convex composite objectives.
\newblock In {\em NeurIPS}, 2014.

\bibitem{defazio2018ineffectiveness}
Aaron Defazio and L{\'{e}}on Bottou.
\newblock On the ineffectiveness of variance reduced optimization for deep
  learning.
\newblock In {\em NeurIPS}, pages 1753--1763, 2019.

\bibitem{deylon1993accelerated}
Bernard Delyon and Anatoli Juditsky.
\newblock Accelerated stochastic approximation.
\newblock {\em {SIAM} Journal on Optimization}, 1993.

\bibitem{duchi2011adaptive}
John Duchi, Elad Hazan, and Yoram Singer.
\newblock Adaptive subgradient methods for online learning and stochastic
  optimization.
\newblock {\em JMLR}, 2011.

\bibitem{friedlander2012hybrid}
Michael~P Friedlander and Mark Schmidt.
\newblock Hybrid deterministic-stochastic methods for data fitting.
\newblock {\em SIAM Journal on Scientific Computing}, 2012.

\bibitem{frostig2015regularizing}
Roy Frostig, Rong Ge, Sham Kakade, and Aaron Sidford.
\newblock Un-regularizing: approximate proximal point and faster stochastic
  algorithms for empirical risk minimization.
\newblock In {\em ICML}, 2015.

\bibitem{gidel2018variational}
Gauthier Gidel, Hugo Berard, Ga{\"e}tan Vignoud, Pascal Vincent, and Simon
  Lacoste-Julien.
\newblock A variational inequality perspective on generative adversarial
  networks.
\newblock In {\em ICLR}, 2019.

\bibitem{goodfellow2016nips}
Ian Goodfellow.
\newblock Nips 2016 tutorial: Generative adversarial networks.
\newblock {\em arXiv preprint arXiv:1701.00160}, 2016.

\bibitem{gratton2017complexity}
Serge Gratton, Cl{\'e}ment~W Royer, Lu{\'\i}s~N Vicente, and Zaikun Zhang.
\newblock Complexity and global rates of trust-region methods based on
  probabilistic models.
\newblock {\em IMA Journal of Numerical Analysis}, 38(3):1579--1597, 2017.

\bibitem{harker1990finite}
Patrick~T Harker and Jong-Shi Pang.
\newblock Finite-dimensional variational inequality and nonlinear
  complementarity problems: a survey of theory, algorithms and applications.
\newblock {\em Mathematical programming}, 1990.

\bibitem{he2016deep}
Kaiming He, Xiangyu Zhang, Shaoqing Ren, and Jian Sun.
\newblock Deep residual learning for image recognition.
\newblock In {\em CVPR}, 2016.

\bibitem{huang2017densely}
Gao Huang, Zhuang Liu, Laurens Van Der~Maaten, and Kilian~Q Weinberger.
\newblock Densely connected convolutional networks.
\newblock In {\em CVPR}, 2017.

\bibitem{iusem2019variance}
Alfredo~N Iusem, Alejandro Jofr{\'e}, Roberto~I Oliveira, and Philip Thompson.
\newblock Variance-based extragradient methods with line search for stochastic
  variational inequalities.
\newblock {\em SIAM Journal on Optimization}, 2019.

\bibitem{iusem2017extragradient}
AN~Iusem, Alejandro Jofr{\'e}, Roberto~I Oliveira, and Philip Thompson.
\newblock Extragradient method with variance reduction for stochastic
  variational inequalities.
\newblock {\em SIAM Journal on Optimization}, 2017.

\bibitem{iusem1997variant}
AN~Iusem and BF~Svaiter.
\newblock A variant of korpelevich’s method for variational inequalities with
  a new search strategy.
\newblock {\em Optimization}, 1997.

\bibitem{jain2017accelerating}
Prateek Jain, Sham Kakade, Rahul Kidambi, Praneeth Netrapalli, and Aaron
  Sidford.
\newblock Accelerating stochastic gradient descent for least squares
  regression.
\newblock In {\em COLT}, 2018.

\bibitem{joachims2005support}
Thorsten Joachims.
\newblock A support vector method for multivariate performance measures.
\newblock In {\em ICML}, 2005.

\bibitem{johnson2013accelerating}
Rie Johnson and Tong Zhang.
\newblock Accelerating stochastic gradient descent using predictive variance
  reduction.
\newblock In {\em NeurIPS}, 2013.

\bibitem{juditsky2011solving}
Anatoli Juditsky, Arkadi Nemirovski, and Claire Tauvel.
\newblock Solving variational inequalities with stochastic mirror-prox
  algorithm.
\newblock {\em Stochastic Systems}, 2011.

\bibitem{karimi2016linear}
Hamed Karimi, Julie Nutini, and Mark Schmidt.
\newblock Linear convergence of gradient and proximal-gradient methods under
  the {P}olyak-{{\L}}ojasiewicz condition.
\newblock In {\em Joint European Conference on Machine Learning and Knowledge
  Discovery in Databases}, 2016.

\bibitem{khobotov1987modification}
Evgenii~Nikolaevich Khobotov.
\newblock Modification of the extra-gradient method for solving variational
  inequalities and certain optimization problems.
\newblock {\em USSR Computational Mathematics and Mathematical Physics}, 1987.

\bibitem{kingma2014adam}
Diederik Kingma and Jimmy Ba.
\newblock Adam: {A} method for stochastic optimization.
\newblock In {\em {ICLR}}, 2015.

\bibitem{kleinberg2018alternative}
Robert Kleinberg, Yuanzhi Li, and Yang Yuan.
\newblock An alternative view: When does {SGD} escape local minima?
\newblock In {\em ICML}, 2018.

\bibitem{korpelevich1976extragradient}
GM~Korpelevich.
\newblock The extragradient method for finding saddle points and other
  problems.
\newblock {\em Matecon}, 1976.

\bibitem{krejic2013line}
Nata{\v{s}}a Kreji{\'c} and Nata{\v{s}}a Krklec.
\newblock Line search methods with variable sample size for unconstrained
  optimization.
\newblock {\em Journal of Computational and Applied Mathematics}, 2013.

\bibitem{kushner1995stochastic}
Harold~J Kushner and Jichuan Yang.
\newblock Stochastic approximation with averaging and feedback: Rapidly
  convergent" on-line" algorithms.
\newblock {\em IEEE Transactions on Automatic Control}, 1995.

\bibitem{li2019convergence}
Xiaoyu Li and Francesco Orabona.
\newblock On the convergence of stochastic gradient descent with adaptive
  stepsizes.
\newblock In {\em AISTATS}, 2019.

\bibitem{li2017convergence}
Yuanzhi Li and Yang Yuan.
\newblock Convergence analysis of two-layer neural networks with {ReLU}
  activation.
\newblock In {\em NeurIPS}, 2017.

\bibitem{liang2018just}
Tengyuan Liang and Alexander Rakhlin.
\newblock Just interpolate: Kernel" ridgeless" regression can generalize.
\newblock {\em arXiv preprint arXiv:1808.00387}, 2018.

\bibitem{lin2015universal}
Hongzhou Lin, Julien Mairal, and Zaid Harchaoui.
\newblock A universal catalyst for first-order optimization.
\newblock In {\em NeurIPS}, 2015.

\bibitem{liu2018mass}
Chaoyue Liu and Mikhail Belkin.
\newblock Accelerating stochastic training for over-parametrized learning.
\newblock {\em arXiv preprint arXiv:1810.13395}, 2019.

\bibitem{loizou2017linearly}
Nicolas Loizou and Peter Richt{\'a}rik.
\newblock Linearly convergent stochastic heavy ball method for minimizing
  generalization error.
\newblock {\em arXiv preprint arXiv:1710.10737}, 2017.

\bibitem{loizou2017momentum}
Nicolas Loizou and Peter Richt{\'a}rik.
\newblock Momentum and stochastic momentum for stochastic gradient, newton,
  proximal point and subspace descent methods.
\newblock {\em arXiv preprint arXiv:1712.09677}, 2017.

\bibitem{loshchilov2016sgdr}
Ilya Loshchilov and Frank Hutter.
\newblock {SGDR}: Stochastic gradient descent with warm restarts.
\newblock {\em arXiv preprint arXiv:1608.03983}, 2016.

\bibitem{luo2019adaptive}
Liangchen Luo, Yuanhao Xiong, Yan Liu, and Xu~Sun.
\newblock Adaptive gradient methods with dynamic bound of learning rate.
\newblock In {\em ICLR}, 2019.

\bibitem{ma2018power}
Siyuan Ma, Raef Bassily, and Mikhail Belkin.
\newblock The power of interpolation: Understanding the effectiveness of {SGD}
  in modern over-parametrized learning.
\newblock In {\em ICML}, 2018.

\bibitem{mahsereci2017probabilistic}
Maren Mahsereci and Philipp Hennig.
\newblock Probabilistic line searches for stochastic optimization.
\newblock {\em JMLR}, 2017.

\bibitem{mescheder2017numerics}
Lars Mescheder, Sebastian Nowozin, and Andreas Geiger.
\newblock The numerics of {GAN}s.
\newblock In {\em NeurIPS}, 2017.

\bibitem{nemirovski2004prox}
Arkadi Nemirovski.
\newblock Prox-method with rate of convergence ${O}(1/t)$ for variational
  inequalities with {L}ipschitz continuous monotone operators and smooth
  convex-concave saddle point problems.
\newblock {\em SIAM Journal on Optimization}, 2004.

\bibitem{nemirovski2009robust}
Arkadi Nemirovski, Anatoli Juditsky, Guanghui Lan, and Alexander Shapiro.
\newblock Robust stochastic approximation approach to stochastic programming.
\newblock {\em SIAM Journal on optimization}, 2009.

\bibitem{nesterov2007gradient}
Yu~Nesterov.
\newblock Gradient methods for minimizing composite functions.
\newblock {\em Mathematical Programming}, 2013.

\bibitem{nesterov2013introductory}
Yurii Nesterov.
\newblock {\em Introductory lectures on convex optimization: A basic course}.
\newblock Springer Science \& Business Media, 2004.

\bibitem{nocedal2006numerical}
Jorge Nocedal and Stephen Wright.
\newblock {\em Numerical optimization}.
\newblock Springer Science \& Business Media, 2006.

\bibitem{orabona2017training}
Francesco Orabona and Tatiana Tommasi.
\newblock Training deep networks without learning rates through coin betting.
\newblock In {\em NeurIPS}, 2017.

\bibitem{palaniappan2016stochastic}
Balamurugan Palaniappan and Francis Bach.
\newblock Stochastic variance reduction methods for saddle-point problems.
\newblock In {\em NeurIPS}, 2016.

\bibitem{paquette2018stochastic}
Courtney Paquette and Katya Scheinberg.
\newblock A stochastic line search method with convergence rate analysis.
\newblock {\em arXiv preprint arXiv:1807.07994}, 2018.

\bibitem{plagianakos2001learning}
VP~Plagianakos, GD~Magoulas, and MN~Vrahatis.
\newblock Learning rate adaptation in stochastic gradient descent.
\newblock In {\em Advances in convex analysis and global optimization}.
  Springer, 2001.

\bibitem{polyak1964some}
Boris~T Polyak.
\newblock Some methods of speeding up the convergence of iteration methods.
\newblock {\em USSR Computational Mathematics and Mathematical Physics}, 1964.

\bibitem{polyak1963gradient}
Boris~Teodorovich Polyak.
\newblock Gradient methods for minimizing functionals.
\newblock {\em Zhurnal Vychislitel'noi Matematiki i Matematicheskoi Fiziki},
  1963.

\bibitem{rahimi2017reflections}
Ali Rahimi and Ben Recht.
\newblock Reflections on random kitchen sinks, 2017.

\bibitem{reddi2019convergence}
Sashank~J Reddi, Satyen Kale, and Sanjiv Kumar.
\newblock On the convergence of {Adam} and beyond.
\newblock In {\em ICLR}, 2019.

\bibitem{rolinek2018l4}
Michal Rolinek and Georg Martius.
\newblock {L4:} practical loss-based stepsize adaptation for deep learning.
\newblock In {\em NeurIPS}, 2018.

\bibitem{schapire1998boosting}
Robert~E Schapire, Yoav Freund, Peter Bartlett, Wee~Sun Lee, et~al.
\newblock Boosting the margin: A new explanation for the effectiveness of
  voting methods.
\newblock {\em The annals of statistics}, 1998.

\bibitem{schaul2013no}
Tom Schaul, Sixin Zhang, and Yann LeCun.
\newblock No more pesky learning rates.
\newblock In {\em ICML}, 2013.

\bibitem{schmidt2015non}
Mark Schmidt, Reza Babanezhad, Mohamed Ahmed, Aaron Defazio, Ann Clifton, and
  Anoop Sarkar.
\newblock Non-uniform stochastic average gradient method for training
  conditional random fields.
\newblock In {\em AISTATS}, 2015.

\bibitem{schmidt2017minimizing}
Mark Schmidt, Nicolas Le~Roux, and Francis Bach.
\newblock Minimizing finite sums with the stochastic average gradient.
\newblock {\em Mathematical Programming}, 2017.

\bibitem{schmidt2013fast}
Mark Schmidt and Nicolas~Le Roux.
\newblock Fast convergence of stochastic gradient descent under a strong growth
  condition.
\newblock {\em arXiv preprint arXiv:1308.6370}, 2013.

\bibitem{schoenauer2017stochastic}
Alice Schoenauer-Sebag, Marc Schoenauer, and Mich{\`e}le Sebag.
\newblock Stochastic gradient descent: Going as fast as possible but not
  faster.
\newblock {\em arXiv preprint arXiv:1709.01427}, 2017.

\bibitem{schraudolph1999local}
Nicol Schraudolph.
\newblock Local gain adaptation in stochastic gradient descent.
\newblock 1999.

\bibitem{shang2018guaranteed}
Fanhua Shang, Yuanyuan Liu, Kaiwen Zhou, James Cheng, Kelvin Ng, and Yuichi
  Yoshida.
\newblock Guaranteed sufficient decrease for stochastic variance reduced
  gradient optimization.
\newblock In {\em AISTATS}, 2018.

\bibitem{shao2000rates}
S~Shao and Percy Yip.
\newblock Rates of convergence of adaptive step-size of stochastic
  approximation algorithms.
\newblock {\em Journal of mathematical analysis and applications}, 2000.

\bibitem{soltanolkotabi2018theoretical}
Mahdi Soltanolkotabi, Adel Javanmard, and Jason Lee.
\newblock Theoretical insights into the optimization landscape of
  over-parameterized shallow neural networks.
\newblock {\em IEEE Transactions on Information Theory}, 2018.

\bibitem{sun2016guaranteed}
Ruoyu Sun and Zhi-Quan Luo.
\newblock Guaranteed matrix completion via non-convex factorization.
\newblock {\em IEEE Transactions on Information Theory}, 2016.

\bibitem{sutskever2013importance}
Ilya Sutskever, James Martens, George Dahl, and Geoffrey Hinton.
\newblock On the importance of initialization and momentum in deep learning.
\newblock In {\em ICML}, 2013.

\bibitem{tan2016barzilai}
Conghui Tan, Shiqian Ma, Yu{-}Hong Dai, and Yuqiu Qian.
\newblock {B}arzilai-{B}orwein step size for stochastic gradient descent.
\newblock In {\em NeurIPS}, 2016.

\bibitem{tieleman2012lecture}
Tijmen Tieleman and Geoffrey Hinton.
\newblock Lecture 6.5-rmsprop: Divide the gradient by a running average of its
  recent magnitude.
\newblock {\em Coursera: Neural networks for machine learning}, 2012.

\bibitem{truong2018backtracking}
Tuyen~Trung Truong and Tuan~Hang Nguyen.
\newblock Backtracking gradient descent method for general {C}$^{1}$ functions,
  with applications to deep learning.
\newblock {\em arXiv preprint arXiv:1808.05160}, 2018.

\bibitem{tseng1998incremental}
Paul Tseng.
\newblock An incremental gradient (-projection) method with momentum term and
  adaptive stepsize rule.
\newblock {\em SIAM Journal on Optimization}, 1998.

\bibitem{vaswani2019fast}
Sharan Vaswani, Francis Bach, and Mark Schmidt.
\newblock Fast and faster convergence of {SGD} for over-parameterized models
  and an accelerated perceptron.
\newblock In {\em AISTATS}, 2019.

\bibitem{wen2014robust}
Junfeng Wen, Chun-Nam Yu, and Russell Greiner.
\newblock Robust learning under uncertain test distributions: Relating
  covariate shift to model misspecification.
\newblock In {\em ICML}, 2014.

\bibitem{yadav2017stabilizing}
Abhay Yadav, Sohil Shah, Zheng Xu, David Jacobs, and Tom Goldstein.
\newblock Stabilizing adversarial nets with prediction methods.
\newblock {\em arXiv preprint arXiv:1705.07364}, 2017.

\bibitem{yu2006fast}
Jin Yu, Douglas Aberdeen, and Nicol Schraudolph.
\newblock Fast online policy gradient learning with smd gain vector adaptation.
\newblock In {\em NeurIPS}, 2006.

\bibitem{zeiler2012adadelta}
Matthew Zeiler.
\newblock {ADADELTA}: an adaptive learning rate method.
\newblock {\em arXiv preprint arXiv:1212.5701}, 2012.

\bibitem{zhang2016understanding}
Chiyuan Zhang, Samy Bengio, Moritz Hardt, Benjamin Recht, and Oriol Vinyals.
\newblock Understanding deep learning requires rethinking generalization.
\newblock In {\em ICLR}, 2017.

\bibitem{zhang2017yellowfin}
Jian Zhang and Ioannis Mitliagkas.
\newblock Yellowfin and the art of momentum tuning.
\newblock {\em arXiv preprint arXiv:1706.03471}, 2017.

\end{thebibliography}

\clearpage
\newgeometry{margin=0.65in}
\appendix
\section{Proof of Lemma~\ref{lemma:eta-bounds}}
\label{app:lemmas}
\begin{proof}
\begin{align*}
\intertext{From the smoothness of $\fk$ and the update rule, the following inequality holds for all values of $\etak$.} 
\fk(\xkk) & \leq \fk(\xk) - \left(\etak - \frac{\Lk \etak^{2}}{2} \right) \normsq{ \gradk{\xk} } \\
\intertext{The step-size returned by the line-search satisfies Equation~\ref{eq:c-ls}, implying that, }
\fk(\xkk) & \leq \fk(\xk) - c \, \etak \, \normsq{ \gradk{\xk} } 
\intertext{Using the above relations, the step-size returned by the line-search satisfies the following inequality,}
c \; \etak & \geq \left(\etak - \frac{\Lk \etak^{2}}{2} \right) \\
\implies \etak & \geq \frac{2 \; (1-c)}{\Lk}
\intertext{This gives us a lower bound on $\etak$.}
\end{align*}
\begin{align*}
\intertext{Let us now upper-bound $\etak$. Using Equation~\ref{eq:c-ls},}
\implies \etak & \leq \frac{\left[ \fk(\xk) - \fk(\xkk) \right] }{c \normsq{ \gradk{x_k}  } } \\
\etak & \leq \frac{\left[ \fk(\xk) - \fk(\xopt) + \fk(\xopt) - \fk(\xkk) \right] }{c \normsq{ \gradk{\x_k}  } } \\
\intertext{By the interpolation condition, $\fk(\xopt) \leq \fk(\x)$ for all functions $i_k$ and points $\x$, $\implies \fk(\xopt) - \fk(\xkk) \leq 0$}
\implies \etak & \leq \frac{\left[ \fk(\xk) - \fk(\xopt) \right] }{c \normsq{ \gradk{\x_k}  } } \\
\intertext{By definition, $\etak \leq \eta_{\text{max}}$. Furthermore, if we each $\fk(\cdot)$ satisfies the either strong-convexity or the Polyak-Lojasiewicz (PL) inequality ~\cite{polyak1963gradient,karimi2016linear} (which is weaker than strong-convexity and does not require convexity), then,}
\fk(\xk) - \fk(\xopt_{k}) & \leq \frac{1}{2 \muk} \normsq{  \gradk{\xk}  } \\
\implies \fk(\xk) - \fk(\xkk) & \leq \frac{1}{2 \muk} \normsq{  \gradk{\xk}  } \\
\fk(\xk) - \fk(\xopt) & \leq \frac{1}{2 \muk} \normsq{  \gradk{\xk}  } & \tag{Using the interpolation condition} \\
\implies \fk(\xk) - \fk(\xkk) & \leq \frac{1}{2 \muk} \normsq{\gradk{\xk}}  & \tag{Since, $\fk(\xopt) \leq \fk(\xkk)$.} \\
\implies c \cdot \etak & \leq \frac{\normsq{\gradk{\xk}}}{ 2 \muk \normsq{ \gradk{\xk}  } } & \tag{From the above relation on $\etak$.} \\
\implies c \cdot \etak & \leq \frac{1}{2 \muk} \\
\intertext{Thus, the step-size returned by the line-search satisfies the relation $\etak \leq \min\{\frac{1}{2 c \cdot \muk}, \eta_{\text{max}}\}$.}
\intertext{From the above relations,}
\etak & \in \left[\min \left\{\frac{2 \; (1-c)}{\Lk}, \eta_{\text{max}} \right\}, \min \left\{\frac{1}{2 c \cdot \muk}, \eta_{\text{max}} \right\} \right] & \nonumber
\end{align*}
\end{proof}
\section{Proof for Theorem~\ref{thm:sc}}
\label{app:sc}
\begin{proof}
\begin{align*}
\normsq{\xkk - \xopt} & = \normsq{\xk - \etak \gradk{\xk} - \xopt}  \\
\normsq{\xkk - \xopt} & = \normsq{\xk - \xopt} - 2 \eta_{k} \langle \gradk{\xk}, \xk - \xopt \rangle + \eta_{k}^2 \normsq{\gradk{\xk}} \\
\intertext{Using strong-convexity of $\fk(\cdot)$ (and setting $\muk = 0$ if the $f_{i_k}$ is not strongly-convex),}
- \langle \gradk{\xk}, \xk - \xopt \rangle &\leq \fk(\xopt) - \fk(\xk) - \frac{\muk}{2} \normsq{\xk - \xopt} \\
\implies \normsq{\xkk - \xopt} & \leq  \normsq{\xk - \xopt} + 2 \eta_{k} \left[\fk(\xopt) - \fk(\xk) - \frac{\muk}{2} \normsq{\xk - \xopt} \right] + \eta_{k}^2 \normsq{\gradk{\xk}} \\
& = \normsq{\xk - \xopt} + 2 \eta_{k} \left[\fk(\xopt) - \fk(\xk) \right] -\muk \etak\normsq{\xk - \xopt}  + \eta_{k}^2 \normsq{\gradk{\xk}} \\
\implies \normsq{\xkk - \xopt} & \leq \left(1 - \muk \etak \right)\normsq{\xk - \xopt} + 2 \eta_{k} \left[\fk(\xopt) - \fk(\xk) \right] + \eta_{k}^2 \normsq{\gradk{\xk}} \\
\intertext{Using Equation~\ref{eq:c-ls},}
\eta_{k}^2 \normsq{\gradk{\xk}} &\leq \frac{\etak}{c} \left[ \fk(\xk) - \fk(\xkk) \right] \\
\implies \normsq{\xkk - \xopt} & \leq \left(1 - \muk \etak \right)\normsq{\xk - \xopt} + 2 \eta_{k} \left[\fk(\xopt) - \fk(\xk) \right] +  \frac{\etak}{c} \left[ \fk(\xk) - \fk(\xkk) \right] \\
\intertext{The interpolation condition implies that $\xopt$ is the minimum for all functions $\fj$, implying that for all $i$, $\fj(\xopt) \leq \fj(\xkk)$.}
\normsq{\xkk - \xopt} & \leq \left(1 - \muk \etak \right)\normsq{\xk - \xopt} + 2 \eta_{k} \left[\fk(\xopt) - \fk(\xk) \right] +  \frac{\etak}{c} \left[ \fk(\xk) - \fk(\xopt) \right] \\
& = \left(1 - \muk \etak \right)\normsq{\xk - \xopt} + \left( 2 \eta_{k} - \frac{\etak}{c} \right) \, \left[\fk(\xopt) - \fk(\xk) \right]  \\
\intertext{The term $\left[\fk(\xopt) - \fk(\xk) \right]$ is negative. Let $c \geq \frac{1}{2} \implies \left( 2 \eta_{k} - \frac{\etak}{c} \right) \geq 0$ for all $\etak$.}
\implies \normsq{\xkk - \xopt} & \leq \left(1 - \muk \etak \right)\normsq{\xk - \xopt} \\
\intertext{Taking expectation wrt to $i_k$,}
\implies \E \left[\normsq{\xkk - \xopt} \right] & \leq \E_{ik} \left[ \left(1 - \muk \etak \right)\normsq{\xk - \xopt} \right] \\
& = \left(1 - \E_{ik} \left[ \muk \etak \right] \right) \normsq{\xk - \xopt}  \\
& \leq \left(1 - \E_{ik} \left[ \muk \; \min \left\{\frac{2 \, (1 - c)}{\Lk}, \eta_{\text{max}} \right\} \right] \right) \normsq{\xk - \xopt}  & \tag{Using Equation~\ref{eq:eta-bounds}} \\
\intertext{Setting $c = 1/2$,}
\implies \E \left[\normsq{\xkk - \xopt} \right] & \leq \left(1 - \E_{ik} \left[ \muk \; \min \left\{\frac{1}{\Lk}, \eta_{\text{max}} \right\} \right] \right) \normsq{\xk - \xopt}
\intertext{We consider the following two cases: $\eta_{\text{max}} < 1 / L_{\text{max}}$ and $\eta_{\text{max}} \geq 1 / L_{\text{max}}$. When $\eta_{\text{max}} < 1 / L_{\text{max}}$, we have $\eta_{\text{max}} < 1 / \Lk$ and, }
\E \left[\normsq{\xkk - \xopt} \right] & \leq \left(1 - \E_{ik} \left[ \muk \; \eta_{\text{max}} \right] \right) \normsq{\xk - \xopt}\\
&= \left(1 - \E_{ik} \left[ \muk \right] \; \eta_{\text{max}} \right) \normsq{\xk - \xopt} = \left(1 - \bar \mu \; \eta_{\text{max}} \right) \normsq{\xk - \xopt}\\
\intertext{By recursion through iterations $k = 1$ to $T$,}
\E \left[ \normsq{\x_{T} - \xopt} \right] & \leq \left(1 - \bar \mu \; \eta_{\text{max}} \right)^{T} \normsq{\x_{0} - \xopt}. \\
\intertext{When $\eta_{\text{max}} \geq 1 / L_{\text{max}}$, we use $\min \left\{\frac{1}{\Lk}, \eta_{\text{max}} \right\} \geq \min \left\{\frac{1}{L_{\text{max}}}, \eta_{\text{max}} \right\}$ to obtain }
\E \left[\normsq{\xkk - \xopt} \right] & \leq \left(1 - \E_{ik} \left[ \muk \; \min \left\{\frac{1}{L_{\text{max}}}, \eta_{\text{max}} \right\} \right] \right) \normsq{\xk - \xopt}\\
& = \left(1 - \E_{ik} \left[ \muk \; \frac{1}{L_{\text{max}}} \right] \right) \normsq{\xk - \xopt}\\
& = \left(1 - \frac{\E_{ik} \left[ \muk \right]}{L_{\text{max}}} \right) \normsq{\xk - \xopt} = \left(1 - \frac{\bar \mu}{L_{\text{max}}} \right) \normsq{\xk - \xopt}\\
\intertext{By recursion through iterations $k = 1$ to $T$,}
\E \left[ \normsq{\x_{T} - \xopt} \right] & \leq \left(1 - \frac{\bar \mu}{L_{\text{max}}} \right)^{T} \normsq{\x_{0} - \xopt}.
\intertext{Putting the two cases together,}
\E \left[ \normsq{\x_{T} - \xopt} \right] & \leq \max \left\{ 
\left(1 - \frac{\bar \mu}{L_{\text{max}}} \right), \left(1 - \bar \mu \; \eta_{\text{max}} \right) \right\}^{T} \normsq{\x_{0} - \xopt}
\end{align*}
\end{proof}

\section{Proof for Theorem~\ref{thm:c}}
\label{app:c}
\begin{proof}
\begin{align*}
\normsq{\xkk - \xopt} & = \normsq{\xk - \etak \gradk{\xk} - \xopt}  \\
\normsq{\xkk - \xopt} & = \normsq{\xk - \xopt} - 2 \eta_{k} \langle \gradk{\xk}, \xk - \xopt \rangle + \eta_{k}^2 \normsq{\gradk{\xk}} \\
2 \eta_{k} \langle \gradk{\xk}, \xk - \xopt \rangle & = \normsq{\xk - \xopt} - \normsq{\xkk - \xopt} + \eta_{k}^2 \normsq{\gradk{\xk}} \\
\langle \gradk{\xk}, \xk - \xopt \rangle & = \frac{1}{2 \etak} \left[ \normsq{\xk - \xopt} - \normsq{\xkk - \xopt} \right] + \frac{\eta_{k}}{2} \normsq{\gradk{\xk}} \\
& \leq \frac{1}{2 \etak} \left[ \normsq{\xk - \xopt} - \normsq{\xkk - \xopt} \right] + \frac{\fk(\xk) - \fk(\xkk)}{2 c} & \tag{Using Equation~\ref{eq:c-ls}} \\
\intertext{The interpolation condition implies that $\xopt$ is the minimum for all functions $\fj$, implying that for all $i$, $\fj(\xopt) \leq \fj(\xkk)$.}
\implies \langle \gradk{\xk}, \xk - \xopt \rangle & \leq \frac{1}{2 \etak} \left[ \normsq{\xk - \xopt} - \normsq{\xkk - \xopt} \right] + \frac{\fk(\xk) - \fk(\xopt)}{2 c} \\
\intertext{Taking expectation wrt $i_k$,}
\E \left[ \langle \gradk{\xk}, \xk - \xopt \rangle \right] & \leq \E \left[ \frac{1}{2 \etak} \left[ \normsq{\xk - \xopt} - \normsq{\xkk - \xopt} \right] \right]+ \E \left[\frac{\fk(\xk) - \fk(\xopt)}{2 c} \right] \\
& = \E \left[ \frac{1}{2 \etak} \left[ \normsq{\xk - \xopt} - \normsq{\xkk - \xopt} \right] \right] + \left[\frac{f(\xk) - f(\xopt)}{2 c} \right] \\
\implies \langle \E \left[ \gradk{\xk} \right], \xk - \xopt \rangle & \leq  \E \left[ \frac{1}{2 \etak} \left[ \normsq{\xk - \xopt} - \normsq{\xkk - \xopt} \right] \right] + \left[\frac{f(\xk) - f(\xopt)}{2 c} \right] \\
\implies \langle \grad{\xk}, \xk - \xopt \rangle & \leq  \E \left[ \frac{1}{2 \etak} \left[ \normsq{\xk - \xopt} - \normsq{\xkk - \xopt} \right] \right] + \left[\frac{f(\xk) - f(\xopt)}{2 c} \right] \\
\intertext{By convexity,}
f(\xk) - f(\xopt) &\leq  \langle \grad{\xk}, \xk - \xopt \rangle \\
\implies f(\xk) - f(\xopt) & \leq \E \left[ \frac{1}{2 \etak} \left[ \normsq{\xk - \xopt} - \normsq{\xkk - \xopt} \right] \right] + \left[\frac{f(\xk) - f(\xopt)}{2 c} \right] \\
\intertext{If $1 - \frac{1}{2c} \geq 0$ $\implies$ if $c \geq \frac{1}{2}$, then}
\implies f(\xk) - f(\xopt) & \leq \E \left[ \frac{c}{(2c - 1) \etak} \left[ \normsq{\xk - \xopt} - \normsq{\xkk - \xopt} \right] \right] \\
\intertext{Taking expectation and summing from $k = 0$ to $k = T-1$}
\implies \E \left[ \sum_{k = 0}^{T-1} \left[ f(\xk) - f(\xopt) \right] \right] & \leq \E \left[ \sum_{k = 0}^{T-1}  \frac{c}{(2c - 1) \etak} \left[ \normsq{\xk - \xopt} - \normsq{\xkk - \xopt} \right] \right] \\
\intertext{By Jensen's inequality,}
\E \left[ f(\mxt) - f(\xopt) \right] & \leq \E \left[ \sum_{k = 0}^{T-1} \left[ \frac{f(\xk) - f(\xopt)}{T} \right] \right] \\
\implies \E \left[ f(\mxt) - f(\xopt) \right]  & \leq \frac{1}{T} \E \left[ \sum_{k = 0}^{T-1} \frac{c}{(2c - 1) \etak} \left[ \normsq{\xk - \xopt} - \normsq{\xkk - \xopt} \right] \right] \\
\intertext{If $\Delta_{k} = \normsq{\xk - \xopt}$, then}
\E \left[ f(\mxt) - f(\xopt) \right]  & \leq \frac{c}{T \; (2c - 1)} \E \left[ \sum_{k = 0}^{T-1} \frac{1}{\etak} \left[ \Delta_{k} - \Delta_{k+1} \right] \right] \\
\intertext{Using Equation~\ref{eq:eta-bounds},}
\frac{1}{\etak} & \leq \max\left\{\frac{\Lk}{2 \; (1-c)}, \frac{1}{\eta_{\text{max}}} \right\} \leq \max\left\{\frac{L_{\text{max}}}{2 \; (1-c)}, \frac{1}{\eta_{\text{max}}} \right\} \\
\implies \E \left[ f(\mxt) - f(\xopt) \right]  & \leq \frac{c \cdot \max\left\{\frac{L_{\text{max}}}{2 \; (1-c)}, \frac{1}{\eta_{\text{max}}} \right\}}{(2c - 1) \; T} \E \sum_{k = 0}^{T-1} \left[ \Delta_{k} - \Delta_{k+1} \right] \\
& = \frac{c \cdot \max\left\{\frac{L_{\text{max}}}{2 \; (1-c)}, \frac{1}{\eta_{\text{max}}} \right\}}{(2c - 1) \; T} \E \left[ \Delta_{0} - \Delta_{T} \right] \\
\E \left[ f(\mxt) - f(\xopt) \right] & \leq \frac{c \cdot \max\left\{\frac{L_{\text{max}}}{2 \; (1-c)}, \frac{1}{\eta_{\text{max}}} \right\}}{(2c - 1) \; T} \normsq{\x_{0} - \xopt}
\end{align*}
\end{proof}
\section{Proof for Theorem~\ref{thm:sgd-nc}}
\label{app:nc}
\begin{proof}
Firstly, note that for any vectors \( a,b \in \R^d, \)
\begin{align}
    \norm{a - b}^2 &= \norm{a}^2 + \norm{b}^2 - 2\abr{a,b}\nonumber \\
    \implies - \abr{a, b} &= \frac{1}{2} \rbr{\norm{a - b}^2 - \norm{a}^2 - \norm{b}^2}.~\label{eq:quadratic-expansion}
\end{align}

Let \( \Delta_k = f(\xkk) - f(\xk) \). Starting from \( L \)-smoothness of \( f \):
\begin{align*}
    \Delta_k &\leq \abr{\grad{\xk}, \xkk - \xk} + \frac{L}{2}\norm{\xkk - \xk}^2\\
                     &= - \etak \abr{\grad{\xk}, \gradk{\xk}} + \frac{L \etak^2}{2}\norm{\gradk{\xk}}^2.
                     \intertext{Using \autoref{eq:quadratic-expansion} on \( - \abr{\grad{\xk}, \gradk{\xk}} \), }
    \implies \Delta_k &\leq \frac{\etak}{2}\rbr{\norm{\grad{\xk} - \gradk{\xk}}^2 - \norm{\grad{\xk}}^2 - \norm{\gradk{\xk}}^2} + \frac{L \etak^2}{2}\norm{\gradk{\xk}}^2\\
    \implies 2\Delta_k &\leq \etak \norm{\grad{\xk} - \gradk{\xk}}^2 - \etak\rbr{\norm{\grad{\xk}}^2 + \norm{\gradk{\xk}}^2} + L \etak^2\norm{\gradk{\xk}}^2.
    \intertext{Let \( \Lmax = \max_{i} L_i \). Then, Lemma~\ref{lemma:eta-bounds} guarantees \( \etamin = \min\cbr{\frac{2(1-c)}{\Lmax}, \etamax} \leq \min\cbr{\frac{2(1-c)}{\Lk}, \etamax} \leq \etak \leq \etamax \). Using this and taking expectations with respect to \( \gradk{\xk} \),}
    2 \Delta_k &\leq \etamax \norm{\grad{\xk} - \gradk{\xk}}^2 - \etamin \rbr{\norm{\grad{\xk}}^2 + \norm{\gradk{\xk}}^2} \\ &\hspace{1em} + L \etamax^2\norm{\gradk{\xk}}^2\\
    \implies 2 \E\sbr{\Delta_k} &\leq \etamax \E\sbr{\norm{\grad{\xk} - \gradk{\xk}}^2} - \etamin \E\sbr{\norm{\grad{\xk}}^2 + \norm{\gradk{\xk}}^2} \\ &\hspace{1em} + L \etamax^2 \E\sbr{\norm{\gradk{\xk}}^2}\\
                            &= \etamax \E\sbr{\norm{\gradk{\xk}}^2} - \etamax \norm{\grad{\xk}}^2 - \etamin \E\sbr{\norm{\grad{\xk}}^2 + \norm{\gradk{\xk}}^2} \\ &\hspace{1em} + L \etamax^2 \E\sbr{\norm{\gradk{\xk}}^2}\\
                            \intertext{Collecting terms and applying the strong growth condition,}
    2 \E\sbr{\Delta_k} &\leq \rbr{\etamax - \etamin + L \etamax^2} \E\sbr{\norm{\gradk{\xk}}^2} - \rbr{\etamax + \etamin}\norm{\grad{\xk}}^2\\
                           &\leq \rho \rbr{\etamax - \etamin + L \etamax^2} \norm{\grad{\xk}}^2 - \rbr{\etamax + \etamin}\norm{\grad{\xk}}^2\\
                           &= - \rbr{\rbr{\etamax + \etamin} - \rho \rbr{\etamax - \etamin + L \etamax^2}}\norm{\grad{\xk}}^2.
\end{align*}
\begin{align*}
    \intertext{Assuming \( \delta = \rbr{\etamax + \etamin} - \rho \rbr{\etamax - \etamin + L \etamax^2} > 0 \),}
    \implies \norm{\grad{\xk}}^2 &\leq \frac{2}{\delta} \E\sbr{-\Delta_k}.
    \intertext{Taking expectations and summing from \( k = 0 \) to \( K - 1 \),}
    \implies \frac{1}{K} \sum_{k=0}^{K-1} \E\sbr{\norm{\grad{\xk}}^2} &\leq \frac{2}{\delta \, K} \sum_{k=0}^{K-1} \E \sbr{- \Delta_k}\\
                                                          &= \frac{2}{\delta \, K} \sum_{k=0}^{K-1} \E \sbr{f(\xk) - f(\xkk)}\\
                                                          &= \frac{2}{\delta \, K} \E\sbr{f(\x_0) - f(\xkk)}\\
                                                          &\leq \frac{2}{\delta \, K} \rbr{f(\x_0) - f(\xopt)}\\
    \implies \min_{k \in [T-1]} \E\sbr{\norm{\grad{\xk}}^2} &\leq \frac{2}{\delta \, T} \rbr{f(\x_0) - f(\xopt)}.
\end{align*}

\noindent It remains to show that \( \delta > 0  \) holds. Our analysis proceeds in cases.\\

\noindent \textbf{Case 1}: \( \etamax \leq \frac{2(1-c)}{\Lmax} \). Then \( \etamin = \etamax \) and
\begin{align*}
    \delta &= \rbr{\etamax + \etamax} - \rho \rbr{\etamax - \etamax + L \etamax^2} \\
    &= 2 \etamax - \rho L \etamax^2 > 0 \\
    \implies \etamax &< \frac{2}{\rho L}.
\end{align*}
\textbf{Case 2}: \( \etamax > \frac{2(1-c)}{\Lmax} \). Then \( \etamin = \frac{2(1-c)}{\Lmax} \) and
\begin{align*}
    \delta =  \rbr{\etamax + \frac{2(1-c)}{\Lmax}} - \rho \rbr{\etamax - \frac{2(1-c)}{\Lmax} + L \etamax^2}.
    \intertext{This is a concave quadratic in \( \etamax \) and is strictly positive when}
    \etamax \in \rbr{0, \frac{\rbr{1 - \rho} + \sqrt{{(\rho - 1)}^2 + \frac{\sbr{8 \rho (1 + \rho) L (1-c)}}{\Lmax}}}{2 L \rho}}.
\end{align*}
To avoid contradiction with the case assumption \( \frac{2(1-c)}{\Lmax} < \etamax \), we require
\begin{align*}
    \frac{\rbr{1 - \rho} + \sqrt{{(\rho - 1)}^2 + \frac{\sbr{8 \rho (1 + \rho) L (1-c)}}{\Lmax}}}{2 L \rho} &> \frac{2(1-c)}{\Lmax}\\
    \implies \frac{8 \rho (1 + \rho) L (1-c)}{\Lmax} &> \rbr{\frac{4L \rho}{\Lmax} + (\rho - 1)}^2 - \rbr{\rho - 1}^2\\
                                                           &= \frac{16L^2 \rho^2 (1-c)^2}{\Lmax^2} + \frac{8 L \rho (\rho - 1) (1-c)}{\Lmax}\\
    \implies \frac{\Lmax}{\rho L} &> (1-c)\\
    \implies c &> 1 - \frac{\Lmax}{\rho L}.
\end{align*}
The line-search requires \( c \in \rbr{0,1} \).
Noting that \( \rho \geq 1 \) by definition, we have \( \frac{\Lmax}{\rho L} > 0 \) as long as \( L, \Lmax > 0 \).
The Lipschitz constants are strictly positive when \( f \) is bounded-below and non-zero.
We obtain the non-empty constraint set
\[ c \in \rbr{1 - \frac{\Lmax}{\rho L}, 1}. \]
Substituting the maximum value for \( c \) into the upper-bound on \( \etamax \) yields a similar requirement,
\[ \etamax \in \rbr{0, \frac{2}{\rho L}}. \]
This completes the second case.\\

\noindent Putting the two cases together gives the final constraints on \( c \) and \( \etamax \) as
\[ c \geq 1 - \frac{\Lmax}{\rho L} \quad \quad \etamax < \frac{2}{\rho L}. \]
We note that the upper and lower bounds on \( \etak \) are consistent since
\[ \etamin = \min\cbr{\frac{2(1-c)}{\Lmax}, \etamax} < \min\cbr{\frac{2 \Lmax}{\rho L \Lmax}, \etamax} = \max\cbr{\frac{2}{\rho L}, \etamax} \leq \frac{2}{\rho L}, \]
where the last inequality follows from the bound on \( \etamax \).
In particular, taking \( c \rightarrow 1 \) and \( \etamax \rightarrow \frac{2}{\rho L} \) yields an adaptive step-size \(\etak \in (0,\frac{2}{\rho L}).  \)

\end{proof}
\section{Proofs for SEG}
\label{app:seg}
\subsection{Common lemmas}
\label{app:seg-common}
We denote $\normsq{u - v}$ as $\Delta(u,v) = \Delta(v,u)$. We first prove the following lemma that will be useful in the subsequent analysis.
\begin{lemma}
For any set of vectors $a,b,c,d$, if $a = b + c$, then,
\begin{align*}
    \Delta(a,d) & = \Delta(b,d) - \Delta(a,b) + 2 \langle c, a - d \rangle
\end{align*}
\label{lemma:gen}
\end{lemma}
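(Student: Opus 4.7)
The plan is to prove this purely by algebraic manipulation of the squared-norm identities, using the substitution $c = a - b$ implied by $a = b + c$. There is no probabilistic or geometric content here; this is a polarization-style identity analogous to the standard three-point relation used ubiquitously in analyses of (proximal) gradient methods.

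The key step is to expand $\Delta(b, d) = \|b - d\|^2$ by writing $b = a - c$, so that
\begin{align*}
\Delta(b, d) = \|(a - d) - c\|^2 = \|a - d\|^2 - 2 \langle c, a - d \rangle + \|c\|^2.
\end{align*}
Since $c = a - b$, the last term is $\|c\|^2 = \|a - b\|^2 = \Delta(a, b)$. Substituting and rearranging gives exactly
\begin{align*}
\Delta(a, d) = \Delta(b, d) - \Delta(a, b) + 2 \langle c, a - d \rangle,
\end{align*}
which is the claim.

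There is no real obstacle: the identity is a direct consequence of expanding one squared norm and collecting terms. The only thing to be careful about is keeping the signs of the cross term and the $\|c\|^2$ term straight when rearranging, and making sure the substitution $b = a - c$ is used consistently (rather than $c = a - b$ plugged in the wrong place). Since the lemma will be invoked repeatedly in the SEG analysis with various choices of $a, b, c, d$ corresponding to iterates, extrapolation points, and optimal solutions, it is worth stating the proof in the general form above rather than specializing to any particular choice.
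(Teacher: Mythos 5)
Your proof is correct and takes essentially the same approach as the paper: a direct expansion of a squared norm followed by the substitution $c = a - b$. The only cosmetic difference is that you expand $\Delta(b,d) = \normsq{(a-d) - c}$ so the cross term $\langle c, a-d\rangle$ appears immediately, whereas the paper expands $\Delta(a,d) = \normsq{(b-d)+c}$ and then rewrites $\langle c, b-d\rangle$ as $\langle c, a-d\rangle - \normsq{c}$; both reduce to the same polarization identity.
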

\begin{proof}
\begin{align*}
\Delta(a,d) & = \normsq{a - d} = \normsq{b + c - d} \\
& = \normsq{b - d} + 2 \langle c, b - d \rangle + \normsq{c} \\
\intertext{Since $c = a - b$,}
\Delta(a,d) & = \normsq{b - d} + 2 \langle a - b, b - d \rangle + \normsq{a - b} \\
& = \normsq{b - d} + 2 \langle a - b, b - a + a - d \rangle + \normsq{a - b} \\
& = \normsq{b - d} + 2 \langle a - b, b - a \rangle + 2 \langle a - b, a - d \rangle + \normsq{a - b} \\
& = \normsq{b - d} - 2 \normsq{a - b} + 2 \langle a - b, a - d \rangle + \normsq{a - b} \\
& = \normsq{b - d} - \normsq{a - b} + 2 \langle c, a - d \rangle \\
\Delta(a,d) & = \Delta(b,d) - \Delta(a,b) + 2 \langle c, a - d \rangle.
\end{align*}
\end{proof}
\subsection{Proof for Theorem~\ref{thm:seg-min-ls}}
\label{app:seg-min-ls-rsi}
We start from Lemma \ref{lemma:gen} with $a = \xkk = \xk - \etak \gradk{\xkh}$ and $d = \xopt$:
\begin{align*}
\Delta(\xkk,\xopt) & = \Delta(\xk, \xopt) - \Delta(\xkk, \xk) - 2 \etak \left[\langle \gradk{\xkh}, \xkk - \xopt \rangle \right].\\
& = \Delta(\xk, \xopt) - \etak^{2} \normsq{\gradk{\xkh}} - 2 \etak \left[\langle \gradk{\xkh}, \xkk - \xopt \rangle \right].\\
\intertext{Using $\xkk = \xkh + \etak \gradk{\xk} - \etak \gradk{\xkh}$ and completing the square,}
\Delta(\xkk,\xopt) & = \Delta(\xk, \xopt) - \etak^{2} \normsq{\gradk{\xkh}} - 2 \etak \left[\langle \gradk{\xkh}, \xkh + \etak \gradk{\xk} - \etak \gradk{\xkh} - \xopt \rangle \right]\\
& = \Delta(\xk, \xopt) + \etak^{2} \normsq{\gradk{\xkh}} - 2 \etak \left[\langle \gradk{\xkh}, \xkh + \etak \gradk{\xk} - \xopt \rangle \right]\\
& = \Delta(\xk, \xopt) + \etak^{2} \normsq{\gradk{\xkh} - \gradk{\xk}} - \etak^2 \normsq{\gradk{\xk}} - 2 \etak \left[\langle \gradk{\xkh}, \xkh - \xopt \rangle \right]\\
\intertext{Noting $\Delta(\xkh, \xk) = \etak^2 \normsq{\gradk{\xk}}$ gives}
\Delta(\xkk,\xopt) & = \Delta(\xk, \xopt) - \Delta(\xkh, \xk) + \etak^{2} \normsq{\gradk{\xkh} - \gradk{\xk}} - 2 \etak \left[\langle \gradk{\xkh}, \xkh - \xopt \rangle \right] \\
\implies 2 \etak \left[\langle \gradk{\xkh}, \xkh - \xopt \rangle \right]
& = \Delta(\xk, \xopt) - \Delta(\xkh, \xk) + \etak^{2} \normsq{\gradk{\xkh} - \gradk{\xk}} - \Delta(\xkk,\xopt) \linenumber \label{eq:SEG_intermediate_inequality}.\\
\intertext{By RSI, which states that for all $\x$, $\langle \gradi{\x}, \x - \xopt \rangle \geq \mu_i \normsq{\xopt - \x}$, we have}
\langle \gradk{\xkh}, \xkh - \xopt \rangle & \geq \muk \Delta(\xkh,\xopt) \\
\intertext{By Young's inequality,}
\Delta(\xk,\xopt) & \leq 2 \Delta(\xk,\xkh) + 2 \Delta(\xkh,\xopt) \\
\implies 2 \Delta(\xkh,\xopt) & \geq \Delta(\xk,\xopt) - 2 \Delta(\xk,\xkh) \\
\implies \langle 2 \etak \gradk{\xkh}, \xkh - \xopt \rangle & \geq \muk \etak \left[ \Delta(\xk,\xopt) - 2 \Delta(\xk,\xkh) \right] \\
\intertext{Rearranging Equation \eqref{eq:SEG_intermediate_inequality},}
\Delta(\xkk,\xopt) &= \Delta(\xk, \xopt) - \Delta(\xkh, \xk) + \etak^{2} \normsq{\gradk{\xkh} - \gradk{\xk}}- 2 \etak \left[\langle \gradk{\xkh}, \xkh - \xopt \rangle \right] \\
\implies \Delta(\xkk,\xopt) & \leq \Delta(\xk, \xopt) - \Delta(\xkh, \xk) + \etak^{2} \normsq{\gradk{\xkh} - \gradk{\xk}} - \muk \etak \left[ \Delta(\xk,\xopt) - 2 \Delta(\xk,\xkh) \right] \\
\Delta(\xkk,\xopt) & \leq \left(1 - \etak \muk \right)\Delta(\xk, \xopt) - \Delta(\xkh, \xk) + \etak^{2} \normsq{\gradk{\xkh} - \gradk{\xk}} + 2 \muk \etak \Delta(\xk,\xkh)
\end{align*}
Now we consider using a constant step-size as well as the Lipschitz line-search.
\subsubsection{Using a constant step-size}
\begin{proof}
\begin{align*}
\intertext{Using smoothness of $\fk(\cdot)$,}
\Delta(\xkk,\xopt) & \leq \left(1 - \etak \muk \right)\Delta(\xk, \xopt) - \Delta(\xkh, \xk) + \etak^{2} \Lk^2 \Delta(\xkh,\xk) + 2 \muk \etak \Delta(\xk,\xkh) \\
\implies \Delta(\xkk,\xopt) & \leq \left( 1 - \etak \muk \right) \Delta(\xk, \xopt) + \left(\etak^{2} \Lk^2 - 1 + 2 \muk \etak \right) \Delta(\xkh, \xk) \\
\intertext{Taking expectation with respect to $i_{k}$,}
\E \left[\Delta(\xkk,\xopt)\right] & \leq \E \left[\left( 1 - \etak \muk \right)\Delta(\xk, \xopt) \right] + \E \left[ \left(\etak^{2} \Lk^2 - 1 + 2 \muk \etak \right) \Delta(\xkh, \xk) \right] \\
\intertext{Note that $\xk$ doesn't depend on $i_k$. Furthermore, neither does $\xopt$ because of the interpolation property.}
\implies \E \left[\Delta(\xkk,\xopt)\right] & \leq \E \left[1 - \etak \muk \right] \Delta(\xk, \xopt) + \E \left[ \left(\etak^{2} \Lk^2 - 1 + 2 \muk \etak \right) \Delta(\xkh, \xk) \right] \\
\intertext{If $\etak \leq \frac{1}{4 \cdot L_{\text{max}}}$, then $\left(\etak^{2} \Lk^2 - 1 + 2 \muk \etak \right) \leq 0$ and}
\implies \E \left[\Delta(\xkk,\xopt)\right] & \leq \E \left[1 - \frac{\muk}{4 L_{\text{max}}} \right] \Delta(\xk, \xopt) \\
\implies \E \left[\Delta(\xkk,\xopt)\right] & \leq \left(1 - \frac{\bar{\mu}}{4 L_{\text{max}}} \right) \Delta(\xk, \xopt) \\
\implies \E \left[\Delta(\x_{k},\xopt)\right] & \leq \left(1 - \frac{\bar{\mu}}{4 L_{\text{max}}} \right)^{T} \Delta(\x_{0}, \xopt)
\end{align*}
\end{proof}
\subsubsection{Using the line-search}
\begin{proof}
\begin{align*}
\intertext{Using Equation~\eqref{eq:lip-ls} to control the difference in gradients,}
\Delta(\xkk,\xopt) & \leq \left(1 - \etak \muk \right)\Delta(\xk, \xopt) - \Delta(\xkh, \xk) + c^2 \Delta(\xkh,\xk) + 2 \muk \etak \Delta(\xk,\xkh) \\
\implies \Delta(\xkk,\xopt) & \leq \left(1 - \etak \muk \right)\Delta(\xk, \xopt) + \left(c^2 + 2 \muk \etak - 1 \right) \Delta(\xkh, \xk) \\
\intertext{Taking expectation with respect to $i_{k}$,}
\E \left[\Delta(\xkk,\xopt) \right] & \leq \E \left[1 - \etak \muk \Delta(\xk, \xopt) \right]  + \E \left[\left(c^2 - 1 + 2 \etak \mu_{i} \right) \Delta(\xkh, \xk) \right] \\
\intertext{Note that $\xk$ doesn't depend on $i_k$. Furthermore, neither does $\xopt$ because of the interpolation property.}
\implies \E \left[\Delta(\xkk,\xopt) \right] & \leq \E \left[1 - \etak \muk \right] \Delta(\xk, \xopt) + \E \left[\left(c^2 - 1 + 2 \etak \muk \right) \Delta(\xkh, \xk) \right] \\
\intertext{Using smoothness, the line-search in Equation~\ref{eq:lip-ls} is satisfied if $\etak \leq \frac{c}{\Lk}$, implying that the step-size returned by the line-search always satisfies $\etak \geq \min \left\{\frac{c}{\Lk}, \eta_{\text{max}} \right\}$.}
\implies \E \left[\Delta(\xkk,\xopt) \right] & \leq \E \left(1 - \muk \min \left\{ \frac{c}{\Lk}, \eta_{\text{max}} \right\} \right) \Delta(\xk, \xopt) + \E \left[\left(c^2 - 1 + 2 \etak \muk \right) \Delta(\xkh, \xk) \right]
\intertext{If we ensure that $\etak \leq \frac{c}{\muk}$, then $c^2 - 1 + 2 \etak \muk \leq 0$. In other words, we need to ensure that $\eta_{\text{max}} \leq \min_{i} \frac{c}{\mu_i}$. Choosing $c = 1/4$, we obtain the following:}
\E \left[\Delta(\xkk,\xopt) \right] & \leq \E \left(1 - \muk \min \left\{ \frac{1}{4 \; \Lk}, \eta_{\text{max}} \right\} \right) \Delta(\xk, \xopt) \\
\intertext{We consider the following cases: $\eta_{\text{max}} < \frac{1}{4 \; L_{\text{max}}}$ and $\eta_{\text{max}} \geq \frac{1}{4 \; L_{\text{max}}}$. When $\eta_{\text{max}} < \frac{1}{4 \; L_{\text{max}}}$, }
\E \left[\Delta(\xkk,\xopt) \right] & \leq \E \left(1 - \muk \; \eta_{\text{max}} \right) \Delta(\xk, \xopt) \\
& = \left(1 - \bar \mu \; \eta_{\text{max}} \right) \Delta(\xk, \xopt) \\
\implies \E \left[\Delta(\xkk,\xopt) \right] & \leq \left(1 - \bar \mu \; \eta_{\text{max}} \right)^{T} \Delta(\x_{0}, \xopt)\\
\intertext{When $\eta_{\text{max}} \geq 1 / (4 \; L_{\text{max}})$, we use $\min \left\{\frac{1}{4 \; \Lk}, \eta_{\text{max}} \right\} \geq \min \left\{\frac{1}{4 \; L_{\text{max}}}, \eta_{\text{max}} \right\}$ to obtain}
\E \left[\Delta(\xkk,\xopt) \right] & \leq \E \left(1 - \muk \min \left\{ \frac{1}{4 \; L_{\text{max}}}, \eta_{\text{max}} \right\} \right) \Delta(\xk, \xopt) \\
& = \E \left(1 - \muk \frac{1}{4 \; L_{\text{max}}} \right) \Delta(\xk, \xopt) \\
& = \left(1 - \frac{\bar \mu }{4 \; L_{\text{max}}} \right) \Delta(\xk, \xopt) \\
\implies \E \left[\Delta(\xkk,\xopt) \right] & \leq \left(1 - \frac{\bar \mu }{4 \; L_{\text{max}}} \right)^{T} \Delta(\x_{0}, \xopt).\\
\intertext{Putting the two cases together, we obtain}
\E \left[\Delta(\xkk,\xopt) \right] & \leq \max \left\{\left(1 - \frac{\bar \mu }{4 \; L_{\text{max}}}\right),\left( 1- \bar \mu \; \eta_{\text{max}} \right)\right\}^{T} \Delta(\x_{0}, \xopt).
\end{align*}
\end{proof}

\subsection{Proof of SEG for convex minimization}
\label{app:seg-min-ls}
\begin{theorem}
Assuming the interpolation property and under $L$-smoothness and convexity of $f$, SEG with Lipschitz line-search with $c = 1/\sqrt{2}$ in Equation~\ref{eq:lip-ls} and iterate averaging achieves the following rate:
\begin{align*}
\E \left[ f(\mxt) - f(\xopt) \right] & \leq \frac{2 \; \max \left\{ \sqrt{2} \; L_{\text{max}}, \frac{1}{\eta_{\text{max}}} \right\} }{T} \normsq{\x_{0} - \xopt} \,.
\end{align*}
Here, $\mxt = \frac{\left[ \sum_{i = 1}^{T} \x_{i} \right]}{T}$ is the averaged iterate after $T$ iterations.
\label{thm:seg-conv-min-ls}
\end{theorem}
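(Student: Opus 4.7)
The plan is to reuse the identity~\eqref{eq:SEG_intermediate_inequality} from the RSI proof, apply the Lipschitz line-search at $c = 1/\sqrt{2}$ to bound $\etak^{2}\normsq{\gradk{\xkh} - \gradk{\xk}} \leq \tfrac{1}{2}\Delta(\xkh,\xk)$, and thereby arrive at the per-step inequality $2\etak\langle \gradk{\xkh}, \xkh - \xopt \rangle \leq \Delta(\xk,\xopt) - \Delta(\xkk,\xopt) - \tfrac{1}{2}\Delta(\xkh,\xk)$. Convexity of $\fk$ at $\xkh$ combined with interpolation gives $\langle \gradk{\xkh}, \xkh - \xopt \rangle \geq \fk(\xkh) - \fk(\xopt) \geq 0$, so the left-hand side is non-negative. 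Non-negativity together with the per-step bound immediately forces the ancillary estimate $\Delta(\xkh,\xk) \leq 2[\Delta(\xk,\xopt) - \Delta(\xkk,\xopt)]$, which will be the key tool later to absorb the gradient-norm residual.

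The subtle point is that $\xkh$ depends on the random sample $i_k$, so $\E[\fk(\xkh)]$ is \emph{not} $f$ evaluated at any clean iterate and naive telescoping does not produce $\E[f(\mxt)]$. To circumvent this, I would apply a second convexity inequality anchored at $\xk$: $\fk(\xkh) \geq \fk(\xk) - \etak\normsq{\gradk{\xk}} = \fk(\xk) - \Delta(\xkh,\xk)/\etak$, which rearranges to $\fk(\xk) - \fk(\xopt) \leq [\fk(\xkh) - \fk(\xopt)] + \Delta(\xkh,\xk)/\etak$. Dividing the per-step bound by $2\etak$, substituting into this inequality, and then invoking the ancillary estimate for $\Delta(\xkh,\xk)$ collapses everything to $\fk(\xk) - \fk(\xopt) \leq \tfrac{2}{\etak}[\Delta(\xk,\xopt) - \Delta(\xkk,\xopt)]$. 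Because $\xk$ and $\xopt$ are measurable with respect to the history, taking the conditional expectation over $i_k$ converts the left-hand side into $f(\xk) - f(\xopt)$, which is precisely the quantity I want to sum.

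The remainder is routine bookkeeping: the Lipschitz line-search gives $\etak \geq \min\{1/(\sqrt{2}\,\Lmax),\etamax\}$, so $1/\etak \leq M := \max\{\sqrt{2}\,\Lmax,\,1/\etamax\}$; taking total expectations, summing over $k = 0,\ldots,T-1$ so the $\Delta(\xk,\xopt) - \Delta(\xkk,\xopt)$ terms telescope, and finally invoking Jensen's inequality $f(\mxt) \leq \tfrac{1}{T}\sum_{k}f(\xk)$ from convexity of $f$ yields exactly $\E[f(\mxt) - f(\xopt)] \leq \tfrac{2M}{T}\normsq{\x_0 - \xopt}$. The main obstacle is calibrating the constants so that the $\Delta(\xkh,\xk)/\etak$ residual introduced by the second convexity step is absorbed into the $-\tfrac{1}{2}\Delta(\xkh,\xk)$ slack left by the line-search; the choice $c = 1/\sqrt{2}$ is exactly what makes the $1/(4\etak)$ and $1/\etak$ coefficients combine, after the ancillary substitution, into the clean $2/\etak$ that drives the stated constant. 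A more naive argument that uses convexity only at $\xkh$ leaves $\E[\fk(\xkh)]$ stranded in the bound and fails to reduce to $\E[f(\mxt)]$.
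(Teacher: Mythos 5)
Your proposal is correct and follows essentially the same route as the paper's proof in Appendix~\ref{app:seg-min-ls}: the same intermediate identity~\eqref{eq:SEG_intermediate_inequality}, the same two convexity inequalities (at $\xkh$ toward $\xopt$, and anchored at $\xk$ to pass from $\fk(\xkh)$ to $\fk(\xk)$), the same use of the Lipschitz condition with $c^2 = 1/2$, and the same telescoping followed by Jensen. The only difference is bookkeeping: the paper absorbs the $\Delta(\xkh,\xk)/\etak$ residual by first shrinking the nonnegative gap via $\fk(\xkh)-\fk(\xopt) \geq \tfrac{1}{4}\left(\fk(\xkh)-\fk(\xopt)\right)$ so that the residual exactly cancels the line-search slack, whereas you extract the auxiliary bound $\Delta(\xkh,\xk) \leq 2\left[\Delta(\xk,\xopt)-\Delta(\xkk,\xopt)\right]$ from nonnegativity and substitute it at the end --- both yield the identical per-step inequality $\fk(\xk)-\fk(\xopt) \leq \tfrac{2}{\etak}\left[\Delta(\xk,\xopt)-\Delta(\xkk,\xopt)\right]$ and the same constants.
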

\begin{proof}
Starting from Equation \eqref{eq:SEG_intermediate_inequality},
\begin{align*}
2 \etak \left[\langle \gradk{\xkh}, \xkh - \xopt \rangle \right]
& = \Delta(\xk, \xopt) - \Delta(\xkh, \xk) + \etak^{2} \normsq{\gradk{\xkh} - \gradk{\xk}} - \Delta(\xkk,\xopt)\\
\intertext{and using the standard convexity inequality,}
\langle \gradk{\xkh}, \xkh - \xopt \rangle
& \geq f_{i_k}(\xkh) - f_{i_k}(\xopt)  \\
& \geq \tfrac{1}{4}(f_{i_k}(\xkh) - f_{i_k}(\xopt)) \\
& \geq \tfrac{1}{4}(f_{i_k}(\xk) - \etak \|\gradk{\xk}\|^2 - f_{i_k}(\xopt))\\
& = \tfrac{1}{4}(f_{i_k}(\xk) - \frac{1}{\etak} \Delta(\xk,\xkh) - f_{i_k}(\xopt)) \\
\implies 2 \etak \left[\langle \gradk{\xkh}, \xkh - \xopt \rangle \right] & \geq \frac{\etak}{2} \left[ f_{i_k}(\xk) - f_{i_k}(\xopt) \right] - \frac{1}{2} \Delta(\xk,\xkh)
\intertext{ where we used the interpolation hypothesis to say that $\xopt$ is a minimizer of $f_{i_k}$ and thus $f_{i_k}(\xkh) \geq f_{i_k}(\xopt)$. Combining this with \eqref{eq:SEG_intermediate_inequality} and \eqref{eq:lip-ls} leads to,}
\frac{\etak}{2} (f_{i_k}(\xk) - f_{i_k}(\xopt))
& \leq \Delta(\xk,\xopt) - \Delta(\xkk,\xopt)  - \tfrac{1}{2} \Delta(\xkh, \xk) + \etak^{2} \normsq{\gradk{\xkh} - \gradk{\xk}}\\
& \leq \Delta(\xk,\xopt) - \Delta(\xkk,\xopt)  - (\tfrac{1}{2} -c^2 ) \Delta(\xkh, \xk) \\
& \leq \Delta(\xk,\xopt) - \Delta(\xkk,\xopt), \\
\implies f_{i_k}(\xk) - f_{i_k}(\xopt) & \leq \frac{2}{\etak} \left[
\Delta(\xk,\xopt) - \Delta(\xkk,\xopt) \right]
\intertext{where for the last inequality we used Equation \ref{eq:lip-ls} and the fact that $c^2 \leq 1/2$. By definition of the Lipschitz line-search, $\etak \in \left[\min \left\{ c / L_{\text{max}}, \eta_{\text{max}}\right\}, \eta_{\text{max}}\right]$, implying}
\frac{1}{\etak} & \leq \max \left\{ \frac{L_{\text{max}}}{c}, \frac{1}{\eta_{\text{max}}} \right\} \\
\intertext{Setting $c = \frac{1}{\sqrt{2}}$,}
\frac{1}{\etak} & \leq \max \left\{\sqrt{2} L_{\text{max}}, \frac{1}{\eta_{\text{max}}} \right\} \\
f_{i_k}(\xk) - f_{i_k}(\xopt)& \leq  2 \; \max \left\{\sqrt{2}  L_{\text{max}}, \frac{1}{\eta_{\text{max}}} \right\} \left( \Delta(\xk,\xopt) - \Delta(\xkk,\xopt) \right) \\
\intertext{ Taking expectation with respect to $i_{k}$,}
f(\xk) - f(\xopt) & \leq 2 \; \max \left\{\sqrt{2}  L_{\text{max}}, \frac{1}{\eta_{\text{max}}} \right\}  \left( \Delta(\xk,\xopt) - \E \Delta(\xkk,\xopt)  \right)
\intertext{Finally, taking the expectation respect to $\xk$ and summing for $k=1,\ldots,T$, we get,}
& \E \left[ f(\bar \x_k) - f(\xopt) \right] \leq  \frac{2 \; \max \left\{\sqrt{2}  L_{\text{max}}, \frac{1}{\eta_{\text{max}}} \right\} \; \Delta(\x_{0}, \xopt)  }{T}
\end{align*}
\end{proof}

\subsection{SEG for general strongly monotone operators}
\label{app:seg-sc-minmax}
Let $F(\cdot)$ be a Lipschitz  (strongly)-monotone operator. $F$ satisfies the following inequalities for all $u$, $v$, 
\begin{align*}
\norm{F(u) - F(v)} \leq L \norm{u - v} & \tag{Lipschitz continuity} \\
\langle F(u) - F(v), u - v \rangle \geq \mu \normsq{u - v}  & \tag{Strong monotonicity}  
\end{align*}
Here, $\mu$ is the strong-monotonicity constant and $L$ is the Lipschitz constant. Note that $\mu = 0$ for monotone operators. We seek the solution $\xopt$ to the following optimization problem: $\sup_{\x} \langle F(\xopt), \xopt - \x \rangle \leq 0$. 

Note that for strongly-convex minimization where $\xopt = \argmin f(\x)$, $F$ is equal to the gradient operator and $\mu$ and $L$ are the strong-convexity and smoothness constants in the previous sections.

SEG~\cite{juditsky2011solving} is a common method for optimizing stochastic variational inequalities and results in an $O(1/\sqrt{T})$ rate for monotone operators and an $O(1/T)$ rate for strongly-monotone operators~\cite{gidel2018variational}. For strongly-monotone operators, the convergence can be improved to obtain a linear rate by using variance-reduction methods~\cite{palaniappan2016stochastic, chavdarova2019reducing} exploiting the finite-sum structure in $F$. In this setting, $F(\x) = \frac{1}{n} \sum_{i = 1}^{n} F_{i}(\x)$. To the best of our knowledge, the interpolation condition has not been studied in the context of general strongly monontone operators. In this case, the interpolation condition implies that $F_i(\xopt) = 0$ for all operators $F_i$ in the finite sum.
\begin{theorem}[Strongly-monotone]
Assuming (a) interpolation, (b) $L$-smoothness and (c) $\mu$-strong monotonocity of $F$, SEG using Lipschitz line-search with $c = 1/4$ in Equation~\ref{eq:lip-ls} and setting $\eta_{\text{max}} \leq \min_i \frac{1}{4 \mu_i}$ has the rate:
\begin{align*}
\E \left[ \normsq{\x_{k} - \xopt} \right] & \leq \left( \max \left\{ \left(1 - \frac{\bar{\mu}}{4 \; L_{\text{max}}} \right),\left( 1 - \eta_{\text{max}} \; \bar{\mu} \right) \right\} \right)^{T} \normsq{\x_{0} - \xopt} \,.
\end{align*}
\label{thm:seg-sc-minmax}
\end{theorem}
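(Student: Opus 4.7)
The plan is to adapt the proof of Theorem~\ref{thm:seg-min-ls} almost verbatim, using the fact that strong monotonicity of $F_i$ combined with interpolation ($F_i(\xopt)=0$) yields
\[
\langle F_i(\x), \x-\xopt\rangle = \langle F_i(\x)-F_i(\xopt), \x-\xopt\rangle \geq \mu_i\,\normsq{\x-\xopt},
\]
which is exactly the operator analogue of the RSI inequality used in Appendix~\ref{app:seg-min-ls-rsi}. So the algebraic skeleton of that proof carries over line-for-line, with $\gradk{\cdot}$ replaced by $F_{i_k}(\cdot)$.

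Concretely, I would start by invoking Lemma~\ref{lemma:gen} with $a=\xkk = \xk - \etak F_{i_k}(\xkh)$, $b=\xk$ and $d=\xopt$, then rewrite $\xkk = \xkh + \etak F_{i_k}(\xk) - \etak F_{i_k}(\xkh)$ and complete the square exactly as in \eqref{eq:SEG_intermediate_inequality} to obtain the identity
\[
2\etak\langle F_{i_k}(\xkh), \xkh-\xopt\rangle = \Delta(\xk,\xopt)-\Delta(\xkh,\xk)+\etak^2\normsq{F_{i_k}(\xkh)-F_{i_k}(\xk)} - \Delta(\xkk,\xopt).
\]
Next I would lower bound the inner product on the left using strong monotonicity plus interpolation together with Young's inequality $2\Delta(\xkh,\xopt)\geq \Delta(\xk,\xopt)-2\Delta(\xk,\xkh)$, and I would upper bound the gradient-difference term by invoking the Lipschitz line-search \eqref{eq:lip-ls}, which gives $\etak^2\normsq{F_{i_k}(\xkh)-F_{i_k}(\xk)}\leq c^2\Delta(\xkh,\xk)$.

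Combining these yields exactly the one-step recursion
\[
\Delta(\xkk,\xopt)\leq (1-\etak\mu_{i_k})\Delta(\xk,\xopt) + (c^2-1+2\etak\mu_{i_k})\Delta(\xkh,\xk).
\]
Setting $c=1/4$ and imposing $\etamax\leq \min_i 1/(4\mu_i)$ makes the coefficient of $\Delta(\xkh,\xk)$ non-positive, so that term drops. Taking expectation with respect to $i_k$, and using the Lipschitz line-search lower bound $\etak\geq \min\{c/L_{i_k},\etamax\}\geq \min\{1/(4\Lk),\etamax\}$, I would then split into the two cases $\etamax<1/(4\Lmax)$ and $\etamax\geq 1/(4\Lmax)$ exactly as in Appendix~\ref{app:seg-min-ls-rsi}. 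Each case gives a linear contraction by $(1-\bar\mu\,\etamax)$ or $(1-\bar\mu/(4\Lmax))$ respectively, and iterating from $k=0$ to $T$ produces the stated bound with the maximum of the two factors.

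The only substantive issue, as opposed to bookkeeping, is ensuring compatibility of the two constraints on $\etak$: the line-search lower bound $\etak\geq \min\{1/(4\Lk),\etamax\}$ must coexist with the requirement $\etak\leq c/\mu_{i_k}=1/(4\mu_{i_k})$ needed to neutralize the $\Delta(\xkh,\xk)$ term. This is precisely why the hypothesis $\etamax\leq \min_i 1/(4\mu_i)$ is imposed; because $\mu_i\leq L_i$, this is a mild restriction and the lower bound on $\etak$ from the line-search is automatically consistent with it. Beyond this point the proof is a transcription of Theorem~\ref{thm:seg-min-ls} into operator notation, so no new ideas are required.
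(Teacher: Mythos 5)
Your proposal is correct and follows essentially the same route as the paper: the paper's proof of Theorem~\ref{thm:seg-sc-minmax} consists precisely of observing that strong monotonicity plus interpolation ($F_{i_k}(\xopt)=0$) yields the RSI-like inequality $\langle F_{i_k}(\x), \x-\xopt\rangle \geq \mu_{i_k}\normsq{\x-\xopt}$, and then invoking the proof of Theorem~\ref{thm:seg-min-ls} verbatim with $\gradk{\cdot}$ replaced by $F_{i_k}(\cdot)$. Your write-up actually makes explicit the steps (Lemma~\ref{lemma:gen}, the completion of the square, Young's inequality, the line-search bounds, and the case split on $\eta_{\text{max}}$ versus $1/(4L_{\text{max}})$) that the paper only references by pointer.
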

\begin{proof}
\begin{align*}
    \intertext{For each $\opk{\cdot}$, we use the strong-monotonicity condition with constant $\muk$,}
    \langle \opk{u} - \opk{v}, u - v \rangle & \geq \muk \normsq{u - v} \\
    \intertext{Set $u = \x$, $v = \xopt$,}
    \implies \langle \opk{\x} - \opk{\xopt}, \x - \xopt \rangle & \geq \muk \normsq{\x - \xopt} \\
    \intertext{By the interpolation condition,}
    \opk{\xopt} & = 0 \\
    \implies \langle \opk{\x}, \x - \xopt \rangle & \geq \muk
    \normsq{\x - \xopt} \\
    \intertext{This is equivalent to an RSI-like condition, but with the gradient operator $\gradk{\cdot}$ replaced with a general operator $\opk(\cdot)$.}
\end{align*}
From here on, the theorem follows the same proof as that for Theorem~\ref{thm:seg-min-ls} above with the $\opk$ instead of $\gradk$ and the strong-convexity constant being replaced with the constant for strong-monotonicity.
\end{proof}
Like in the RSI case, the above result can also be obtained using a constant step-size $\eta \leq \frac{1}{4 \; L_{\text{max}}}$.

\subsection{SEG for bilinear saddle point problems}
\label{sec:seg-bilin-ls}
Let us consider the bilinear saddle-point problem of the form $\min_{x} \max_{y} x^\top A y - x^\top b - y^\top c$, where $A$ is the ``coupling'' matrix and where both $b$ and $c$ are vectors~\cite{gidel2018variational, chavdarova2019reducing}. In this case, the (monotone) operator $F(x,y) = \left[ Ax- b, \, -A^\top y +c \right]$ and we assume the finite sum formulation as:
\begin{equation}
    x^\top A y - x^\top b - y^\top c = \frac{1}{n} \sum_{i=0}^n x^\top A_i y - x^\top b_i - y^\top c_i
\end{equation}
We show that the interpolation condition enables SEG with Lipschitz line-search achieve a linear rate of convergence. In every iteration, the SEG algorithm samples rows $A_i$ (resp. columns $A_j$) of the matrix $A$ and the respective coefficient $b_i$ (resp. $c_j$). If $x_k$ and $y_k$ correspond to the iterates for the minimization and maximization problem respectively, then the update rules for SEG can be written as:
\begin{equation}
    \left\{\begin{aligned}
    x_{k+1} = x_k - \etak (A_{i_k} y_{k+1/2} - b_{i_k})\\
    y_{k+1} = y_k + \etak (A_{i_k}^\top x_{k+1/2} - c_{i_k})
    \end{aligned}
    \right.
    \quad 
    \text{and}
    \quad
        \left\{\begin{aligned}
    x_{k+1/2} = x_k - \etak (A_{i_k} y_k  - b_{i_k})\\
    y_{k+1/2} = y_k + \etak (A_{i_k}^\top x_k  - c_{i_k})
    \end{aligned}
    \right.
\end{equation}
which can be more compactly written as,
\begin{align}
\label{eq:seg_bilin_update}
    x_{k+1} = x_k - \etak (A_{i_k} (y_k + \etak(A_{i_k}^\top x_k - c_{i_k}) - b_{i_k})\\
    y_{k+1} = y_k + \etak (A_{i_k}^\top (x_k - \etak(A_{i_k} y_k - b_{i_k}) - c_{i_k}) \, \nonumber.
\end{align}
We now prove that SEG attains the following linear rate of convergence. 
\begin{theorem}[Bilinear]
Assuming the (a) interpolation property and for the (b) bilinear saddle point problem, SEG with Lipschitz line-search with $c = 1/\sqrt{2}$ in Equation~\ref{eq:lip-ls} achieves the following rate:
\begin{align*}
\E \left[ \normsq{\x_{k} - \xopt} \right] & \leq \left( \max \left\{ \left(1- \frac{\sigma_{\min}(\E[A_{i_k} A^\top_{i_k}]}{4\max_i \sigma_{\text{max}}(A_i A_i^\top)}\right), \left(1- \frac{\eta_{\text{max}}}{2} \; \sigma_{\min}(\E[A_{i_k} A^\top_{i_k}]\right) \right\}\right)^{T} (\|x_{k}\|^2 + \|y_{k}\|^2 )
\end{align*}
\label{thm:seg-bilin-ls}
\end{theorem}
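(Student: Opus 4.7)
The plan is to adapt the SEG analysis used for Theorem~\ref{thm:seg-min-ls}, replacing the RSI-based control of $\langle F_{i_k}(z_{k+1/2}), z_{k+1/2} - z^*\rangle$ by the skew-adjoint structure of the bilinear operator, and using interpolation to cancel the stochastic noise. Stack the iterates as $z_k = (x_k, y_k)$, $z^* = (x^*, y^*)$, and write $F_i(z) = (A_i y - b_i,\, -A_i^\top x + c_i)$, so the updates in~\eqref{eq:seg_bilin_update} read $z_{k+1/2} = z_k - \eta_k F_{i_k}(z_k)$ and $z_{k+1} = z_k - \eta_k F_{i_k}(z_{k+1/2})$. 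Interpolation at the saddle point $z^*$ translates to $A_i y^* = b_i$ and $A_i^\top x^* = c_i$ for every $i$, i.e.\ $F_i(z^*) = 0$ for all $i$.

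First I would apply Lemma~\ref{lemma:gen} with $a = z_{k+1}$, $b = z_k$, $c = -\eta_k F_{i_k}(z_{k+1/2})$, $d = z^*$, yielding an expansion of $\|z_{k+1}-z^*\|^2$ in terms of $\|z_k-z^*\|^2$, a telescoping term $-\|z_{k+1}-z_k\|^2$, and an inner product involving $F_{i_k}(z_{k+1/2})$. I then use the identity $z_{k+1} - z^* = (z_{k+1/2} - z^*) + \eta_k (F_{i_k}(z_k) - F_{i_k}(z_{k+1/2}))$ to split the inner product. The key observation is that the bilinear operator is skew-adjoint: $\langle F_i(u) - F_i(v),\, u - v\rangle = 0$ for all $u,v$. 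Combined with $F_{i_k}(z^*) = 0$ by interpolation, this gives $\langle F_{i_k}(z_{k+1/2}),\, z_{k+1/2} - z^*\rangle = 0$, which plays the role that strong monotonicity played in the RSI proof.

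After this simplification and applying the Lipschitz line-search condition $\|F_{i_k}(z_{k+1/2}) - F_{i_k}(z_k)\| \leq c \|F_{i_k}(z_k)\|$, I expect to arrive at the one-step inequality
\[ \|z_{k+1} - z^*\|^2 \;\leq\; \|z_k - z^*\|^2 \;-\; (1 - c^2)\,\eta_k^2\,\|F_{i_k}(z_k)\|^2, \]
which with $c = 1/\sqrt{2}$ becomes a $\tfrac{1}{2}\eta_k^2 \|F_{i_k}(z_k)\|^2$ contraction. Taking expectation over $i_k$ and again invoking $F_{i_k}(z^*) = 0$, I can rewrite $\|F_{i_k}(z_k)\|^2 = \|A_{i_k}(y_k - y^*)\|^2 + \|A_{i_k}^\top(x_k - x^*)\|^2$, so that $\E_{i_k}\|F_{i_k}(z_k)\|^2$ becomes a quadratic form in $(z_k - z^*)$ with block-diagonal matrix whose blocks are $\E[A_{i_k} A_{i_k}^\top]$ and $\E[A_{i_k}^\top A_{i_k}]$; lower-bounding by $\sigma_{\min}(\E[A_{i_k} A_{i_k}^\top])\,\|z_k - z^*\|^2$ and invoking the step-size lower bound $\eta_k^2 \geq \min\{c^2/\max_i \sigma_{\max}(A_iA_i^\top),\, \eta_{\max}^2\}$ (which follows from the Lipschitz line-search and $L_{i_k} = \sigma_{\max}(A_{i_k})$ for the bilinear $F_{i_k}$) delivers the claimed contraction factor. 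Iterating over $T$ steps and initializing at $(x_0, y_0)$ gives the stated bound.

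The main technical obstacle is the coupling between $\eta_k$ and $\|F_{i_k}(z_k)\|^2$ inside the expectation: since both depend on the sample $i_k$, the product cannot simply be factored. My workaround is to split into the two regimes $\eta_{\max} \leq c/L_{\max}$ (where $\eta_k = \eta_{\max}$ deterministically) and $\eta_{\max} > c/L_{\max}$ (where $\eta_k^2 \geq c^2/L_{\max}^2$ uniformly in $i_k$, with $L_{\max} = \max_i \sigma_{\max}(A_i)$), each of which gives a deterministic lower bound on $\eta_k^2$ that can be pulled out of the expectation, yielding the two cases in the $\max\{\cdot,\cdot\}$ appearing in the statement. A minor care point is that $\E[A_{i_k} A_{i_k}^\top]$ and $\E[A_{i_k}^\top A_{i_k}]$ may have distinct spectra in general, so the $\sigma_{\min}(\E[A_{i_k} A_{i_k}^\top])$ appearing in the bound must be interpreted as (or replaced by) the minimum of the two smallest eigenvalues to correctly lower-bound the cross-product $\|z_k - z^*\|^2$.
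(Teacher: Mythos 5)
Your proposal is correct and follows essentially the same route as the paper: after using interpolation to reduce to $F_i(z^*)=0$ (equivalently $b=c=0$ after recentering), both arguments arrive at the identical one-step inequality $\|z_{k+1}-z^*\|^2 \le \|z_k-z^*\|^2 - (1-c^2)\,\eta_k^2\,\|F_{i_k}(z_k)\|^2$ — the paper by directly expanding the compact update and watching the cross terms cancel, you via Lemma~\ref{lemma:gen} plus skew-adjointness — and then both invoke the Lipschitz line-search lower bound on $\eta_k$ and take expectations over the same two step-size regimes. Your two flagged care points (decoupling $\eta_k$ from $i_k$ by a deterministic per-regime lower bound before taking the expectation, and replacing $\sigma_{\min}(\E[A_{i_k}A_{i_k}^\top])$ by the smaller of the two block minima) are actually handled less carefully in the paper's own write-up, so they are welcome refinements rather than deviations.
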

\begin{proof}
If $(x^*, y^*)$ is the solution to the above saddle point problem, then interpolation hypothesis implies that 
\begin{equation*}
    A_{i_k} y^* = b_{i_k} \quad \text{and} \quad A_i^\top x^* = c_i
\end{equation*}
We note that the problem can be reduced to the case $b = c =0$ by using the change of variable $\tilde x_k := x_k - x^*$ and $\tilde y_k := y_k - y^*$. 
\begin{align*}
    \tilde x_{k+1} = x_{k+1} - x^* 
    &= x_k -x^* - \etak (A_{i_k} (y_k - y^* + \etak A_{i_k}^\top (x_k - x^*)) 
    = \tilde x_k - \etak A_{i_k} (\tilde y_k + \etak A_{i_k}^\top \tilde x_k) 
    \\
    \tilde y_{k+1} = y_{k+1} - y^* 
    &= y_k - y^* + \etak (A_{i_k}^\top (x_k - x^* - \etak A_{i_k} (y_k - y^*))
    = \tilde y_k + \etak A_{i_k}^\top (\tilde x_k - \etak A_{i_k} \tilde y_k)
\end{align*}
Thus, $(\tilde x_{k+1}, \tilde y_{k+1})$ correspond to the update rule~Eq.\eqref{eq:seg_bilin_update} with $b = c = 0$. Note that the interpolation hypothesis is key for this problem reduction.

In the following, without loss of generality, we will assume that $b = c =0$. 

Using the update rule, we get,
\begin{align*}
    \|x_{k+1}\|^2 + \|y_{k+1}\|^2 à
    &=\|x_{k}\|^2 + \|y_{k}\|^2 -  \etak^2 ( x_k^\top A_{i_k} A_{i_k}^\top x_k + y_k^\top A_{i_k}^\top A_{i_k} y_k) + \etak^4 ( x_k^\top (A_{i_k} A_{i_k}^\top)^2 x_k + y_k^\top (A_{i_k}^\top A_{i_k})^2 y_k)
\end{align*}
The line-search hypothesis can be simplified as,
\begin{equation}
    \etak^2 ( x_k^\top (A_{i_k} A_{i_k}^\top)^2 x_k + y_k^\top (A_{i_k}^\top A_{i_k})^2 y_k)
    \leq c^2 ( x_k^\top A_{i_k} A_{i_k}^\top x_k + y_k^\top A_{i_k}^\top A_{i_k} y_k)
\end{equation}
leading to,
\begin{align*}
    \|x_{k+1}\|^2 + \|y_{k+1}\|^2 à
    & \leq \|x_{k}\|^2 + \|y_{k}\|^2 -  \etak^2(1-c^2) ( x_k^\top A_{i_k} A_{i_k}^\top x_k + y_k^\top A_{i_k}^\top A_{i_k} y_k)
\end{align*}
Noting that $L_{\text{max}} = \left[\max_i \sigma_{\text{max}}(A_i A_i^\top) \right]^{1/2}$, we obtain $\etak \geq  \min \left\{ \left[2 \max_i \sigma_{\text{max}}(A_i A_i^\top)\right]^{-1/2}, \eta_{\text{max}} \right\}$ from the Lipschitz line-search. Taking the expectation with respect to $i_k$ gives,
\begin{align*}
    \E \big[ \|x_{k+1}\|^2 + \|y_{k+1}\|^2 \big]
    & \leq (1- \etak^2 \sigma_{\min}(\E[A_{i_k} A^\top_{i_k}])(1-c^2)) (\|x_{k}\|^2 + \|y_{k}\|^2 )\\
    &\leq \max \left\{ \left(1- \frac{\sigma_{\min}(\E[A_{i_k} A^\top_{i_k}])}{4 \max_i \sigma_{\text{max}}(A_i A_i^\top)}\right), \left(1 - \frac{\eta_{\text{max}}}{2} \; \sigma_{\min}(\E[A_{i_k} A^\top_{i_k}])\right) \right\} (\|x_{k}\|^2 + \|y_{k}\|^2 ).
\end{align*}
Applying this inequality recursively and taking expectations yields the final result.
\end{proof}
Observe that the rate depends on the minimum and maximum singular values of the matrix formed using the mini-batch of examples selected in the SEG iterations. Note that these are the first results for bilinear min-max problems in the stochastic, interpolation setting.

\section{Additional Non-Convex Proofs}
\label{app:add_nc_proofs}

We can prove additional convergence results for non-convex functions satisfying the SGC by (1) enforcing independence between $\etak$ and the stochastic gradient $\gradk{\xk}$ or (2) enforcing that the sequence of step-sizes $(\etak)$ is non-increasing. \\

\subsection{Independent Step-sizes}

To enforce independence of the step-size and stochastic gradient, we perform a backtracking line-search at the current iterate $\xk$ using a mini-batch of examples that is \emph{independent} of the mini-batch on which $\gradk{\xk}$ is evaluated. 
This equates to evaluating the Armijo condition in Equation~\ref{eq:c-ls} using stochastic estimates $f_{jk}(\xk)$ and $\nabla f_{jk}(\xk)$ that are independent of $\gradk{\xk}$, which is used in the gradient update.
In practice, the mini-batch at previous iteration can be re-used for the line-search.
A similar technique has recently been used to prove convergence of SGD with the AdaGrad step-size~\cite{li2019convergence}.
Alternatively, the current mini-batch can be divided, with one set of examples used to evaluate the stochastic Armijo condtion and the other to compute the gradient-step.
In this setting, we prove the following convergence rate for SGD with the Armijo line-search.

\begin{theorem}
Assuming (a) the SGC with constant $\rho$, (b) $L_i$-smoothness of $f_i$'s, and (c) independence of $\etak$ and the stochastic gradient $\gradk{\xk}$ at every iteration $k$, SGD with the Armijo line-search in Equation~\ref{eq:c-ls} with $c = 1/2$ and setting $\eta_{\text{max}} = \nicefrac{1}{\rho L}$ achieves the rate:
\begin{align*}
\min_{k = 0, \ldots, T-1} \E \normsq{\grad{\xk}} &\leq \frac{2}{T} \left(\max\left\{L_{\text{max}}, \frac{1}{\eta_{\text{max}}}\right\}\right) \left( f(\x_0) - f(\xopt)\right).
\end{align*}
\label{thm:sgd-decor-nc}
\end{theorem}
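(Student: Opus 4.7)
The plan is to use the standard $L$-smoothness descent lemma for $f$, but exploit the independence of $\etak$ and $\gradk{\xk}$ to factor the products of expectations that appear. Starting from $L$-smoothness and the SGD update $\xkk = \xk - \etak \gradk{\xk}$,
\begin{equation*}
f(\xkk) \leq f(\xk) - \etak \langle \grad{\xk}, \gradk{\xk}\rangle + \frac{L \etak^2}{2}\normsq{\gradk{\xk}}.
\end{equation*}
Conditioning on $\xk$ and using independence, the cross term factors as $\E[\etak \mid \xk] \langle \grad{\xk}, \E[\gradk{\xk}\mid\xk]\rangle = \E[\etak\mid\xk]\normsq{\grad{\xk}}$ (unbiasedness), and the last term factors as $\E[\etak^2\mid\xk]\,\E[\normsq{\gradk{\xk}}\mid\xk]$. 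Applying the SGC, $\E[\normsq{\gradk{\xk}}\mid\xk] \leq \rho \normsq{\grad{\xk}}$, yields
\begin{equation*}
\E[f(\xkk)\mid\xk] \leq f(\xk) - \Bigl(\E[\etak\mid\xk] - \tfrac{\rho L}{2}\E[\etak^2\mid\xk]\Bigr)\normsq{\grad{\xk}}.
\end{equation*}

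The main step is to show that the coefficient in parentheses is at least $\tfrac{1}{2}\E[\etak\mid\xk]$. For this I would use the deterministic upper bound $\etak \leq \etamax = 1/(\rho L)$, which gives $\etak^2 \leq \etamax\, \etak$ pointwise and hence $\E[\etak^2\mid\xk] \leq \etamax \E[\etak\mid\xk] = \frac{1}{\rho L}\E[\etak\mid\xk]$. Substituting,
\begin{equation*}
\E[\etak\mid\xk] - \tfrac{\rho L}{2}\E[\etak^2\mid\xk] \geq \E[\etak\mid\xk]\bigl(1 - \tfrac{1}{2}\bigr) = \tfrac{1}{2}\E[\etak\mid\xk].
\end{equation*}
Next, Lemma~\ref{lemma:eta-bounds} with $c = 1/2$ gives $\etak \geq \etamin := \min\{1/\Lmax, \etamax\}$ deterministically, hence $\E[\etak\mid\xk] \geq \etamin$. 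Combining,
\begin{equation*}
\E[f(\xkk)\mid\xk] \leq f(\xk) - \frac{\etamin}{2}\normsq{\grad{\xk}}.
\end{equation*}

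To finish, I would rearrange, take total expectations, and telescope from $k=0$ to $T-1$, bounding the telescoping sum by $f(\x_0) - f(\xopt)$ via the lower bound on $f$. Dividing by $T$ and taking the minimum over $k$ yields $\min_k \E\normsq{\grad{\xk}} \leq \tfrac{2}{\etamin T}(f(\x_0) - f(\xopt))$, and rewriting $1/\etamin = \max\{\Lmax, 1/\etamax\}$ gives the stated bound. The only conceptually non-routine step is the absorption trick $\E[\etak^2] \leq \etamax \E[\etak]$; this is what forces the specific choice $\etamax = 1/(\rho L)$ and is the place where tightness of the constant is decided. Independence is indispensable throughout, since without it the cross term and the $\etak^2 \normsq{\gradk{\xk}}$ term cannot be decoupled, which is precisely why the analysis in Theorem~\ref{thm:sgd-nc} requires a stronger assumption on $c$ and $\etamax$.
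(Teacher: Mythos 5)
Your proposal is correct and follows essentially the same route as the paper's proof: both start from the $L$-smoothness descent inequality, use independence of $\etak$ and $\gradk{\xk}$ to decouple the cross and quadratic terms, apply the SGC, absorb the quadratic term via $\etak^2 \leq \etamax\,\etak$ (the paper writes this as $\etak\bigl(\tfrac{\rho L \etak}{2}-1\bigr) \leq \etak\bigl(\tfrac{\rho L \etamax}{2}-1\bigr)$), lower-bound $\etak$ by $\min\{1/\Lmax,\etamax\}$ via Lemma~\ref{lemma:eta-bounds}, and telescope. The only cosmetic difference is that the paper keeps $\etamax \leq 2/(\rho L)$ general until the end, where setting $\etamax = 1/(\rho L)$ yields the factor $2/(2-\rho L \etamax)=2$, whereas you substitute $\etamax = 1/(\rho L)$ up front.
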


\begin{proof}
Starting from L-smoothness of $f$,
\begin{align*}
    f(\xkk) &\leq f(\xk) + \langle \grad{\xk}, \xkk - \xk \rangle + \frac{L}{2}\normsq{\xkk - \xk}\\
    &= f(\xk) - \etak \langle \grad{\xk}, \gradk{\xk} \rangle + \frac{L \etak^2}{2} \normsq{\gradk{\xk}}\\
    \intertext{Taking expectations with respect to $ik$ and noting that $\etak$ is independent of $\gradk{\xk}$,}
    \implies \E_{ik}\left[f(\xkk)\right] &\leq f(\xk) - \etak \E_{ik}\left[\langle \grad{\xk}, \gradk{\xk} \rangle \right] + \frac{L\etak^2}{2}\E_{ik}\left[ \normsq{\gradk{\xk}}\right]\\
    &= f(\xk) - \etak \normsq{\grad{\xk}} + \frac{L\etak^2}{2} \E_{ik}\left[ \normsq{\gradk{\xk}} \right]\\
    &\leq f(\xk) - \etak \normsq{\grad{\xk}} + \frac{L\etak^2}{2} \rho \normsq{\grad{\xk}} \tag{by SGC}\\
    &= f(\xk) + \etak \left(\frac{\rho L \etak}{2} - 1 \right) \normsq{\grad{\xk}}\\
    &\leq f(\xk) + \etak \left(\frac{\rho L \eta_{\text{max}}}{2} - 1 \right) \normsq{\grad{\xk}}\\
    \intertext{Assume $\eta_{\text{max}} \leq \frac{2}{\rho L}$. Then $\frac{\rho L \eta_{\text{max}}}{2} - 1\leq 0$ and we may lower bound $\etak$ using Equation~\ref{eq:eta-bounds},}
    \E_{ik}\left[f(\xkk)\right] &\leq f(\xk) + \min
    \left\{\frac{2(1-c)}{L_{\text{max}}}, \eta_{\text{max}}\right\} \left(\frac{\rho L \eta_{\text{max}}}{2} - 1 \right) \normsq{\grad{\xk}}.\\
    \intertext{Choosing $c = \frac{1}{2}$,}
    \E_{ik}\left[f(\xkk)\right] &= f(\xk) + \min\left\{\frac{1}{L_{\text{max}}}, \eta_{\text{max}}\right\} \left(\frac{\rho L \eta_{\text{max}}}{2} - 1 \right) \normsq{\grad{\xk}}\\
    \intertext{Taking expectation w.r.t $\etak$ and $\xk$,}
    \E\left[f(\xkk)\right] &\leq f(\xk) + \min\left\{\frac{1}{L_{\text{max}}}, \eta_{\text{max}}\right\} \left(\frac{\rho L \eta_{\text{max}}}{2} - 1 \right) \normsq{\grad{\xk}} \addtocounter{equation}{1}\tag{\theequation} \label{eq:decorr-intermediate} \\
    \implies \normsq{\grad{\xk}} &\leq \max\left\{L_{\text{max}}, \frac{1}{\eta_{\text{max}}}\right\} \left(\frac{2}{2 - \rho L \eta_{\text{max}}}\right)\left(\E\left[f(\xk) - f(\xkk)\right] \right)
    \intertext{Summing over iterations,}
    \implies \sum_{k = 0}^{T-1} \E \left[ \normsq{\grad{\xk}}\right] &\leq \max\left\{L_{\text{max}}, \frac{1}{\eta_{\text{max}}}\right\} \left(\frac{2}{2 - \rho L \eta_{\text{max}}}\right) \E \left[ \sum_{k=0}^{T-1} \left(f(\xk) - f(\xkk) \right) \right]\\
    &\leq \max\left\{L_{\text{max}}, \frac{1}{\eta_{\text{max}}}\right\} \left(\frac{2}{2 - \rho L \eta_{\text{max}}}\right) \E \left[ f(\x_0) - f(\x_{T}) \right]\\
    &\leq \max\left\{L_{\text{max}}, \frac{1}{\eta_{\text{max}}}\right\} \left(\frac{2}{2 - \rho L \eta_{\text{max}}}\right) \left( f(\x_0) - f(\xopt)\right)\\
    \implies \min_{k = 0, \ldots, T-1} \E \normsq{\grad{\xk}} &\leq \frac{1}{T} \left(\frac{2\max\left\{L_{\text{max}}, \frac{1}{\eta_{\text{max}}}\right\}}{2 - \rho L \eta_{\text{max}}}\right) \left( f(\x_0) - f(\xopt)\right)
    \intertext{If $\eta_{\text{max}} = \frac{1}{\rho L}$,}
    \min_{k = 0, \ldots, T-1} \E \normsq{\grad{\xk}} &\leq \frac{2}{T} \left(\max\left\{L_{\text{max}}, \frac{1}{\eta_{\text{max}}}\right\}\right) \left( f(\x_0) - f(\xopt)\right)
\end{align*}
\end{proof}
\vspace{-1em}
We can extend this result to obtain a linear convergence rate when $f$ satisfies the Polyak-Łojasiewicz inequality with constant $\mu$. 
This is formalized in the following theorem.

\begin{theorem}
Assuming (a) the SGC with constant $\rho$, (b) $L_i$-smoothness of $f_i$'s, (c) $f$ is $\mu$-PL, and (d) independence of $\etak$ and the stochastic gradient $\gradk{\xk}$ at every iteration $k$, SGD with the Armijo line-search in Equation~\ref{eq:c-ls} with $c = 1/2$ and setting $\eta_{\text{max}} = \nicefrac{1}{\rho L}$ achieves the rate:
\begin{align*}
\E\left[f(\x_{T}) - f(\xopt)\right] &\leq  \left(1 - \mu \min\left\{\frac{1}{L_{\text{max}}}, \eta_{\text{max}}\right\} \right)^{T} \left(f(\x_{0}) - f(\xopt) \right)
\end{align*}
\label{thm:sgd-decor-nc-pl}
\end{theorem}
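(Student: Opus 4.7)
The plan is to piggy-back on the descent lemma already established in the proof of Theorem~\ref{thm:sgd-decor-nc} and then convert the resulting gradient-norm bound into a functional contraction via the PL inequality. Concretely, I would start from the intermediate inequality in that proof, namely
\begin{align*}
\E\!\left[f(\xkk)\right] \;\leq\; f(\xk) + \min\!\left\{\tfrac{1}{L_{\text{max}}},\,\eta_{\text{max}}\right\}\!\left(\tfrac{\rho L\,\eta_{\text{max}}}{2} - 1\right)\normsq{\grad{\xk}},
\end{align*}
which was derived using only (a) $L$-smoothness of $f$, (b) the SGC, (c) independence of $\etak$ and $\gradk{\xk}$, and (d) the step-size lower bound from Lemma~\ref{lemma:eta-bounds} with $c = 1/2$. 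None of these steps used PL, so they carry over verbatim.

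Next I would specialize $\eta_{\text{max}} = 1/(\rho L)$, which makes the parenthesis collapse to $-1/2$ and yields
\begin{align*}
\E\!\left[f(\xkk)\right] \;\leq\; f(\xk) \;-\; \tfrac{1}{2}\min\!\left\{\tfrac{1}{L_{\text{max}}},\,\eta_{\text{max}}\right\}\normsq{\grad{\xk}}.
\end{align*}
Now I would invoke the PL inequality $\normsq{\grad{\xk}} \geq 2\mu(f(\xk) - f(\xopt))$, subtract $f(\xopt)$ from both sides, and take total expectation to obtain a one-step contraction
\begin{align*}
\E\!\left[f(\xkk) - f(\xopt)\right] \;\leq\; \left(1 - \mu \min\!\left\{\tfrac{1}{L_{\text{max}}},\,\eta_{\text{max}}\right\}\right)\E\!\left[f(\xk) - f(\xopt)\right].
\end{align*}
Unrolling this recursion from $k = 0$ to $k = T-1$ gives the claimed linear rate.

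There is essentially no hard step, since the heavy lifting (handling the non-convex stochastic descent step and exploiting independence to take expectations cleanly) is already done inside the proof of Theorem~\ref{thm:sgd-decor-nc}. The only thing worth being slightly careful about is verifying that the contraction factor is in $(0,1)$: since $\mu \leq L \leq \Lmax$ and $\mu \leq L \leq \rho L = 1/\eta_{\text{max}}$, the quantity $\mu \min\{1/L_{\text{max}},\,\eta_{\text{max}}\}$ is at most $1$ (and strictly less than $1$ whenever the problem is non-degenerate), so the geometric rate is well-defined. Thus the result follows directly by reusing the per-iteration descent lemma of Theorem~\ref{thm:sgd-decor-nc} and composing it with PL.
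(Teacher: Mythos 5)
Your proposal is correct and follows essentially the same route as the paper: both start from the intermediate one-step descent inequality (Equation~\ref{eq:decorr-intermediate}) established in the proof of Theorem~\ref{thm:sgd-decor-nc}, apply the PL inequality $\normsq{\grad{\xk}} \geq 2\mu\left(f(\xk) - f(\xopt)\right)$, specialize $\eta_{\text{max}} = \nicefrac{1}{\rho L}$ to make the coefficient $2\left(1 - \frac{\rho L \eta_{\text{max}}}{2}\right)$ equal to $1$, and unroll the resulting contraction. The only (immaterial) difference is the order in which you substitute $\eta_{\text{max}}$ versus invoke PL, and your added sanity check that the contraction factor lies in $(0,1)$ is a welcome touch the paper omits.
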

\begin{proof}
Starting from Equation~\ref{eq:decorr-intermediate} and applying the PL-inequality:
\begin{align*}
    \E\left[f(\xkk)\right] &\leq f(\xk) - \min\left\{\frac{1}{L_{\text{max}}}, \eta_{\text{max}}\right\} \left(1 - \frac{\rho L \eta_{\text{max}}}{2} \right) \normsq{\grad{\xk}}\\
    &\leq f(\xk) - \min\left\{\frac{1}{L_{\text{max}}}, \eta_{\text{max}}\right\} \left(1 - \frac{\rho L \eta_{\text{max}}}{2} \right) 2 \mu \left(f(\xk) - f(\xopt)\right) \\
    \implies \E\left[f(\xkk)\right] - f(\xopt) &\leq  \left(1 - 2\mu \min\left\{\frac{1}{L_{\text{max}}}, \eta_{\text{max}}\right\} \left(1 - \frac{\rho L \eta_{\text{max}}}{2} \right) \right) \left(f(\xk) - f(\xopt) \right)
    \intertext{If $\eta_{\text{max}} = \frac{1}{\rho L}$,}
    \E\left[f(\xkk)\right] - f(\xopt) &\leq  \left(1 - \mu \min\left\{\frac{1}{L_{\text{max}}}, \eta_{\text{max}}\right\} \right) \left(f(\xk) - f(\xopt) \right)\\
    \intertext{Taking expectations and recursing gives the final result,}
    \implies \E\left[f(\x_{T}) - f(\xopt)\right] &\leq  \left(1 - \mu \min\left\{\frac{1}{L_{\text{max}}}, \eta_{\text{max}}\right\} \right)^{T} \left(f(\x_{0}) - f(\xopt) \right).
\end{align*}
\end{proof}
\vspace{-1em}
\subsection{Non-increasing Step-sizes}

An alternative approach is to assume that the sequence of step-sizes $(\etak)$ is non-increasing. That is, $\etak \leq \eta_{k-1}$ for all $k$.
This holds, for example, when using reset option 0 in Algorithm~\ref{alg:reset_options}. 
In this case, we prove a $O(1/T)$ convergence result for SGD with the stochastic Armijo line-search.
Our result assumes that the iterates $(\xk)$ are contained in a bounded set $\mathcal{X}$ centered on $\xopt$ and depends on the diameter of this set,
\[ D = \max_{\x \in \mathcal{X}} \norm{\x - \xopt}. \]
This is formalized as follows.

\begin{theorem}
Assuming (a) the SGC with constant $\rho$, (b) $L_i$-smoothness of $f_i$'s, (c) $\etak$ are non-increasing, and (d) $\norm{\xk - \xopt} \leq D$ for all $k$, SGD with the Armijo line-search in Equation~\ref{eq:c-ls} with $c = 1/2$ and setting $\eta_{\text{max}} = \nicefrac{1}{\rho L}$ achieves the rate:
\begin{align*}
\min_{k = 0, \ldots, T-1} \E \normsq{\grad{\xk}} &\leq \frac{L\,D^2}{T} \max\left\{L_{\text{max}}, \rho L \right\},
\end{align*}
where $C = \max\left\{ f(\xk) : k = 0, \dots, T-1 \right\}$
\label{thm:sgd-monotone-nc}
\end{theorem}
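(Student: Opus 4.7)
My plan is to mirror the descent argument behind Theorem~\ref{thm:sgd-nc}, but to resolve the coupling between $\etak$ and the stochastic gradient $\gradk{\xk}$ by exploiting monotonicity rather than enforcing independence (as in Theorem~\ref{thm:sgd-decor-nc}). I start, as usual, from $L$-smoothness of $f$:
\[ f(\xkk) \leq f(\xk) - \etak\langle \grad{\xk}, \gradk{\xk}\rangle + \tfrac{L\etak^2}{2}\normsq{\gradk{\xk}}. \]
Were $\etak$ independent of $i_k$, the cross term would collapse to $-\etak\normsq{\grad{\xk}}$ under $\E_{i_k}$ and the SGC would dispatch the quadratic term; the surrogate I propose to use in its place is the predictable step-size $\eta_{k-1}$, which dominates $\etak$ by assumption~(c) and is $\mathcal{F}_{k-1}$-measurable.

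Concretely, I would split $\etak = \eta_{k-1} - (\eta_{k-1}-\etak)$ inside the cross term,
\[ \etak\langle \grad{\xk}, \gradk{\xk}\rangle = \eta_{k-1}\langle \grad{\xk}, \gradk{\xk}\rangle - (\eta_{k-1}-\etak)\langle \grad{\xk}, \gradk{\xk}\rangle, \]
and bound the second piece by Cauchy--Schwarz together with the deterministic gradient bounds $\norm{\grad{\xk}} \leq LD$ and $\norm{\gradk{\xk}} \leq L_{\text{max}} D$, which follow from smoothness, assumption~(d), and the fact that the SGC implies interpolation (so $\grad{\xopt} = 0$ and $\gradi{\xopt} = 0$ for every $i$). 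Under $\E_{i_k}$ the first piece becomes the clean predictable term $\eta_{k-1}\normsq{\grad{\xk}}$. For the quadratic term I would use $\etak \leq \eta_{k-1}$ on both copies of $\etak$ and then apply the SGC, producing at most $\tfrac{L\eta_{k-1}^2\rho}{2}\normsq{\grad{\xk}}$; the prescribed choice $\eta_{\text{max}} = 1/(\rho L)$ is precisely what is needed to absorb this into half of the linear $\eta_{k-1}\normsq{\grad{\xk}}$ term, leaving a ``descent-with-slack'' inequality of the form
\[ \E_{i_k}[f(\xkk)] \leq f(\xk) - \tfrac{\eta_{k-1}}{2}\normsq{\grad{\xk}} + (\eta_{k-1}-\E_{i_k}[\etak])\,LL_{\text{max}}D^2. \]

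To finish, I would lower-bound $\eta_{k-1} \geq \eta_{\text{min}} = \min\{1/L_{\text{max}},\eta_{\text{max}}\}$ on the negative term, take full expectations, and sum over $k=0,\dots,T-1$. The slack telescopes thanks to monotonicity, $\sum_k(\E[\eta_{k-1}]-\E[\etak])\leq \eta_{\text{max}}$, and the descent sum gives $\sum_k \E[f(\xk)-f(\xkk)] \leq f(\x_0)-f(\xopt) \leq LD^2/2$ (again using smoothness centred at $\xopt$ together with the diameter bound). Dividing by $T\eta_{\text{min}}/2$ and using $1/\eta_{\text{min}} = \max\{L_{\text{max}},\rho L\}$ yields a bound of the advertised form $\tfrac{LD^2}{T}\max\{L_{\text{max}},\rho L\}$, up to a small absolute constant. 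The main obstacle is precisely the splitting step in the second paragraph: without monotonicity the slack $\eta_{k-1}-\etak$ need not be non-negative or telescopic, and without the diameter assumption the cross term $\langle\grad{\xk},\gradk{\xk}\rangle$ is not controllable a priori, so both hypotheses are doing essential work in taming the correlation between $\etak$ and $\gradk{\xk}$.
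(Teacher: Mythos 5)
Your argument is correct in outline and would deliver the stated $O(1/T)$ rate, but it takes a genuinely different route from the paper's proof. The paper's trick is to divide the smoothness inequality by $\etak$ \emph{before} taking expectations: the cross term then becomes $-\langle\grad{\xk},\gradk{\xk}\rangle$ with no step-size attached, so unbiasedness applies exactly and no decoupling device is needed at all. The price is that the resulting sum $\sum_k (f(\xk)-f(\xkk))/\etak$ no longer telescopes; the paper handles it by Abel summation, using monotonicity to make the weights $1/\eta_{k+1}-1/\eta_{k}$ nonnegative and the diameter bound to control the function values via $C-f(\xopt)\le LD^2/2$. You instead keep the inequality as written and substitute the predictable step-size $\eta_{k-1}$ for $\etak$, paying for the substitution with a slack term that you bound by Cauchy--Schwarz and the gradient bounds $\norm{\grad{\xk}}\le LD$ and $\norm{\gradk{\xk}}\le \Lmax D$ (both legitimate consequences of smoothness, interpolation, and assumption (d)), and which telescopes to at most $\etamax L\Lmax D^2$ by monotonicity. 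Both routes use monotonicity and the diameter in essential ways, just in different places: the paper spends the diameter on function values, you spend it on gradient norms. Two small caveats about your version: the recursion needs the convention $\eta_{-1}=\etamax$ to start at $k=0$ (consistent with non-increasingness), and your final bound carries an extra additive term of order $(\Lmax/\rho)\,D^2\max\{\Lmax,\rho L\}/T$ alongside the paper's $LD^2\max\{\Lmax,\rho L\}/T$, so it matches the advertised bound only up to a factor of roughly $1+2\Lmax/(\rho L)$, which is not an absolute constant unless $\Lmax\le\rho L$. A minor side benefit of your route is that the quantity $C=\max_k f(\xk)$, which appears vestigially in the theorem statement, never enters the argument.
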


\begin{proof}
Starting from L-smoothness of $f$,
\begin{align*}
    f(\xkk) & \leq f(\xk) - \langle \grad{\xk}, \etak \gradk{\xk} \rangle + \frac{L \etak^2}{2} \normsq{\gradk{\xk}} \\
\frac{f(\xkk) - f(\xk)}{\etak} & \leq - \langle \grad{\xk}, \gradk{\xk} \rangle + \frac{L \etak}{2} \normsq{\gradk{\xk}} \\
\intertext{Taking expectation,}
\E \left[\frac{f(\xkk) - f(\xk)}{\etak} \right] & \leq -\normsq{\grad{\xk}} + \E \left[\frac{L \etak}{2} \normsq{\gradk{\xk}}\right] \\    
& \leq -\normsq{\grad{\xk}} + \frac{L \eta_{\text{max}}}{2} \E \left[ \normsq{\gradk{\xk}} \right] \\    
\implies \E \left[\frac{f(\xkk) - f(\xk)}{\etak} \right] & \leq - \normsq{\grad{\xk}} + \frac{L \eta_{\text{max}} \rho}{2}  \normsq{\grad{\xk}} & \tag{By the SGC} \\    
\implies \left(1 - \frac{L \eta_{\text{max}} \rho}{2} \right) \normsq{\grad{\xk}} & \leq \E \left[\frac{f(\xk) - f(\xkk)}{\etak} \right] \\
\intertext{If $\eta_{\text{max}} \leq \frac{2}{L \rho}$,}
\normsq{\grad{\xk}} & \leq \frac{1}{1 - \frac{L \eta_{\text{max}} \rho}{2}} \E\left[\frac{f(\xk) - f(\xkk)}{\etak}\right]. 
\intertext{Taking expectations and summing from $k=0$ to $T-1$,}
\sum_{k=0}^{T-1} \E \left[\normsq{\grad{\xk}}\right] & \leq \frac{1}{1 - \frac{L \eta_{\text{max}} \rho}{2}} \E\left[ \sum_{k=0}^{T-1} \frac{f(\xk) - f(\xkk)}{\etak}\right] \\
&= \frac{1}{1 - \frac{L \eta_{\text{max}} \rho}{2}} \E\left[\frac{f(\x_0)}{\eta_0} - \frac{f(\x_T)}{\eta_{T-1}} + \sum_{k=0}^{T-2} \left(\frac{1}{\eta_{k+1}} - \frac{1}{\eta_{k}}\right)f(\xkk) \right].
\intertext{Denoting $C = \max\left\{ f(\xk) : k = 0 \dots T-1 \right\}$ and recalling $\eta_{k+1} \leq \etak$ for all $k$ gives}
\sum_{k=0}^{T-1} \E \left[\normsq{\grad{\xk}}\right]  &\leq \frac{1}{1 - \frac{L \eta_{\text{max}} \rho}{2}} \E\left[\frac{f(\x_0)}{\eta_0} - \frac{f(\x_T)}{\eta_{T-1}} + C \sum_{k=0}^{T-2} \left(\frac{1}{\eta_{k+1}} - \frac{1}{\eta_{k}}\right) \right]\\
&= \frac{1}{1 - \frac{L \eta_{\text{max}} \rho}{2}} \E\left[ \frac{f(\x_0)}{\eta_0} - \frac{f(\x_T)}{\eta_{T-1}} + C \left( \frac{1}{\eta_{T-1}} - \frac{1}{\eta_{0}} \right) \right]\\
&= \frac{1}{1 - \frac{L \eta_{\text{max}} \rho}{2}} \E\left[ \frac{f(\x_0) - C}{\eta_0} + \frac{C - f(\xopt)}{\eta_{T-1}} \right] \\ 
&\leq \frac{1}{1 - \frac{L \eta_{\text{max}} \rho}{2}} \E\left[\frac{C - f(\xopt)}{\eta_{T-1}} \right]. \tag{since $f(\x_0) - C \leq 0$}\\
\intertext{Noting that $C - f(\xopt) \geq 0$, we may bound $1/\eta_{T-1}$ using Equation~\ref{eq:eta-bounds} as follows:}
\frac{1}{\eta_{T-1}} & \leq \max\left\{\frac{\Lk}{2 \; (1-c)}, \frac{1}{\eta_{\text{max}}} \right\} \leq \max\left\{\frac{L_{\text{max}}}{2 \; (1-c)}, \frac{1}{\eta_{\text{max}}} \right\}. \\
\intertext{This implies}
\sum_{k=0}^{T-1} \E \left[\normsq{\grad{\xk}}\right] &\leq \frac{1}{1 - \frac{L \eta_{\text{max}} \rho}{2}} \max\left\{\frac{L_{\text{max}}}{2(1-c)}, \frac{1}{\eta_{\text{max}}} \right\} \E\left[C - f(\xopt) \right]\\
\implies \min_{k = 0, \ldots, T-1} \E \normsq{\grad{\xk}} &\leq \frac{1}{T} \left(\frac{1}{1 - \frac{L \eta_{\text{max}} \rho}{2}}\right) \max\left\{\frac{L_{\text{max}}}{2(1-c)}, \frac{1}{\eta_{\text{max}}} \right\} \E\left[C - f(\xopt) \right].\\ 
\intertext{If $\eta_{\text{max}} = \frac{1}{\rho L}$ and $c = \frac{1}{2}$, }
\min_{k = 0, \ldots, T-1} \E \normsq{\grad{\xk}} &\leq \frac{2}{T} \max\left\{L_{\text{max}}, \rho L \right\} \E\left[C - f(\xopt) \right].
\intertext{By $L$-smoothness of $f$,}
f(\xk) - f(\xopt) &\leq \frac{L}{2} \normsq{\xk - \xopt} \leq \frac{L}{2} D^2 \tag{since $\normsq{\xk - \xopt} \leq D^2$} \\
\implies C &\leq \frac{L\,D^2}{2},
\intertext{and we conclude}
\min_{k = 0, \ldots, T-1} \E \normsq{\grad{\xk}} &\leq \frac{L\,D^2}{T} \max\left\{L_{\text{max}}, \rho L \right\}.
\end{align*}
\end{proof}
\section{Additional Experimental Details}
\label{app:exp-details}
In this section we give details for all experiments in the main paper and the additional results given in Appendix~\ref{app:additional-results}. In all experiments, we used the default learning rates provided in the implementation for the methods we compare against. For the proposed line-search methods and for \emph{all} experiments in this paper, we set the initial step-size $\eta_{\text{max}} = 1$ and use back-tracking line-search where we reduce the step-size by a factor of $0.9$ if the line-search is not satisfied. We used $c = 0.1$ for all our experiments with both Armijo and Goldstein line-search procedures, $c=0.9$ for SEG with Lipschitz line-search, and $c = 0.5$ when using Nesterov acceleration~\footnote{Note that these choices are inspired by the theory}. For Polyak acceleration, we use $c=0.1$ in our experiments with deep neural networks and $c=0.5$ otherwise. For our non-convex experiments, we always constrain the step-size to be less than $10$ to prevent it from becoming unbounded. Note that we conduct a robustness study to quantify the influence of the $c$ and $\eta_{\text{max}}$ parameter in Section~\ref{app:experiments-robustness-computation}. For the heuristic in~\cite{schmidt2017minimizing, schmidt2015non}, we set the step-size increase factor to $\gamma = 1.5$ for convex minimization and use  $\gamma = 2$ for non-convex minimization. Similarly, when using Polyak momentum we set the momentum factor to the highest value that does not lead to divergence. It is set to $\beta = 0.8$ in the convex case and $\beta = 0.6$ in the non-convex case~\footnote{We hope to use method such as~\cite{zhang2017yellowfin} to automatically set the momentum parameter in the future.}. 

\subsection{Synthetic Matrix Factorization Experiment}
In the following we give additional details for synthetic matrix factorization experiment in Section \ref{sec:experiments-synthetic}. As stated in the main text, we set $A \in \mathbb{R}^{10 \times 6}$ with condition number $\kappa(A) = 10^{10}$ and generated a fixed dataset of $1000$ samples using the code released by Ben Recht~\footnote{This code is available at \url{https://github.com/benjamin-recht/shallow-linear-net}}. We withheld $200$ of these examples as a test set. All optimizers used mini-batches of $100$ examples and were run for $50$ epochs. We averaged over $20$ runs with different random seeds to control for variance in the training loss, which approached machine precision for several optimizers. 

\subsection{Binary Classification using Kernel Methods}
\label{app:kernel_exp_details}
We give additional details for the experiments on binary classification with RBF kernels in Section \ref{sec:experiments-kernels}. For all datasets, we used only the training sets available in the \texttt{LIBSVM} \cite{libsvm} library and used an 80:20 split of it. The 80 percent split of the data was used as a training set and 20 percent split as the test set. The bandwidth parameters for the RBF kernel were selected by grid search using 10-fold cross-validation on the training splits. The grid of kernel bandwidth parameters that were considered is \texttt{[0.05, 0.1, 0.25, 0.5, 1, 2.5, 5, 10, 15, 20]}. For the cross-validation, we used batch L-BFGS to minimize both objectives on the rcv1 and mushrooms datasets, while we used the Coin-Betting algorithm on the larger w8a and ijcnn datasets with mini-batches of 100 examples. In both cases, we ran the optimizers for 100 epochs on each fold. The bandwidth parameters that maximized cross-validated accuracy were selected for our final experiments. The final kernel parameters are given in Table \ref{table:kernel_datasets_details}, along with additional details for each dataset.

\begin{table}[]
    \centering
    \begin{tabular}{c|c c c c c c}
        Dataset & Dimension ($d)$ & Training Set Size  & Test Set Size & Kernel Bandwidth & SVRG Step-Size\\ \hline
        mushrooms & $112$ & $6499$ & $1625$ & $0.5$ & $500$  \\ 
        ijcnn & $22$ & $39992$ & $9998$ & $0.05$ & $500$ \\ 
        rcv1 & $47236$ & $16194$ & $4048$ & $0.25$ & $500$ \\ 
        w8a & $300$ & $39799$ & $9950$ & $20.0$ & $0.0025$ \\ 
    \end{tabular}
    \caption{Additional details for binary classification datasets used in convex minimization experiments. Kernel bandwidths were selected by $10$-fold cross validation on the training set. SVRG step-sizes were selected by 3-fold CV on the training set. See text for more details.}
    \label{table:kernel_datasets_details}
\end{table}
We used the default hyper-parameters for all baseline optimizers used in our other experiments. For PLS, we used the exponential exploration strategy and its default hyper-parameters. Fixed step-size SVRG requires that the step-size parameter to be well-tuned in order to obtain a fair comparison with adaptive methods. To do so, we selected step-sizes by grid search. For each step-size, a 3-fold cross-validation experiment was run on each dataset's training set. On each fold, SVRG was run with mini-batches of size $100$ for $50$ epochs. Final step-sizes were selected by maximizing convergence rate on the cross-validated loss. The grid of possible step-sizes was expanded whenever the best step-size found was the largest or smallest step-size in the considered grid. We found that the mushrooms, ijcnn, and rcv1 datasets admitted very large step-sizes; in this case, we terminated our grid-search when increasing the step-size further gave only marginal improvement. The final step-sizes selected by this procedure are given in Table \ref{table:kernel_datasets_details}.

Each optimizer was run with five different random seeds in the final experiment. All optimizers used mini-batches of $100$ examples and were run for $35$ epochs. Experiment figures display shaded error bars of one standard-deviation from the mean. Note that we did not use a bias parameter in these experiments. 

\subsection{Multi-class Classification using Deep Networks}
\label{app:deep_exp_details}
For mutliclass-classification with deep networks, we considered the MNIST and CIFAR10 datasets, each with $10$ classes. For MNIST, we used the standard training set consisting of $60$k examples and a test set of $10$k examples; whereas for CIFAR10, this split was $50$k training examples and $10$k examples in the test set. As in the kernel experiments, we evaluated the optimizers using the softmax. All optimizers were used with their default learning rates and without any weight decay. We used the experimental setup proposed in~\cite{luo2019adaptive} and used a batch-size of $128$ for all methods and datasets. As before, each optimizer was run with five different random seeds in the final experiment. The optimizers were run until the performance of most methods saturated; $100$ epochs for MNIST and $200$ epochs for the models on the CIFAR10 dataset. We compare against a tuned SGD method, that uses a constant step-size selected according to a search on the ${[1e-1, 1e-5]}$ grid and picking the variant that led to the best convergence in the training loss. This procedure resulted in choosing a step-size of $0.01$ for the MLP on MNIST and $0.1$ for both models on CIFAR10. 
\section{Additional Results}
\label{app:additional-results}
\begin{figure}[hbt!]
    \centering
    \includegraphics[width =\textwidth]{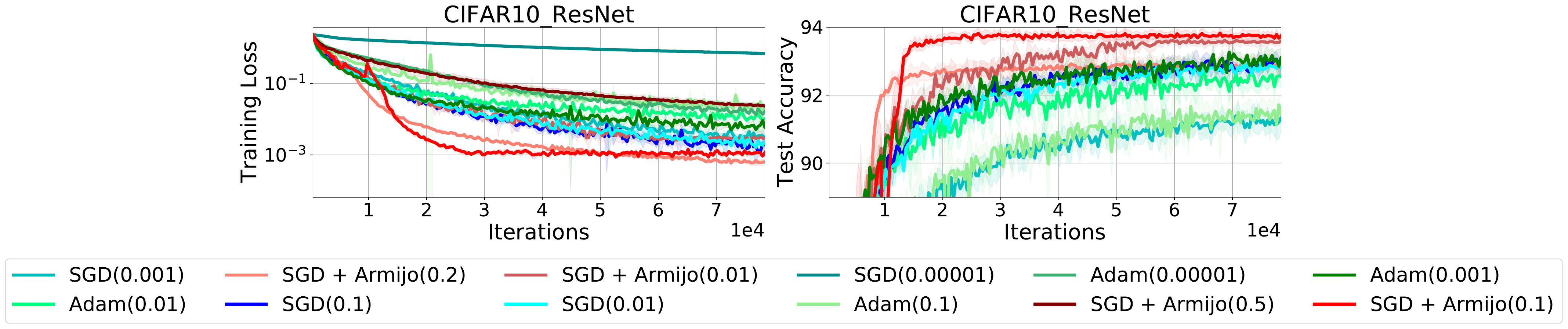}
\caption{Testing the robustness of Adam, SGD and SGD with Armijo line-search for training ResNet on CIFAR10. SGD is highly sensitive to it's fixed step-size; selecting too small a step-size results in very slow convergence. In contrast, SGD + Armijo has similar performance with $c = 0.1$ and $c = 0.01$ and all $c$ values obtain reasonable performance. We note that Adam is similarly robust to its initial learning-rate parameter. }
\label{fig:robustness_ablation}
\end{figure}
\begin{figure}[hbt!]
    \centering
    \includegraphics[width = 0.6\textwidth]{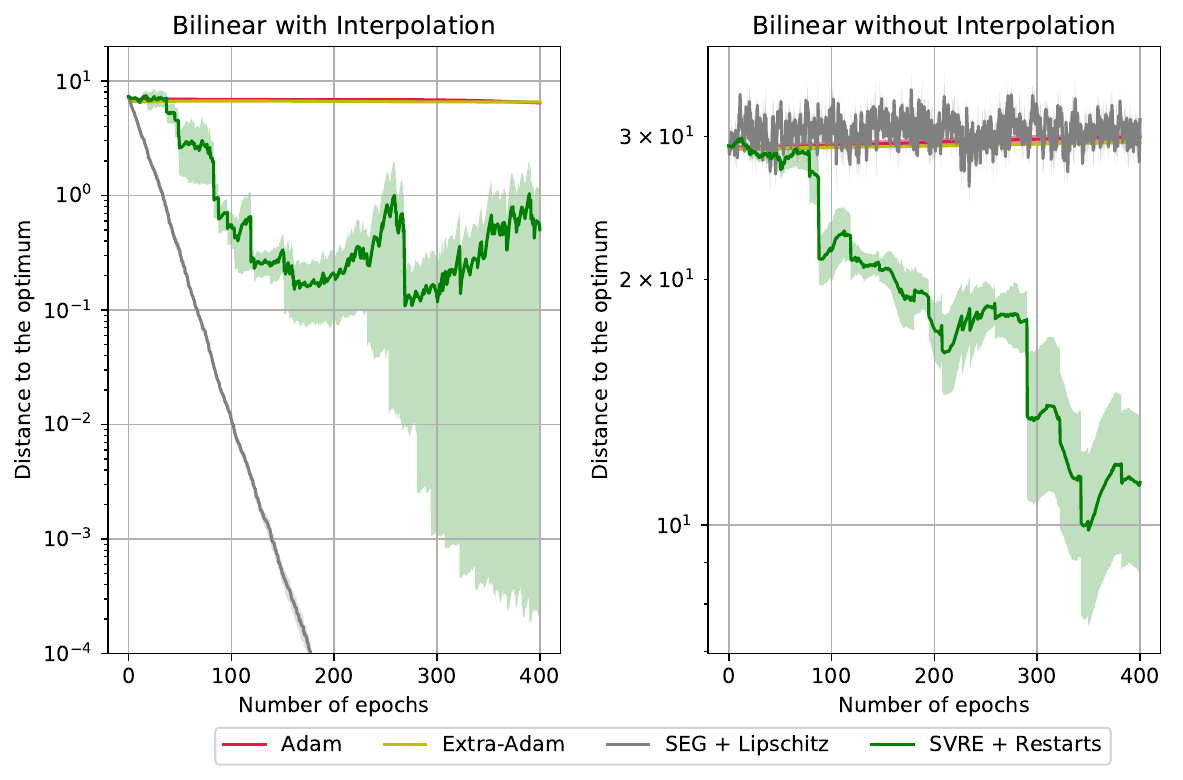}
\caption{Min-max optimization on synthetic bilinear example (left) with interpolation (right) without interpolation. SEG with Lipschitz line-search converges linearly when interpolation is satisfied -- in agreement with in Theorem \ref{thm:seg-bilin-ls} -- although it fails to converge when interpolation is violated. }
\label{fig:bilinear-game}
\end{figure}

\subsubsection{Evaluating robustness and computation}
\label{app:experiments-robustness-computation}
In this experiment, we compare the robustness and computational complexity between the three best performing methods across datasets: Adam, constant step-size and SGD with Armijo line-search. For both Adam and constant step-size SGD, we vary the step-size in the $[10^{-1}, 10^{-5}]$ range; whereas for the SGD with line-search, we vary the parameter $c$ in the $[0.1, 0.5]$ range and vary $\eta_{max} \in [1,10^3]$ range. We observe that although the performance of constant step-size SGD is sensitive to its step-size; SGD with Armijo line-search is robust with respect to the $c$ parameter. Similarly, we find that Adam is quite robust with respect to its initial learning rate. 

\subsection{Min-max optimization for bilinear games}
\label{app:experiments-games}
Chavdarova et al.~\cite{chavdarova2019reducing} propose a challenging stochastic bilinear game as follows:
\begin{equation}\notag
    \min _{\boldsymbol{\theta} \in \mathbb{R}^{d}} \max _{\boldsymbol{\varphi} \in \mathbb{R}^{d}} \frac{1}{n} \sum_{i=1}^{n}\left(\boldsymbol{\theta}^{\top} \boldsymbol{b}_{i}+\boldsymbol{\theta}^{\top} \boldsymbol{A}_{i} \boldsymbol{\varphi}+\boldsymbol{c}_{i}^{\top} \boldsymbol{\varphi}\right),
    \;\;
    \left[\boldsymbol{A}_{i}\right]_{k l}=\delta_{k l i}\,,\,\left[\boldsymbol{b}_{i}\right]_{k},\left[\boldsymbol{c}_{i}\right]_{k} \sim \mathcal{N}(0,\tfrac{1}{d}), 1 \leq k, l \leq d
\end{equation}
Standard methods such as stochastic extragradient fail to converge on this example. We compare Adam, ExtraAdam~\cite{gidel2018variational}, SEG with backtracking line-search using Equation~\ref{eq:lip-ls} with $c = 1/\sqrt{2}$ and $p$-SVRE~\cite{chavdarova2019reducing}. The latter combines restart, extrapolation and variance reduction for finite sum. It exhibits linear convergence rate but requires the tuning of the restart parameter $p$ and do not have any convergence guarantees on such bilinear problem. ExtraAdam~\cite{gidel2018variational} combines extrapolation and Adam has good performances on GANs although it fails to converge on this simple stochastic bilinear example.  

In our synthetic experiment, we consider two variants of this bilinear game; one where interpolation condition is satisfied, and the other when it is not. As predicted by the theory, SEG + Lipschitz results in linear convergence where interpolation is satisfied and does not converge to the solution when it is not. When interpolation is satisfied, empirical convergence rate is faster than SVRE, the best variance reduced method. Note that SVRE does well even in the absence of interpolation, and the both variants of Adam fail to converge on either example. 

\begin{figure}
    \centering
    \includegraphics[width = \textwidth]{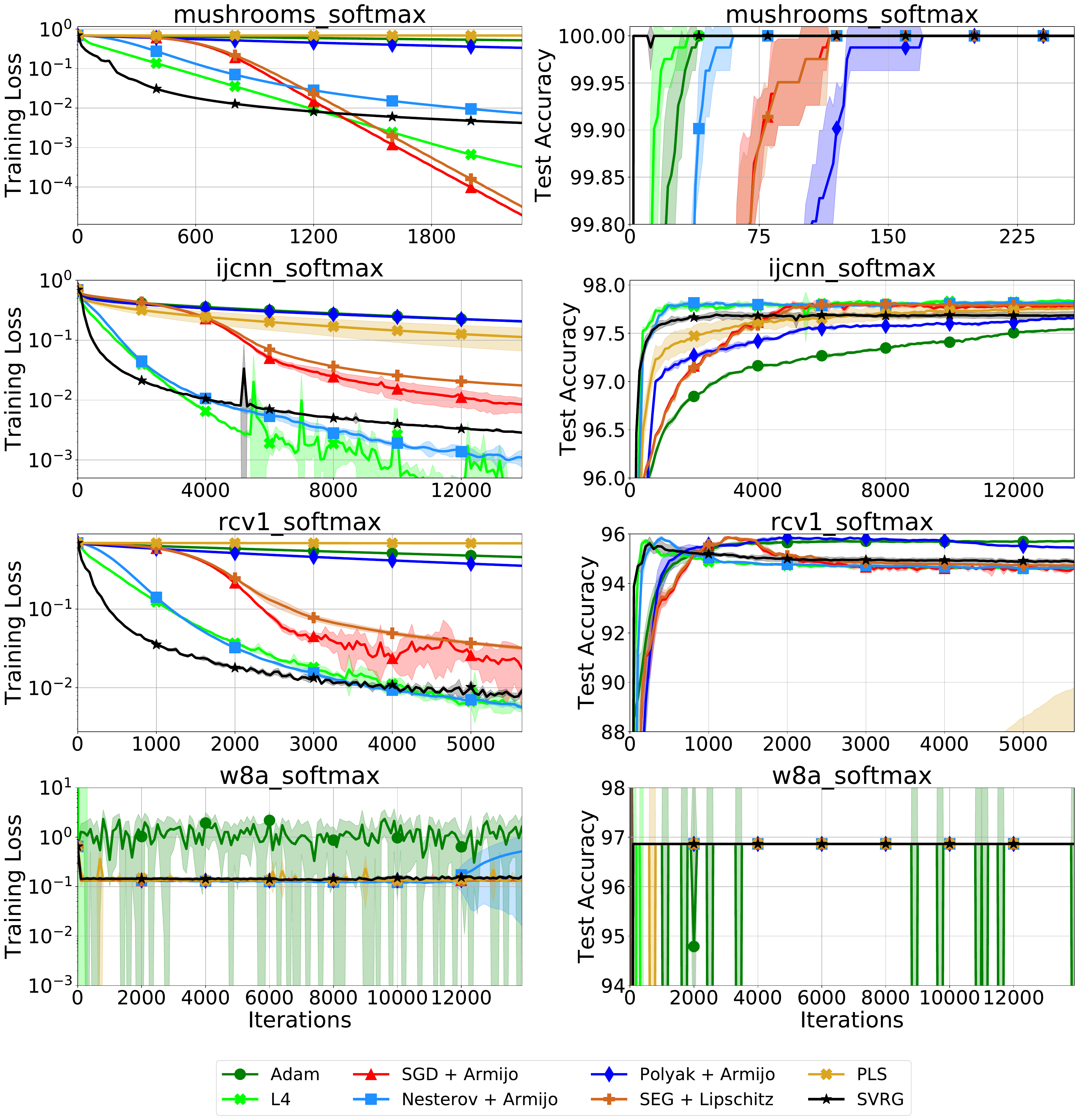}
    \caption{Binary classification using a softmax loss and RBF kernels on the mushrooms, ijcnn, rcv1, and w8a datasets. \textit{Only} the mushrooms dataset satisfies interpolation with the selected kernel bandwidths. We compare against L4 Mom in addition to the other baseline methods; L4 Mom converges quickly on all datasets, but is unstable on ijcnn. Note that w8a dataset is particularly challenging for Adam, which shows large, periodic drops test accuracy. Our line-search methods quickly and stably converge to the global minimum despite the ill-conditioning. }
    \label{fig:additional_kernel_results}
\end{figure}
\begin{figure}
    \centering
    \includegraphics[width = 0.8\textwidth]{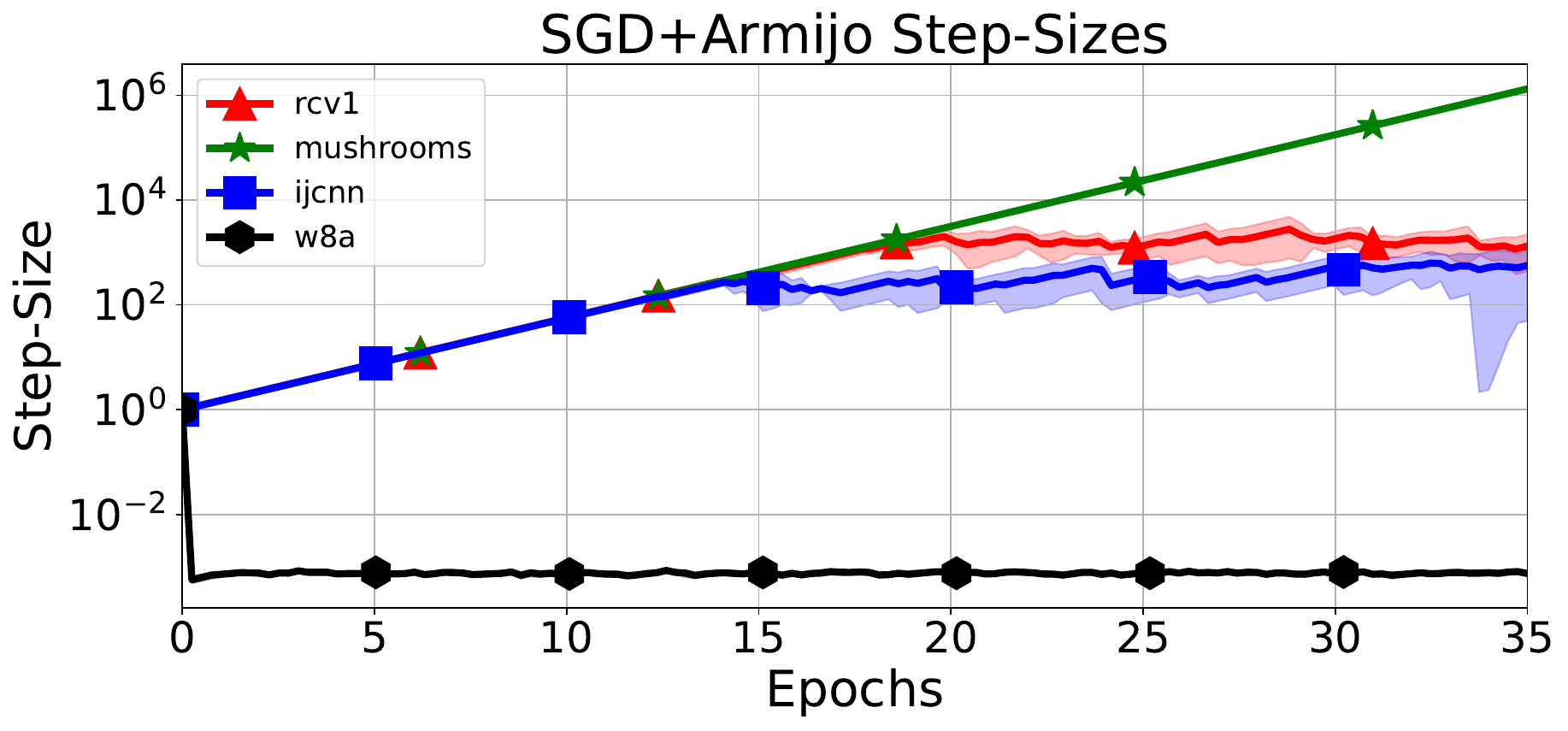}
    \caption{Variation in the step-sizes for SGD + Armijo for binary classification with softmax loss and RBF kernels on the mushrooms, ijcnn, rcv1 and w8a datasets. Recall that we use reset option 2 in Algorithm~\ref{alg:reset_options}. The step-size grows exponentially on mushrooms, which satisfies interpolation. In contrast, the step-sizes for rcv1, ijcnn, and w8a increase or decrease to match the smoothness of the problem.  }
    \label{fig:kernel_step_sizes}
\end{figure}

\begin{figure}
    \centering
    \includegraphics[width = \textwidth]{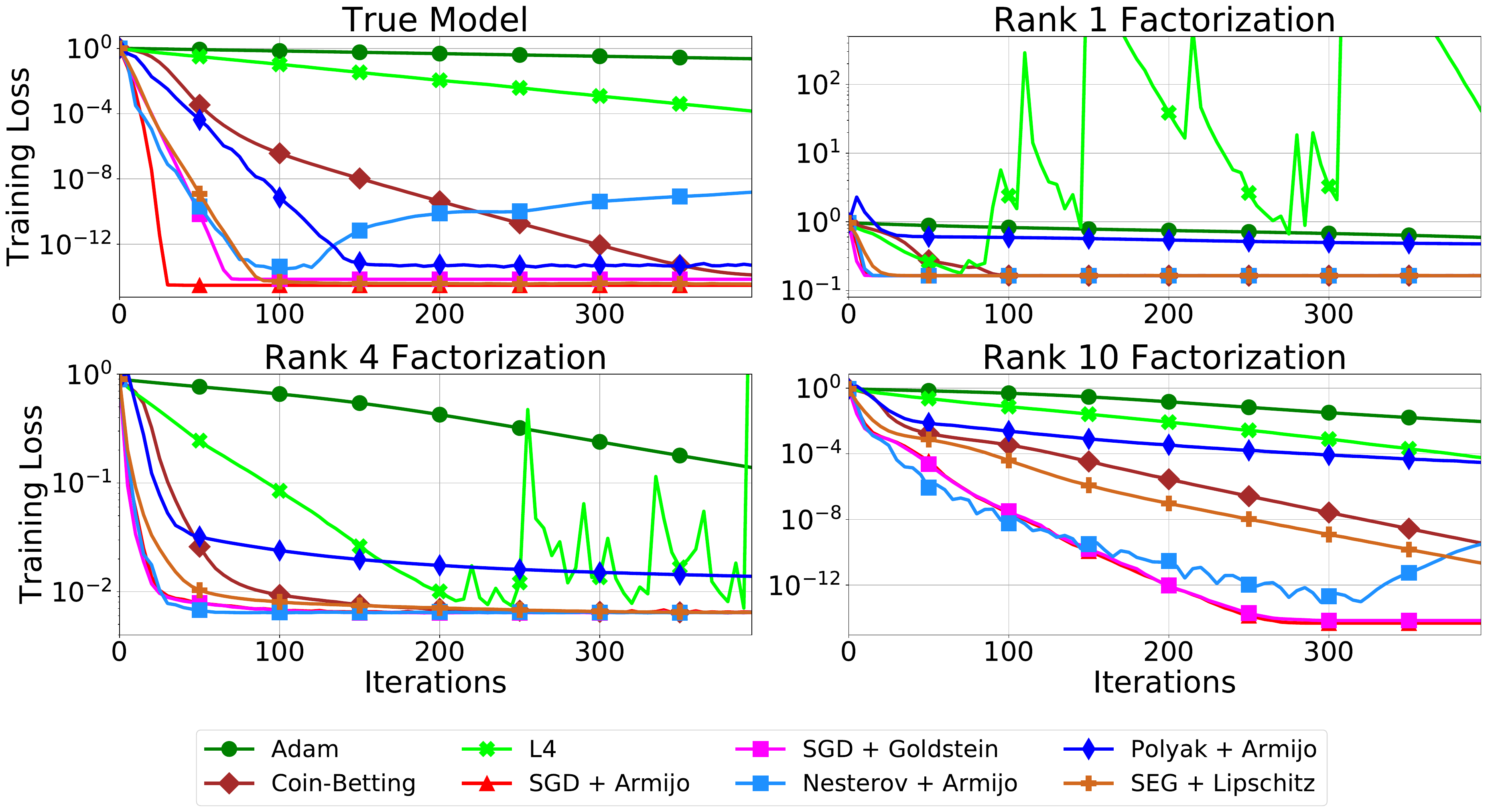}
    \caption{Matrix factorization using the true model and rank $1$, $4$, $10$ factorizations. Rank $1$ factorization is under-parametrized, while ranks $4$ and $10$ are over-parametrized. Only rank $10$ factorization and the true model satisfy interpolation. We include L4 Mom as additional baseline optimizer. L4 Mom is unstable and does not converge on rank $1$ and $4$ factorization, where interpolation is not satisfied; it exhibits slow convergence on the true model and rank 10 factorization. }
    \label{fig:matrix_fac_appendix}
\end{figure}

\subsection{Synthetic Experiment and Binary Classification with Kernels}
\label{app:additional-experiments-kernels}

We provide additional results for binary classification with RBF kernels on the rcv1 and w8a datasets. As before, we do not use regularization. We compare against L4 Mom~\cite{rolinek2018l4} as well as the original baselines introduced in Section~\ref{sec:experiments-setup}. For fairness, we reproduce the results for mushrooms and ijcnn with L4 Mom included. Figure~\ref{fig:additional_kernel_results} shows the training loss and test accuracy for the methods considered, while Figure~\ref{fig:matrix_fac_appendix} shows the evolution of step-sizes for SGD+Armijo on all four kernel datasets. 

The proposed line-search methods perform well on both rcv1 and w8a although neither dataset satisfies the interpolation condition with the selected kernel bandwidths. Furthermore, all of the proposed line-search methods converge quickly and remain at the global minimum for the w8a dataset, which is particularly ill-conditioned. In contrast, adaptive optimizers, such as Adam, fail to converge. Unlike other methods, PLS uses a separate mini-batch for each step of the line-search procedure. Accordingly, we plot the number of iterations \textit{accepted} by the probabilistic Wolfe conditions, which may correspond to several mini-batches of information. Despite this, PLS converges slowly. In practice, we observed that the initial step-size was accepted at most iterations of the PLS line-search. 

Figure~\ref{fig:matrix_fac_appendix} provides an additional comparison against L4 Mom on the synthetic matrix factorization problem from Section~\ref{sec:experiments-synthetic}. We observe that L4 Mom is unstable when used for \textit{stochastic} optimization of this problem, especially when interpolation is not satisfied. The method converges slowly when interpolation is satisfied.
\newpage
\section{Algorithm Pseudo-Code}
\label{app:algorithms}

\begin{figure*}[h]
		\begin{algorithm}[H]
			\caption{\texttt{SGD+Goldstein}($f$, $\x_0$, $\eta_{\max}$, $b$, $c$, $\beta$, $\gamma$)}
			\begin{algorithmic}[1]
			\State $\eta \gets \eta_{\max}$
			\For{$k = 0, \dots, T$}
			    \State $i_k \gets$ sample a minibatch of size $b$ with replacement
			    \While{1} 
			        \If{$\fk \left(\xk - \eta \gradk{\xk} \right) > \fk(\xk) -  c \cdot \eta \normsq{ \gradk{\x_k}  }$}         \Comment{check Equation \eqref{eq:c-ls}}
			            \State $\eta \gets \beta \cdot \eta$    
			        \ElsIf{$\fk \left(\xk - \eta \gradk{\xk} \right) < \fk(\xk) -  (1-c) \cdot \eta \normsq{ \gradk{\x_k}  }$} \Comment{check curvature condition}
			        \State $\eta \gets \min\left\{\gamma \cdot \eta, \eta_{\max} \right\}$    
			        \Else
			            \State break                                             \Comment{accept step-size $\eta$}
			        \EndIf
			    \EndWhile
			    \State $\xkk \gets \xk - \eta \gradk{\xk}$ \Comment{take SGD step with $\eta$}
			\EndFor\\
			\State \Return $\xkk$
			\end{algorithmic}
			\label{alg:SGD_Goldstein}
		\end{algorithm}
		\begin{algorithm}[H]
			\caption{\texttt{SEG+Lipschitz}($f$, $\x_0$, $\eta_{\max}$, $b$, $c$, $\beta$, $\gamma$, \texttt{opt})}
			\begin{algorithmic}[1]
			\State $\eta \gets \eta_{\max}$
			\For{$k = 0, \dots, T$}
			   \State $i_k \gets$ sample a minibatch of size $b$ with replacement
			    \State $\eta \gets$ \texttt{reset}$(\eta, \eta_{\max}, \gamma, b, k, \mbox{\texttt{opt}})$
			    \While{$\norm{\gradk{\xk - \eta \gradk{\xk}} - \gradk{\xk}} > c \; \norm{\gradk{\xk}} $} \Comment{check Equation \eqref{eq:lip-ls}}
			        \State $\eta \gets \beta \cdot \eta$    \Comment{backtrack by a multiple of $\beta$}
			    \EndWhile
			    \State $\xkh \gets \xk - \eta \gradk{\xk}$   \Comment{take SEG step with $\eta$}
			    \State $\xkk \gets \xk - \eta \gradk{\xkh}$ 
			\EndFor\\
			\State \Return $\xkk$
			\end{algorithmic}
			\label{alg:SEG_Armijo}
		\end{algorithm}
		\caption{Pseudo-code for two back-tracking line-searches used in our experiments. \texttt{SGD+Goldstein} implements SGD with the Goldstein line search described in Section \ref{sec:eta-tricks} and \texttt{SEG+Lipschitz} implements SEG with the Lipschitz line-search described in Section \ref{sec:seg}. For both line-searches, we use a simple back-tracking approach that multiplies the step-size by $\beta < 1$ when the line-search is not satisified. We implement the forward search for Goldstein line-search in similar manner and multiply the step-size by $\gamma > 1$. See Algorithm \ref{alg:reset_options} for the implementation of the \texttt{reset} procedure. }
		\label{fig:ls_pseudo_code}
\end{figure*}

\begin{figure*}[h]
		\begin{algorithm}[H]
			\caption{\texttt{Polyak+Armijo}($f$, $\x_0$, $\eta_{\max}$, $b$, $c$, $\beta$, $\gamma$, $\alpha$, \texttt{opt})}
			\begin{algorithmic}[1]
			\State $\eta \gets \eta_{\max}$
			\For{$k = 0, \dots, T$}
			    \State $i_k \gets$ sample a minibatch of size $b$ with replacement
			    \State $\eta \gets$ \texttt{reset}$(\eta, \eta_{\max}, \gamma, b, k, \mbox{\texttt{opt}})$
			    \While{$\fk \left(\xk - \eta \gradk{\xk} \right) > \fk(\xk) -  c \cdot \eta \normsq{ \gradk{\x_k}  }$} \Comment{check Equation \eqref{eq:c-ls}}
			        \State $\eta \gets \beta \cdot \eta$    \Comment{backtrack by a multiple of $\beta$}
			    \EndWhile
			    \State $\xkk \gets \xk - \eta \gradk{\xk} + \alpha (\xk - \x_{k-1})$ \Comment{take SGD step with $\eta$ and Polyak momentum}
			\EndFor\\
			\State \Return $\xkk$
			\end{algorithmic}
			\label{alg:Polyak_Armijo}
		\end{algorithm}
		\begin{algorithm}[H]
			\caption{\texttt{Nesterov+Armijo}($f$, $\x_0$, $\eta_{\max}$, $b$, $c$, $\beta$, $\gamma$, \texttt{opt})}
			\begin{algorithmic}[1]
			\State $\tau \gets 1$                           \Comment{bookkeeping for Nesterov acceleration}
			\State $\lambda \gets 1$
			\State $\lambda_{\text{prev}} \gets 0$\\
			\State $\eta \gets \eta_{\max}$
			\For{$k = 0, \dots, T$}
			    \State $i_k \gets$ sample a minibatch of size $b$ with replacement
			    \State $\eta \gets$ \texttt{reset}$(\eta, \eta_{\max}, \gamma, b, k, \mbox{\texttt{opt}})$
			    \While{$\fk \left(\xk - \eta \gradk{\xk} \right) > \fk(\xk) -  c \cdot \eta \normsq{ \gradk{\x_k}  }$} \Comment{check Equation \eqref{eq:c-ls}}
			        \State $\eta \gets \beta \cdot \eta$    \Comment{backtrack by a multiple of $\beta$}
			    \EndWhile
			    \State $\xkh \gets \xk - \eta \gradk{\xk}$
			    \State $\xkk \gets (1 - \tau) \cdot \xkh + \tau \cdot \xk$      \Comment{Nesterov accelerated update with $\eta$}\\
			    \State \texttt{temp} $\gets \lambda$                \Comment{bookkeeping for Nesterov acceleration}
			    \State $\lambda \gets \left(1 + \sqrt{1 + 4 \lambda_{\text{prev}}^2}\right) / 2$
			    \State $\lambda_{\text{prev}} \gets $ \texttt{temp}
			    \State $\tau \gets \left(1 - \lambda_{\text{prev}}\right) / \lambda$
			\EndFor\\
			\State \Return $\xkk$
			\end{algorithmic}
			\label{alg:Nesterov_Armijo}
		\end{algorithm}
		\caption{Pseudo-code for using Polyak momentum and Nesterov acceleration with our proposed line-search techniques. \texttt{Polyak+Armijo} implements SGD with Polyak momentum and Armijo line-search and \texttt{Nesterov+Armijo} implements SGD with Nesterov acceleration and Armijo line-search. Both methods are described in \ref{sec:acceleration}. See Algorithm \ref{alg:reset_options} for the implementation of the \texttt{reset} procedure.}
		\label{fig:accelerated_pseudo_code}
\end{figure*}
\end{document}